\DeclareRobustCommand{\abbrevcrefs}{%
\crefname{theorem}{Thm.}{Thms.}%
\crefname{example}{Ex.}{Exs.}%
\crefname{equation}{Eq.}{Eqs.}%
\crefname{appendix}{Appx.}{Appxs.}%
\crefname{definition}{Def.}{Defs.}%
\crefname{section}{Sec.}{Secs.}%
\crefname{lemma}{Lem.}{Lemms.}%
\crefname{proposition}{Prop.}{Props.}%
}
\DeclareRobustCommand{\Cshref}[1]{{\abbrevcrefs\Cref{#1}}}
\DeclareRobustCommand{\cshref}[1]{{\abbrevcrefs\cref{#1}}}
\newtheorem{theorem}{Theorem}
\newtheorem{lemma}{Lemma}
\newtheorem{definition}{Definition}
\newtheorem{corollary}{Corollary}
\newtheorem{claim}{Claim}
\newtheorem{proposition}{Proposition}
\newtheorem{remark}{Remark}
\newcommand{\veryshortref}[2]{(\hyperref[{#2}]{{#1}\ref*{#2}})}
\newenvironment{itemizecompact}{\begin{itemize}[leftmargin=1em]}{\end{itemize}}
\def\eqref#1{Eq.~(\ref{#1})}
\def\1{\bm{1}}
\DeclareMathAlphabet{\mathsfit}{\encodingdefault}{\sfdefault}{m}{sl}
\SetMathAlphabet{\mathsfit}{bold}{\encodingdefault}{\sfdefault}{bx}{n}
\DeclareMathOperator*{\argmax}{argmax}
\DeclareMathOperator*{\argmin}{argmin}
\newcommand{\FullDerivative}[2]{\frac{\mathrm{d}{#1}}{\mathrm{d}{#2}}}
\newcommand{\Norm}[1]{\left\lVert{#1}\right\rVert}
\newcommand{\Size}[1]{{\left|{#1}\right|}}
\newcommand{\Abs}[1]{{\left|{#1}\right|}}
\newcommand{\Set}[1]{{\left\{{#1}\right\}}}
\newcommand\expect[2]{\mathbb{E}_{#1}{\left[ {#2} \right]}}
\newcommand\prob[2]{\mathbb{P}_{#1}{\left[ {#2} \right]}}
\newcommand{\DistOver}[1]{{\Delta\left( {#1} \right)}}
\newcommand{\Survival}[2]{\mathbb{S}_{{#1}}({#2})}
\newcommand{\Indicator}[1]{\mathds{1}\left[{#1}\right]}
\newcommand{\X}{{\cal{X}}}
\newcommand{\Y}{{\cal{Y}}}
\newcommand{\Reals}{\mathbb{R}}
\newcommand{\NonNegativeReals}{\Reals_{\ge 0}}
\newcommand{\Naturals}{\mathbb{N}}
\newcommand{\Features}{{\mathcal{X}}}
\newcommand{\Labels}{{\mathcal{Y}}}
\newcommand{\Hypothesis}{\mathcal{H}}
\newcommand{\acc}{\mathrm{acc}}
\newcommand{\Distribution}{D}
\newcommand{\Learner}{\mathrm{Alg}}
\newcommand{\ICConstraintName}{\text{(IC)}}
\newcommand{\BudgetConstraintName}{\text{(BUDGET)}}
\newcommand{\MinWageConstraintName}{\text{(MINWAGE)}}
\newcommand{\TV}[2]{\Norm{{#1}-{#2}}_\mathrm{TV}}
\newcommand{\Actions}{\mathcal{A}}
\newcommand{\Outcomes}{\Set{0,\dots,m}} 
\newcommand{\OutcomeSpace}{\Omega} 
\newcommand{\Binomial}{\mathrm{Binomial}} 
\newcommand{\MNIST}{{MNIST-784}}
\newcommand{\ConcavityAssumption}{{Concave-MLRP}}
\newcommand{\ConcavityAssumptionShort}{{C-MLRP}}
\newcommand{\AlgName}{{Single binding action}}
\newcommand{\algName}{{single binding action}}
\newcommand{\AlgNameShort}{{SBA}}
\newcommand{\BinomialConcaveSurvivalExperimentBeta}{1.89}
\newcommand{\BinomialConcaveSurvivalExperimentGamma}{0.33}
\newcommand{\BinomialConcaveSurvivalExperimentInflectionPoint}{2022}
\newcommand{\BinomialConcaveSurvivalExperimentInflectionPointBound}{3957}
\newcommand{\BinomialConcaveSurvivalExperimentM}{30}
\newcommand{\EmpiricalRobustnessExperimentQualitativeComparisonM}{25}
\newcommand{\EmpiricalRobustnessExperimentQualitativeComparisonTargetN}{16384}
\newcommand{\EmpiricalRobustnessExperimentSelectedM}{10}
\newcommand{\FullInformationExperimentCrossingNHigh}{23170}
\newcommand{\FullInformationExperimentCrossingNLow}{362}
\newcommand{\FullInformationExperimentCrossingPointHigh}{0.94}
\newcommand{\FullInformationExperimentCrossingPointLow}{0.74}
\newcommand{\FullInformationExperimentCrossingTwolognHigh}{29}
\newcommand{\FullInformationExperimentCrossingTwolognLow}{17}
\newcommand{\FullInformationExperimentMTargetAcc}{0.85}
\newcommand{\MinPayExperimentM}{15}
\newcommand{\MinPayExperimentPHigh}{0.8}
\newcommand{\MinPayExperimentPLow}{0.5}
\newcommand{\PartialInformationExperimentDataMultiplierArgmax}{190}
\newcommand{\DelegatedClassificationCodeURL}{\url{https://github.com/edensaig/delegated-classification}}
\title{Delegated Classification}
\newcommand{\technionauthor}[2]{
{#1} \\
Technion -- Israel Institute of Technology\\
Haifa, Israel \\
\texttt{{#2}@cs.technion.ac.il}
}
\author{%
    \technionauthor{Eden Saig,\quad Inbal Talgam-Cohen,\quad Nir Rosenfeld}{\{edens,italgam,nirr\}}
}
\begin{document}

\maketitle

\begin{abstract}

When machine learning is outsourced to a rational agent, 
conflicts of interest might arise and 
severely impact 
predictive performance.
In this work,
we propose a theoretical framework for incentive-aware delegation of machine learning tasks. 
We model delegation as a principal-agent game, 
in which accurate learning 
can be incentivized by the principal using performance-based contracts.
Adapting the economic theory of 
contract design to this setting,
we define \emph{budget-optimal} 
contracts and prove they take a simple threshold form under reasonable assumptions. 
In the binary-action case, the optimality of such contracts is shown to be 
equivalent to the classic Neyman-Pearson lemma, establishing a
formal connection between contract design and statistical hypothesis testing. 
Empirically, we demonstrate that 
budget-optimal
contracts can be constructed using small-scale data,
leveraging recent advances in the study of learning curves and scaling laws.
Performance and economic outcomes are evaluated using synthetic and real-world classification tasks.\looseness=-1

\end{abstract}

\section{Introduction}
\label{section:intro}

The acclaimed success of machine learning at effectively solving difficult prediction tasks across diverse problem domains has made it highly appealing for firms, institutions, and individual practitioners.
But machine learning has also become increasingly complex, cumbersome, and difficult to operate---and not all those who seek to learn have access to the necessary expertise, infrastructure, and designated resources required for learning effectively.
This gap has created a new market for \emph{outsourced machine learning}, in which a client interested in obtaining an accurate predictive model can hire the services of a specialized provider which, for a price, trains the model on their behalf. 
Consider for example a hospital purchasing a classifier for deciding between hospitalization and outpatient treatment when triaging patients. The provider
invests in curating, cleaning and annotating training data, and delivers a trained model in return to payment from the hospital. 

Having a budget to expend on outsourced learning \cite{ZhaoLZ22},
we model the client as aiming to obtain the best possible predictive model.
At first glance, it is tempting to assume that the optimal strategy is simply to pay the provider the maximal feasible amount---and hope to get a high-end model in return.
After all, if the client were to spend the budget directly on learning, investing the maximal available sum would yield the best possible results. 
But this neglects to account for the \emph{incentives} of the provider, who is interested in maximizing profit.
Since the actions of the provider remain private,
it is in his best interest to (secretly) minimize efforts,
which in turn can result in his delivering a suboptimally-trained model.
In our example, the provider can cut costs by annotating only a subset of the data, 
obtaining cheaper low-quality annotations,
or neglecting to meticulously remove all outliers.

Outsourced learning is hence susceptible to \emph{moral hazard}, an economic situation which might occur under information asymmetry,
and to the detriment of the client.
Motivated by this observation,
in this paper we initiate the study of \emph{delegated learning},
and aim to explore the economic, algorithmic, and statistical implications that occur when learning is delegated to a specialized provider.
Our key novelty is in instantiating delegated learning as a problem of \emph{optimal contract design} \cite{bolton2004contract,martimort2009theory,Nobel16,Salanie17}.
Broadly, contracts are an important monetary device that allows the client to establish a payment scheme which, if properly set, serves to align incentives and guarantee that both parties are well-off.
Our main challenge is to design effective contracts specialized to the task of delegated learning on a budget.\looseness=-1

Towards this, we begin with a conventional supervised classification setup,
and impose economic structure by assuming that acquiring training examples is costly.
We then conceptually ``split'' the conventional self-sufficient learner into two rational entities:
a \emph{principal}, who controls the budget and is interested in maximizing predictive accuracy;
and an \emph{agent}, who controls learning (in particular the training set) and is interested in maximizing profit.
This allows us to model principal-agent relations as a Stackelberg game,
in which the principal commits to a \emph{contract} $t$, determining \emph{a priori} the amount to be paid for every possible (stochastic) level of obtained accuracy.
The agent best-responds to the contract by choosing the profit-maximizing number of samples $n$, and training the predictive model.\looseness=-1

Under this setting, we study the algorithmic problem of designing an optimal contract. As is standard in economic analysis, we begin with the assumption that the principal has full information on the distribution of possible outcomes 
for each of the agent's possible actions.
In our setting, actions correspond to the number of training samples,
and outcomes to the empirical classifier accuracy;
thus, the main object of interest for contract design in delegated learning settings is the 
\emph{learning curve}, which describes the (stochastic) performance of learning per sample size.
Under certain plausible conditions on the learning curve, 
namely MLRP and a certain notion of concavity,
our main result here is that optimal contracts are \emph{simple},
and in particular, take on the form of simple threshold functions.
Simple contracts are appealing because they are straightforward to understand and communicate;
in our setting, they are also easy to compute,
and we give a closed-form solution for the optimal threshold contract.
Providing an interpretation of the closed-form solution, our findings establish a new connection between contracts and the renowned Neymon-Pearson lemma \citep{rigollet2015high}, which we consider as one of our central theoretical contributions.

We then switch gears
and turn to empirically studying the construction of contracts from partial information. 
In particular, we consider a setting where the principal can only estimate the learning curve from small available data (e.g., by bootstrapping on small $n$ and extrapolating).
Using the recent LCDB dataset of learning curves \citep{mohr2022lcdb},
we show that threshold contracts generally perform well on estimated curves despite the inherent uncertainty.
We also explore the role of different parameters of our setup, consider various tradeoffs in curve-fitting and contract design,
and discuss limitations by pointing out certain failure modes to which contracts may be susceptible.

Taken together, our results shed light on why and how simple contracts for delegated learning work to correctly balance between the incentives of both delegator and delegatee in outsourced learning.

\begin{figure}
    \centering
    \includegraphics[width=\columnwidth]{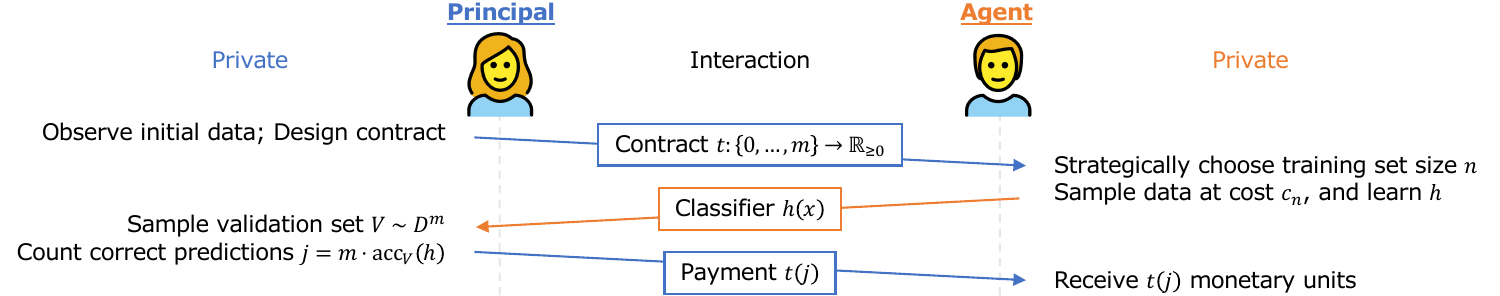}
    \caption{
    Delegated classification interaction sequence.
    The principal examines initial information, and designs a contract $t:\Outcomes\to\NonNegativeReals$. 
    Having observed $t$, the agent strategically selects a dataset size $n$ that will maximize his expected utility. He samples a training set $S\sim D^n$, incurs cost $c_n$, then trains the classifier $h\in\Hypothesis$ and sends it to the principal. Upon receiving $h$, the principal evaluates its accuracy on a random validation set $V\sim D^m$, and pays the agent according to the contract $t$.\looseness=-1
    }
    \label{fig:interaction_model_schematic_diagram}
\end{figure}

\subsection{Related work}

Previous works have considered delegation of ML-related tasks that differ from our task of training a classifier: labeling of data points in \cite{CaiDP15}, gathering information in \citep{chen2022selling}, and computing a costly high-dimensional function in \citep{azar2018computational}.
In the delegated task of \citep{goldwasser2021interactive}, the agent provides a classifier and the principal verifies its near-optimality within $\mathcal{H}$. A fundamental difference is that their agent is assumed to be adversarial rather than rational, and so interactive proofs are used instead of economic incentives.\looseness=-1

The Neyman-Pearson lemma has recently been connected to economic design by \cite{bates2022principal} in the context of adverse selection rather than moral hazard. The agent has a hidden type (e.g., whether a new drug is effective), 
and the optimal menu to offer this agent is designed based on the theory of \mbox{e-values}. When the hidden type has binary support, 
the method is equivalent to Neyman-Pearson. 
A ``moral'' link between the design of contracts (for a non-budgeted principal) and statistical inference (in particular likelihood ratios) was observed already in \cite{GrossmanHart83}, but no connection was made to the power of hypothesis tests. 
Other intersections of ML and contracts that do not involve delegation of learning-related tasks include strategic classification~\cite[e.g.,][]{KleinbergR19,KleinbergR20,AlonDPTT20} and online learning of optimal contracts~\cite{HoSV16,CohenDK22,ZhuEtAl22} 
Contract settings with a binary action and/or outcome space have been studied in \cite[e.g.,][]{DuettingEFK21,BabaioffFNW12,DuettingEFK23}.
Not to be confused with our notion of delegation, there is a growing computational literature on delegation without monetary payments~\cite[e.g.,][]{kleinberg2018delegated}.

To extrapolate from partial data, our work builds upon recent advancements in the study of learning curves, which characterize the expected generalization of learning as a function of dataset size and other exogenous factors \citep{viering2022shape}. There is growing empirical evidence that performance of modern neural networks can be predicted using simple scaling laws \citep[e.g.,][]{kaplan2020scaling,mahmood2022much, xia2022training, ghorbani2021scaling, alabdulmohsin2022revisiting, rosenfeld2019constructive, hoiem2021learning}, and theoretical results that back these findings in simplified settings \citep{bousquet2021theory, bousquet2022monotone, hutter2021learning, bisla2021theoretical}.

Perhaps closest to ours is the concurrent work
\citep{ananthakrishnan2023delegating}, 
which analyzes a similar setting under
different assumptions: 
Rather than assuming budget constraints, they assume that the principal's utility is linear in accuracy and payout, and the learning curve takes a specific functional form. 
Based on these assumptions, they derive approximately optimal linear contracts which are robust to adverse selection. In contrast, we obtain globally optimal simple contracts based on hypothesis testing, and present
data-driven 
methods to handle partial information. 
Together, the two studies demonstrate the important role of simple contracts in the 
growing
ecosystem of machine learning delegation.

\section{Problem Setup}
\label{sec:problem_setup}

The core of our setting is based on a standard supervised classification task.
Let $x \in \X$ be features and $y \in \Y$ be labels,
and assume there is some unknown joint distribution $D$ over $(x,y)$ pairs.
Given a sample set $S = \{(x_i,y_i)\}_{i=1}^n \sim D^n$,
the goal in learning is to use $S$ to find a classifier $h:\Features\to\Labels$ from a class $\Hypothesis$ that maximizes expected accuracy,
$\acc_D(h) = \prob{(x,y)\sim\Distribution}{h(x) = y}$.
Because the underlying distribution $D$ is unknown,
expected performance is estimated by the empirical average on 
an additional held-out validation set $V \sim D^m$ of size $m$, as
$\acc_V(h) = \tfrac{1}{m} \sum_{i=1}^m \Indicator{h(x_i) = y_i}$,
which is a consistent and unbiased estimator of $\acc_D(h)$.
We will assume throughout that both the learning algorithm and the validation set size $m$ are known and fixed.

\paragraph{Learning on a budget.}
We will be interested in studying learning when certain resources are limited or costly.
Our main focus will be on the setting where the main cost of learning is the number of labeled examples $n$,
but we note that our approach can in principle extend to other forms of learning `effort'.\footnote{For example,
\citep{kaplan2020scaling} argue that not only training-set size, but also computation time and model size
are related to accuracy through scaling laws,
which have tight connections to the assumptions we discuss in Sec.~\ref{sec:budget-opt_contracts}.}
We assume the learner has a monetary budget $B$ to spend on samples, and is interested in maximizing accuracy under budget constraints.
Let $c_n \ge 0$ be the cost of $n$ samples (assumed to be 
increasing 
in $n$),
then the learner aims to solve:
\begin{equation}
\label{eq:self-suff-problem}
n^*=\argmax\nolimits_{n}
\mathbb{E}_{h_{n}}[\acc_D(h_n)]
\quad \text{s.t.} \quad
c_n \le B
\end{equation}
where $h_n$ is a classifier learned from a random dataset of size $|S|=n$.
Note that $h_n$ is a random variable with distribution depending on $n$. We denote the out-of-sample accuracy of $h_n$ by $\alpha_n = \acc_D(h_n)$.
When the learner is a self-sufficient entity, and when
the expected $\alpha_n$ improves monotonically in $n$, then
$n^*$ in 
\eqref{eq:self-suff-problem} 
in naturally
the largest affordable $n$ (see Sec.~\ref{sec:budget-opt_contracts}).
However, as we will see, when learning is \emph{delegated}---this
seemingly straightforward observation can break.

\paragraph{Delegation.} 
We model the delegation of learning as a conceptual
partition
of the learner
into two distinct entities:
an \emph{agent}, who controls learning;
and a \emph{principal}, who controls
the validation process.
The principal outsources the learning task to the agent,
who in turn uses the training set $S$ to train the classifier $h$; 
once delivered, the principal validates the performance of $h$  using the validation set $V$. 
Whereas the classifier's accuracy benefits the principal alone,
the cost of learning (i.e., the cost of acquiring $S$) is born exclusively by the agent.
Importantly, the amount of invested effort remains private to the agent;
in our example, the principal cannot know how many examples  received quality labeling.
Because the agent seeks to maximize profit,
the principal can use her budget as a source of monetary payment
to incentivize the agent to invest in larger $|S|=n$.
Intuitively, one could expect larger payments to entail larger $n$, and therefore higher-accuracy $h$.
However, as we will see,
this is not always the case, and
careful planning is required in order to fully utilize a given budget.

\begin{figure}[t!]
    \centering
    \includegraphics[width=\columnwidth]{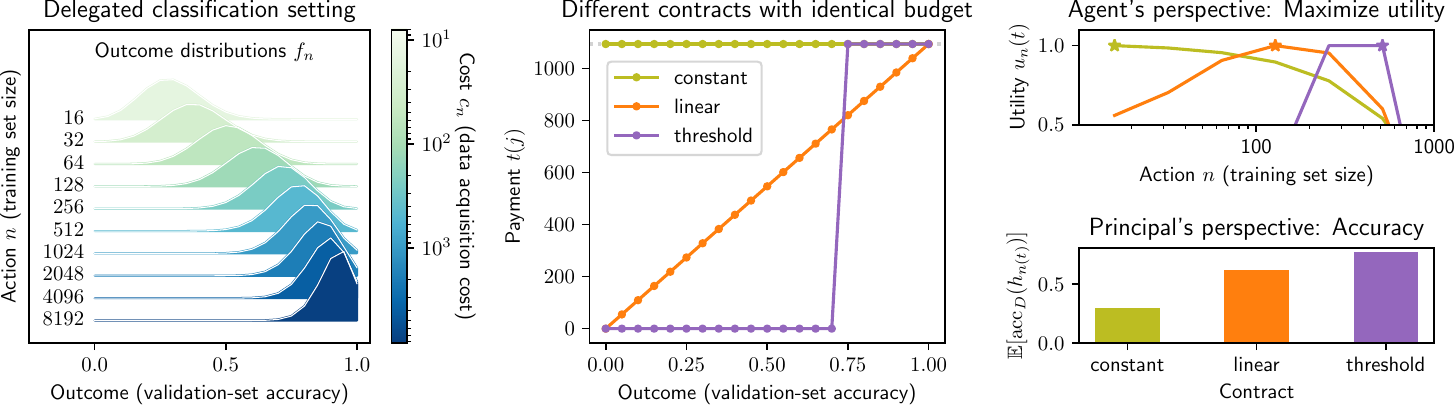}
    \caption{
    A delegated classification setting (data from Sec.~\ref{sec:empirical}).
    \textbf{(Left)}
    Each costly action taken by the agent (training set size $n$) induces a distribution $f_n$ of possible outcomes
    (classifier accuracy).
    The principal seeks to construct a contract $t$ that incentivizes a profit-maximizing agent to take actions entailing favorable outcomes.
    Note the $f_n$ exhibit increasing expectation, but decreasing variance, in $n$. 
    \textbf{(Center)}
    Three contracts for a given budget $B$, mapping outcomes to payments.
    \textbf{(Top-right)}
    Agent's utilities $u_n(t)$ and best responses $n(t)$ (stars)
    for each contract $t$.
    \textbf{(Bottom-right)}
    Expected accuracies for principal resulting from each contract; here the threshold contract is optimal (see Sec.~\ref{sec:budget-opt_contracts}).\looseness=-1
    }
    \label{fig:contract_types_schematic_comparison}
\end{figure}

\subsection{Delegation as contract design}
As the training set remains private to the agent, there is an information gap between the two parties.
This creates a conflict of interest for the agent known as \emph{moral hazard} \citep{bolton2004contract},
in which the agent may be tempted to invest sub-par effort,
while claiming that efforts were in fact his honest best.
In economics, the 
celebrated solution to moral hazard are \emph{contracts} \citep{Holmstrom79}:
pay-per-performance rules that a-priori determine future payments for every possible outcome,
which we formally describe next.

\paragraph{Contract design.}
A contract setting is defined by a set of actions $\Actions=\Set{a_1,\dots,a_N}$ that can be taken by the agent,
and a set of possible outcomes $j \in \Outcomes$.
Each action $a_i$ is associated with a cost $c_i$, 
and w.l.o.g. we assume $c_1\le \dots \le c_N$
so that actions correspond to increasing \emph{effort levels}.
The agent's choice to perform action $a\in\Actions$
yields a random outcome $j\sim f_a$ for the principal, where $f_a$ 
describes a distribution over the possible outcomes
associated with action $a$.
The principal, who does not observe the agent's chosen action,
can incentivize the agent through a \emph{contract},
$t:\Outcomes\to\NonNegativeReals$, according to which she pays the agent $t(j) \ge 0$ when the materialized outcome is $j$.
Given contract $t$, let $u_a(t)$ be the agent's expected \emph{utility} from taking action $a \in \Actions$ at cost $c_a$ (via stochastic outcomes $j\sim f_a$),
and let $a(t)$ be the agent's \emph{best response}---an action that maximizes his expected utility
(and following standard tie-breaking assumptions as in
\cite{DuttingRT19}).
Then:
\begin{align}
u_a(t) = \expect{j \sim f_a}{t(j)} - c_a, \qquad \qquad 
a(t) \in \mathrm{argmax}_{a \in \Actions} u_a(t). \label{eq:agent_rational_choice}
\end{align}
Every action $a^*$ that is the best response $a^*=a(t)$ to some contract $t$ is called \emph{implementable}.
In economic terms, the principal and agent are playing a \emph{Stackelberg game}, in which the principal commits to a contract $t$ and the agent best-responds by choosing action $a(t)$ that maximizes his expected utility $u_a(t)$. 
The goal of the principal is to design a contract $t$ which incentivizes the agent to take best-response actions yielding favorable outcomes for the principal.

\paragraph{Contracts for delegated learning.}
We propose to formulate delegated learning as a problem of optimal contract design, instantiated as follows.
First, we relate agent actions $a$ with the number of samples $n$, and denote $\Actions=\{n_1,\dots,n_N\}$ as the possible sizes of $S$ that the learning agent can work with.
The cost of acquiring samples naturally maps as $c_a=c_n$, and agent's best response is $a(t)=n(t)$.
Next, we associate outcomes $j$ with accuracy for the principal by defining $j$ as the number of validation samples (out of the possible $m$) on which $h$ is correct;
note this implies $\acc_V(h)=j/m$,
and we will therefore use $j$ and $\acc_V(h)$ as `outcomes' interchangeably.
Finally, for an action $n$, we set $f_n$ to be the distribution over possible accuracies obtained when learning with $n$ samples,
namely
$f_n(j) = \prob{h_n,V}{\acc_V(h_n) = j/m} \,\, \forall j$.
We will also use the matrix form $F_{nj} = f_n(j)$, where
$F \in [0,1]^{N \times (m+1)}$. 
Note 
that $F$ admits
two sources of variation:
(i) \emph{a-priori} variation in $h_n$ due to stochasticity in 
$S \sim D^n$;
and (ii) \emph{a-posteriori} variation in $j$ for any fixed $h_n$ due to stochasticity in 
$V \sim D^m$.
When $h_n$ is fixed, the outcome distribution admits a simple binomial form, 
namely $j \sim \Binomial(m,\alpha_n)$.
Empirically, we observe this to be the dominant component.

\subsection{Delegation as an optimization problem}

\paragraph{Budget-optimal contracts.}
Recall that the principal seeks to maximize accuracy under budget constraints (\eqref{eq:self-suff-problem}).
Once learning is delegated to an agent
and framed as a contract design problem, the principal's objective becomes:\looseness=-1
\begin{equation}
\label{eq:principal-problem}
t^*=\argmax\nolimits_{t\in[0,B]^m}
\mathbb{E}_{h_{n(t)}}[\acc_D(h_{n(t)})]
\end{equation}
Contract $t^*$ is chosen to incentivize the agent to invest effort $n(t)$ 
(via \cref{eq:agent_rational_choice})
such that the training of $h_{n(t)}$ yields high dividends for the principal in terms of expected accuracy.
We will refer to $t^*$ as a \emph{budget-optimal contract},
and to the general task of finding $t^*$ as \emph{budget-optimal contract design}.%

\paragraph{Information structure.}
Delegated learning settings have actions $n$ and costs $c_n$ known to both sides, and outcome distribution $F_{nj}$ known to the agent.
For the principal, we explore varying levels of knowledge:
In Sec.~\ref{sec:budget-opt_contracts}, we assume
(as in the classic contract design literature) 
that the principal has full information of $F$
(i.e., knows the learning curve),
and focus on characterizing the optimal contract.
In Sec.~\ref{sec:empirical} we relax this assumption,
and explore a partial-information setting in which the principal
relies instead on an empirically-estimated curves $\hat{F}$.

\section{Budget-Optimal Contract Design} \label{sec:budget-opt_contracts}

\subsection{The problem}

\paragraph{Why agents cut corners.}
The conceptual challenge in designing contracts lies in that agents cannot reliably 
report \emph{what} they did. 
For example, consider a principal who, after delegation, received a classifier attaining 0.74 (validation) accuracy.
Should she be happy?
The crux is that there are two ways that this could have happened:
(i) the agent invested high effort in learning (large $n$),
but received an uninformative $S$ by chance,
and delivered a low-quality $h$ as a result;
and (ii) the agent invested low effort (small $n$).
Since the agent's actions are private, and because outcomes are
stochastic, the principal can never know for certain which is the true underlying cause.
In other words, a `lazy' (or rather strategic) agent can hide behind the uncertainty that is inherent in learning outcomes.\footnote{The agent can also hide in the uncertainty due to $V$, particularly when $m$ is small; see experiment in Sec.~\ref{sec:exp-partial_info}.}\looseness=-1

\paragraph{Contract types.}
To overcome this informational gap,
the principal must devise a contract to align incentives and encourage the agent to prefer certain actions over others.
But not all contracts are equally effective.
Fig.~\ref{fig:contract_types_schematic_comparison} illustrates for a budget $B$ three contract types and their economic implications:\looseness=-1
\begin{itemize}[leftmargin=1em, topsep=0em, parsep=0em]
\item
\textbf{Constant contract}
($t(j)={B}$):
The agent is paid ${B}$ regardless of the outcome. His best-response in this case is to choose the least-costly action---to the detriment of the principal.

\item
\textbf{Linear contract}
($t(j) = {B} j/m$):
The agent is paid a fraction of ${B}$, linear in the resulting accuracy.
Linear contracts are a popular and extensively-studied class of contracts \cite[e.g.,][]{MilgromH87,carroll2015robustness}.
Nonetheless, and though seemingly sensible, linear contracts turn out to be sub-optimal for our setting. 

\item \textbf{Threshold contract} 
($t(j) = {B} \Indicator{j\ge j_0}$ for some $j_0$):
The agent is paid ${B}$ provided the empirical accuracy surpasses a threshold $j_0$. 
In the example in Fig.~\ref{fig:contract_types_schematic_comparison}, the threshold contract is optimal.
\end{itemize}

Rather than committing \emph{a-priori} to some type of contract,
we seek to find the best budget-optimal contract by 
solving \eqref{eq:principal-problem}.
For this it is useful to have \emph{structure}.

\paragraph{Stochastic learning curves (and where to find them).}
Our approach 
uses the observation that there is a tight connection between the set of distributions $\Set{f_n}$ encoded in $F$,
and \emph{learning curves},
which describe the anticipated accuracy of a classifier as a function of the size of its training set.
Learning curves typically depict only expected accuracy,
but there is also inherent variation in outcomes.
We will therefore broadly use the term `stochastic learning curve' to describe both mean trend \emph{and} variation in accuracy as a function of $n$;
formally, a stochastic learning curve is defined precisely by $F$.
This connection is useful because learning curves have structure:
First, 
expected learning curves are typically \emph{monotone}
\citep{kaplan2020scaling,bousquet2021theory};
when not \citep{mohr2022lcdb, viering2022shape}, they
can be monotonized \citep{bousquet2022monotone}.
Second, stochastic learning curves are likely to satisfy the \emph{monotone likelihood ratio property} (MLRP),
which states that the better the performance of a classifier, the more likely it was trained on more data (see \cshref{def:mlrp}). 

\subsection{Optimization via min-budget contracts}
\label{subsec:reduction_to_min_budget}
Our main technique 
for solving \cshref{eq:principal-problem}
relies on a reduction to what we refer to as \emph{min-budget contracts}.
Given an (implementable) target action $n^*\in\Actions$,
a min-budget contract for $n^*$ is a contract~$t$ that incentivizes the agent to employ precisely the action $n^*$,
while minimizing the maximum payment by the principal $\Norm{t}_\infty=\max_{j\in\Outcomes}\{t(j)\}$;
i.e., $t$ implements $n^*$ at minimum budget. Formally:
\begin{equation}
\label{eq:min-budget_contract}
t^*=\argmin\nolimits_{t} \Norm{t}_\infty
\quad \text{s.t.} \quad
n(t)=n^*
\end{equation}
Our reduction relies on the following claim (Proof in \Cref{subsec:budget_optimal_and_min_budget}):
\begin{proposition}
\label{pro:why-min-budget}
Every budget-optimal contract design problem has an optimal solution which is also min-budget.
\end{proposition}
Using \cshref{pro:why-min-budget}, a solution to \cshref{eq:principal-problem} can be obtained by iteratively solving \cshref{eq:min-budget_contract}:
For all $n_i\in\Actions$, solve \cshref{eq:min-budget_contract} with target action $n^*=n_i$, and return $t^*(n_i)$ for the best implementable $n_i$ whose budget does not exceed $B$.
The budget-optimal problem thus reduces to solving multiple min-budget
problems.
\looseness=-1

To compute each $t^*(n_i)$ in \cshref{eq:min-budget_contract},
we formulate a novel MIN-BUDGET linear program (LP),%
\footnote{The MIN-BUDGET LP is closely related to the well-known MIN-PAY LP from non-budgeted contract design~\citep[e.g.,][]{dutting2019simple}, but with a different objective,
and hence very different optimal solutions
(see \cshref{subsec:relation_to_min_pay}).
} 
detailed in \cshref{subsec:min_budget_lp}.
One way to solve this LP is with generic solvers---an approach which is valid,
but can be costly.
One of our contributions is in identifying natural cases where min-budget contracts take on \emph{simple} forms,
which are easier to optimize,
and have practical merit.
In particular, we show that binary-action contracts have \emph{all-or-nothing} structure ($t(j) \in\{0, B\}$), and plausible structural assumptions on the learning curve give rise to \emph{threshold} contracts ($t(j) = {B} \Indicator{j\ge j_0}$ for some $j_0$). 
Our theoretical results 
are summarized in \Cref{table:theoretical_results}, and detailed in the rest of the section.

\begin{table}
  \caption{
  Characterization of simple min-budget contracts in different settings. Simple contract forms include \emph{all-or-nothing contracts} and their subclass of \emph{threshold contracts}.
  The table specifies for each configuration either the simple form that is optimal (in one case through equivalence to the Neyman-Pearson lemma), or that the simple form is non-optimal or intractable.
  }
  \label{table:theoretical_results}
  \centering
  \begin{tabular}{ccccc}
    \toprule
    \multicolumn{2}{c}{Problem size} &
    \multicolumn{3}{c}{Structural assumptions} \\
    \cmidrule(r){1-2} 
    \cmidrule(r){3-5} 
    Actions & Outcomes & No assumptions & MLRP &  \ConcavityAssumption{} \\
    \midrule
    {$\Size{\Actions}=2$} & any size  & 
    All-or-nothing \veryshortref{T}{pro:opt-for-2-actions}
    & \multicolumn{2}{c}{Threshold \veryshortref{}
    {subsubsec:two_action_contracts_with_mlrp}}
    \\
    \multirow{2}{*}{$\Size{\Actions}>2$} & $m+1=2$  & All-or-nothing \veryshortref{}
    {subsec:two_outcomes_and_more_than_two_actions}
    & \multicolumn{2}{c}{Threshold \veryshortref{}
    {subsubsec:two_outcome_contracts_with_mlrp}}
    \\
     & $m+1 > 2$  & Simple is NP-hard \veryshortref{T}{thm:hardness} & $\exists$ non-threshold \veryshortref{}
     {subsubsec:general_contracts_with_mlrp}
     & Threshold \veryshortref{T}{thm:sufficiency}
     \\
    \bottomrule
  \end{tabular}
\end{table}

\subsection{All-or-nothing contracts: Binary action space and the statistical connection}
\label{sec:min_budget}
We begin with a simple 
delegated learning setting
in which the agent can choose one of two actions, $\Actions=\Set{n_1,n_2}$,
e.g., a `small' vs. `large' training set, and the principal seeks to incentivize training with more data.\footnote{When $n_2$ is not the target or when it is not implementable, then the solution is immediate: always pay $c_1$.}
This reduced case will be useful 
as a building block for the general case
(which we return to in \cshref{sub:beyond-binary}),
and for making a precise connection between contract design and 
hypothesis tests.
\looseness=-1

\paragraph{Simple min-budget contracts for binary action space.}
Our first result shows that optimal binary-action contracts are 
all-or-nothing contracts whose budget is determined by the \emph{total variation distance} between outcome distributions $f_2$ and $f_1$,
namely $\TV{f_2}{f_1}=\tfrac{1}{2}\sum_{j=0}^{m}\Abs{f_{2,j}-f_{1,j}}$.
\looseness=-1

\begin{theorem}[Optimal binary-action contract]
\label{pro:opt-for-2-actions}
In a binary-action contract setting with outcome distributions $f_1,f_2$ and costs $c_1,c_2$,
the min-budget contract is an all-or-nothing contract, given by:
\begin{equation}
\label{eq:binary-action_closed-form}
t^*(j)=B \Indicator{ f_2(j) \ge f_1(j)}
\,\,\,\, \forall j\in \Outcomes,
\quad\text{ where }\,\,\,
B=\nicefrac{(c_2-c_1)}{\TV{f_2}{f_1}}.
\end{equation}
\end{theorem}

The proof (in \Cref{subsubsec:equivalence_to_neyman_pearson}) is by LP duality. 
Intuitively, the optimal contract pays the agent for outcomes that are more likely to come from $f_2$ than from $f_1$. Moreover, it requires a higher budget the smaller the distance is between the two distributions $f_1,f_2$. At the extremes, if their distance is 1 (i.e. no overlap among their supports), the required budget for incentivizing $n_2$ is $c_2-c_1$, whereas if their distance is 0 (i.e. $f_1=f_2$) it becomes impossible to incentivize $n_2$.

\paragraph{Formal connection to optimal hypothesis testing.} 

\Cref{pro:opt-for-2-actions} also uncovers a direct correspondence between optimal contracts and optimal hypothesis tests. 
Intuitively, given the outcome distributions $\Set{f_1,f_2}$, and in order to incentivize $n_2$, the principal wishes to pay the agent if the observed outcome $j\in\Outcomes$ 
is more likely to have originated from $f_2$. The principal can attempt to identify whether the outcome $j$ is drawn from distribution $f_1$ or $f_2$ through hypothesis testing, where a hypothesis test $\psi:\Outcomes\to\{0,1\}$ maps a sample $j$ to either $f_2$ (indicated by 1) or to the null hypothesis~$f_1$ (indicated by 0).
For our purpose it is convenient to allow tests to be non-integral, in which case $\psi:\Outcomes\to[0,1]$ maps~$j$ to a \emph{probability} with which it originates from $f_2$. 
The quality of a hypothesis test is measured by summing its type-1 and type-2 errors: $\sum_{j=0}^m f_{1,j} \psi_j + \sum_{j=0}^m f_{2,j} (1-\psi_j)$. The test that minimizes this sum is known as the \emph{most powerful} hypothesis test, and has been characterized by Neyman and Pearson~\citep[4.3]{rigollet2015high}.
\looseness=-1

We now turn to formally establishing the connection.
For a fixed $B$, observe that every contract with budget $B$ can be mapped to a hypothesis test via the bijection $\psi(j)=\nicefrac{t(j)}{B}$. 
Then:

\begin{theorem}[Optimal contract vs.~test]
\label{thm:contract-test-equiv}
Consider binary-action contract design with distributions $f_1,f_2$ and costs $c_1,c_2$. 
A contract $t$ with budget $B$ is optimal if and only if its corresponding hypothesis test $\psi=\nicefrac{t}{B}$ is maximum power with type-1 and type-2 errors summing to $1-\frac{c_2-c_1}{B}$. 
\end{theorem}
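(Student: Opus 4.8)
The plan is to translate the economic notion of ``implementing $n_2$'' into a statement about the combined error of the associated test, and then read off min-budget optimality directly from the Neyman--Pearson lemma. Write $\langle f,t\rangle=\sum_{j\in\Outcomes}f(j)\,t(j)$. Since the only alternative to $n_2$ in a binary-action setting is $n_1$, a contract $t$ implements $n_2$ exactly when the agent (weakly, under the standard tie-breaking convention) prefers it, i.e. $u_2(t)\ge u_1(t)$, which by \cref{eq:agent_rational_choice} is the single incentive constraint
\[
\sum_{j\in\Outcomes}\big(f_2(j)-f_1(j)\big)\,t(j)\ \ge\ c_2-c_1 .
\]
First I would substitute the bijection $\psi=t/B$ (valid for any contract with payments in $[0,B]$) and record the key identity relating this constraint to the test's quality: writing $R(\psi)$ for the sum of type-1 and type-2 errors,
\[
R(\psi)=\sum_{j} f_1(j)\,\psi_j+\sum_{j} f_2(j)\,(1-\psi_j)=1-\sum_{j}\big(f_2(j)-f_1(j)\big)\,\psi_j .
\]
Dividing the incentive constraint by $B$ then shows that $t$ implements $n_2$ if and only if $R(\psi)\le 1-\tfrac{c_2-c_1}{B}$. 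This is the crux of the reduction: implementability at budget $B$ is equivalent to the test $\psi=t/B$ having combined error no larger than the budget-dependent threshold $1-\tfrac{c_2-c_1}{B}$.

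Next I would minimize over $B$. Let $R^\star=\min_\psi R(\psi)$ be the smallest achievable combined error; by the Neyman--Pearson lemma (\Cref{lem:Neyman-Pearson}) this minimum is attained, precisely by the maximum-power likelihood-ratio tests, and equals $R^\star=1-\sum_j\max\{f_2(j)-f_1(j),0\}=1-\TV{f_1}{f_2}$. An implementing contract of budget $B$ exists iff some test meets the threshold, i.e. iff $R^\star\le 1-\tfrac{c_2-c_1}{B}$, which rearranges to $B\ge B^\star:=\tfrac{c_2-c_1}{1-R^\star}$ (here I use $c_2>c_1$ and $f_1\neq f_2$, so that $1-R^\star=\TV{f_1}{f_2}>0$ and $B^\star$ is finite and positive). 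Hence $B^\star$ is exactly the optimal value of the min-budget program \cref{eq:min-budget_contract} for $n_2$.

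The theorem then follows by unwinding both directions at this minimum. If $t$ is optimal, its budget equals $B=B^\star$, and since it implements $n_2$ we get $R(\psi)\le 1-\tfrac{c_2-c_1}{B^\star}=R^\star$; minimality of $R^\star$ forces equality, so $\psi$ is maximum-power with combined error exactly $1-\tfrac{c_2-c_1}{B}$. Conversely, if $\psi=t/B$ is maximum-power with combined error $1-\tfrac{c_2-c_1}{B}$, then $R^\star=1-\tfrac{c_2-c_1}{B}$ pins down $B=B^\star$, the incentive constraint holds with equality so $t$ implements $n_2$, and $B=B^\star$ is the minimum budget, so $t$ is optimal. The main obstacle is this middle step—certifying that $B^\star$ is genuinely the minimum budget—because it is exactly where the Neyman--Pearson lemma does the work: it guarantees the extremal test exists, identifies its value, and forbids any smaller budget from admitting an implementing contract. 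A secondary point to check is that the extremal test is ``tight'' as a budget-$B$ contract, i.e. $\Norm{t}_\infty=B$: since $f_1\neq f_2$ the set $\{j:f_2(j)>f_1(j)\}$ is nonempty, and to achieve error $R^\star$ any maximum-power test must set $\psi_j=1$ there, whence $\max_j\psi_j=1$ and the maximum payment is indeed $B$.
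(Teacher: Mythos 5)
Your proof is correct, and it reaches the result by a genuinely leaner route than the paper. Both arguments pivot on the same identity: dividing the single incentive constraint $\sum_j \left(f_2(j)-f_1(j)\right)t(j)\ge c_2-c_1$ by $B$ shows that $t$ implements $n_2$ exactly when the combined type-1 plus type-2 error of $\psi=t/B$ is at most $1-\tfrac{c_2-c_1}{B}$; in the paper this identity appears as the non-linear change of variables $(t,B)\mapsto(\phi,\beta)$, $t_j=\phi_j\beta^{-1}$, that converts MIN-BUDGET into the ``statistical'' LP of \Cref{lemma:min_budget_statistical_lp}. Where you diverge is in how optimality is certified. The paper's proof of \Cref{thm:contract-test-equiv} invokes the closed-form characterization of the optimal two-action contract (\Cref{pro:opt-for-2-actions}, proved separately by strong LP duality in \Cref{lemma:two_action_optimal_min_budget_contract}) and then matches that contract against the Neyman--Pearson test. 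You never need that characterization: you let Neyman--Pearson itself certify that $B^\star=(c_2-c_1)/\TV{f_1}{f_2}$ is the minimum implementable budget---feasible budgets form the ray $B\ge B^\star$ by monotonicity of the threshold $1-\tfrac{c_2-c_1}{B}$ in $B$---and both directions of the equivalence then follow by forcing equality at the extremes. This makes your argument more self-contained and exhibits \Cref{pro:opt-for-2-actions} as a corollary of Neyman--Pearson rather than an input; the paper's duality route costs more machinery, but LP duality is the tool that extends to settings with more than two actions (e.g.\ \Cref{claim:two_outcome_mlrp_is_threshold}), which is presumably why the paper sets it up independently. Your closing check that every maximum-power test sets $\psi_j=1$ on the nonempty set $\Set{j\mid f_2(j)>f_1(j)}$, so that $\Norm{t}_\infty$ genuinely equals $B$, tidies up a point the paper leaves implicit.
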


The proof (\Cref{subsubsec:equivalence_to_neyman_pearson}) is by a non-linear variable transformation to the MIN-BUDGET LP. 
\Cref{thm:contract-test-equiv} 
implies that the optimal contract for the binary-action case (\Cref{pro:opt-for-2-actions}) is equivalent to the well-known Neyman-Pearson lemma characterizing the most powerful hypothesis test:

\begin{lemma}[Neyman-Pearson {\citep[e.g.,][]{rigollet2015high}}]
\label{lem:Neyman-Pearson}
    Let $f_1,f_2$ be two discrete probability distributions. 
    Then the most powerful hypothesis test for $f_1,f_2$ is the likelihood ratio test $\psi(j)=\Indicator{ f_2(j) \ge f_1(j)}$, which attains the optimal bound $1-\TV{p}{q}$ on the sum of type-1 and type-2 errors.
\end{lemma}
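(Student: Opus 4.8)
The plan is to minimize the combined error directly, exploiting the fact that it decouples across outcomes and is linear in each test value. First I would write the quality measure (as defined in the excerpt) as $E(\psi)=\sum_{j=0}^m f_1(j)\psi_j+\sum_{j=0}^m f_2(j)(1-\psi_j)$, and use $\sum_{j=0}^m f_2(j)=1$ to rewrite it as
\begin{equation}
E(\psi)=1+\sum_{j=0}^m \psi_j\bigl(f_1(j)-f_2(j)\bigr).
\end{equation}
The key observation is that this is a sum of terms, each depending on a single $\psi_j\in[0,1]$ through a linear (hence monotone) function, so $E(\psi)$ can be minimized coordinatewise over $\Outcomes$.

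Next I would carry out the pointwise minimization. For each $j$, the summand $\psi_j(f_1(j)-f_2(j))$ is minimized over $\psi_j\in[0,1]$ by taking $\psi_j=1$ when the coefficient $f_1(j)-f_2(j)$ is negative (i.e.\ $f_2(j)>f_1(j)$) and $\psi_j=0$ when it is positive; when $f_1(j)=f_2(j)$ the choice is immaterial. Breaking ties toward $1$ yields exactly the likelihood ratio test $\psi(j)=\Indicator{f_2(j)\ge f_1(j)}$, which is therefore a minimizer of the error sum and hence maximum power. Evaluating at this optimum gives $E(\psi^*)=1-\sum_{j:\,f_2(j)\ge f_1(j)}\bigl(f_2(j)-f_1(j)\bigr)$, and I would then invoke the standard identity $\sum_{j=0}^m\max\bigl(f_2(j)-f_1(j),0\bigr)=\TV{f_2}{f_1}$. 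This identity holds because $\sum_{j=0}^m\bigl(f_2(j)-f_1(j)\bigr)=0$ (both are probability distributions), so the total positive mass of $f_2-f_1$ equals its total negative mass, each equal to $\tfrac12\sum_{j=0}^m\Abs{f_2(j)-f_1(j)}$. Hence $E(\psi^*)=1-\TV{f_2}{f_1}$, which is the claimed optimal bound.

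There is essentially no hard step: the argument is a single coordinatewise linear minimization over the unit interval. The only points requiring care are the bookkeeping in reducing $E(\psi)$ to $1+\sum_j\psi_j(f_1(j)-f_2(j))$, which relies on $f_2$ being normalized, and the total-variation identity at the end, which relies on both $f_1$ and $f_2$ being normalized. I would also remark that ties ($f_1(j)=f_2(j)$) do not affect the optimal value, so the likelihood ratio test is one among possibly several optimal tests; this matches the tie-breaking convention already used in \Cref{pro:opt-for-2-actions}, making the equivalence asserted there exact.
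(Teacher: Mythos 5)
Your proof is correct, but it follows a genuinely different route from the paper's. The paper never proves the binary-test case itself: it defers it entirely to \citet[Section 4.3]{rigollet2015high}, and its own contribution (in \Cref{subsubsec:equivalence_to_neyman_pearson}) is only the extension to fractional tests $\psi:\Outcomes\to[0,1]$, argued via a generic LP fact --- the sum of errors is linear in $\psi$, the feasible region is the hypercube $[0,1]^{\Size{\Outcomes}}$, whose vertices are exactly the binary tests, and a feasible LP attains its optimum at a vertex, so the cited binary-optimal test remains optimal among fractional ones. You instead prove everything from scratch: writing $E(\psi)=1+\sum_{j}\psi_j\bigl(f_1(j)-f_2(j)\bigr)$ and minimizing coordinatewise over $\psi_j\in[0,1]$ gives the likelihood-ratio test, its optimality among \emph{all} fractional tests (so the paper's vertex argument is subsumed for free), and the optimal value $1-\sum_j\bigl(f_2(j)-f_1(j)\bigr)^{+}$ in a single step; the closing identity $\sum_j\bigl(f_2(j)-f_1(j)\bigr)^{+}=\TV{f_2}{f_1}$ is exactly the paper's \Cref{claim:total_variation_as_max_sum}, which you reprove the same way (via the fact that the positive and negative mass of $f_2-f_1$ cancel). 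The trade-off: the paper's treatment is shorter on the page and leans on a standard reference plus the vertex-optimality of linear programs, while yours is self-contained, elementary, and handles binary optimality, the fractional relaxation, and the value $1-\TV{f_2}{f_1}$ simultaneously; your remark on tie-breaking at $f_1(j)=f_2(j)$ also makes explicit why the specific indicator $\Indicator{f_2(j)\ge f_1(j)}$ used in \Cref{pro:opt-for-2-actions} is one valid selection among the optimizers.
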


\Cref{thm:contract-test-equiv} establishes a new formal connection between the two domains of contract design and hypothesis testing.
In the context of contract design, it provides a statistical interpretation: 
A min-budget contract can be interpreted as an optimal hypothesis test, and the ability to distinguish between the two hypotheses determines the required budget. In the converse direction, it enables a new proof for the
Neyman-Pearson 
using
the min-budget contract given by \Cref{pro:opt-for-2-actions} (see \Cref{subsubsec:equivalence_to_neyman_pearson}).

\subsection{All-or-nothing contracts: Beyond binary action}
\label{sub:beyond-binary}
For general action spaces, 
optimal contracts are not guaranteed to be all-or-nothing.
In fact, we show that determining whether there exists an all-or-nothing contract that is optimal is NP-hard:

\begin{theorem}[Hardness]
\label{thm:hardness}
Finding a min-budget all-or-nothing contract is NP-hard.
\end{theorem}

The proof is by reduction from 3SAT, and appears in \cshref{subsec:computational_hardness}. 
Nonetheless, there are special but important cases---notably the binary outcome case%
\footnote{In this case there are \emph{binary outcomes} ($m=1$), such as when the agent’s efforts can result in either success or failure \citep{babaioff2006combinatorial,ho2014adaptive,dutting2022combinatorial}. For this case, we show that the optimal contract is also all-or-nothing (\cshref{subsec:two_outcomes_and_more_than_two_actions}).
}--- in which results from
\cshref{sec:min_budget} hold,
suggesting that ideas from \cshref{pro:opt-for-2-actions} apply more broadly. The following algorithm makes use of these ideas, showing good empirical performance, and provable performance under an MLRP condition (Sec.~\ref{sub:threshold}).
\looseness=-1

\paragraph{\AlgName{} algorithm.} 

Revisiting the closed-form all-or-nothing contract in \cshref{eq:binary-action_closed-form}, 
we observe that the result is based on the fact that
binary action spaces have only one alternative action.
Building upon this observation, we propose the 
\emph{\algName{}} (\AlgNameShort{}) algorithm, 
which computes a solution to \cshref{eq:min-budget_contract} in the general (many-actions) case:
Given target action $n^*\in\Actions$, loop over all actions $n \neq n^*$, and apply the closed-form formula in \cshref{eq:binary-action_closed-form} 
to obtain an (all-or-nothing) contract $t^*(n,n^*)$. If the agent's best response (\cshref{eq:agent_rational_choice}) satisfies $n\left(t^*(n,n^*)\right)=n^*$, return $t^*$. If the loop ends without returning a contract, return `fail'. 
The following claim shows the algorithm is sound:
\looseness=-1

\begin{proposition}[Soundness of \AlgNameShort{}]
\label{prop:SBD_soundness}
When the \algName{} algorithm terminates successfully, it returns an optimal contract which is an all-or-nothing contract. 
\end{proposition}

Proof in \Cref{appendix:local_algorithm}. 
\cshref{prop:SBD_soundness}
ensures that if \AlgNameShort{} succeeds, then the returned all-or-nothing contract $t^*$ is optimal. Moreover, failure does not preclude the existence of an optimal all-or-nothing contract.
If \AlgNameShort{} fails, we 
solve
\cshref{eq:min-budget_contract}
with 
a generic
LP solver.
Empirically, \AlgNameShort{} was successful in more than 85\% of cases,
and is ${\sim}10^3$ times faster than the LP solver  (\Cref{appendix:empirical_prevalence}).
Thus, this optimistic `try \AlgNameShort{} first' approach
typically succeeds,
adds negligible overhead if not, and guarantees correctness.

\looseness=-1

\subsection{Threshold contracts: MLRP assumption}
\label{sub:threshold}
In this section we provide sufficient conditions, in the form of natural structural properties of (stochastic) learning curves,
that guarantee the optimality of even \emph{simpler} contracts---namely \emph{threshold contracts}---and the success of \AlgNameShort{}.
With inspiration from hypothesis testing \citep{lehmann2005testing} and contract theory \citep{grossman1992analysis, dutting2019simple}, it is natural to consider the \emph{monotone likelihood ratio property} (MLRP) assumption:

\begin{definition}[MLRP {\citep[e.g.,][]{grossman1992analysis}}]
\label{def:mlrp}
A contract design setting satisfies MLRP if for every pair of actions $a,a'$ such that $c_a<c_{a'}$, the likelihood ratio $\nicefrac{f_{a'}(j)}{f_a(j)}$ is monotonically increasing in $j$.
\end{definition}

In our context, MLRP states that the better the validation-set performance of a classifier, the more likely it was trained on more data.
This holds in particular for monotone learning curves with binomial outcome distribution \citep[B.1]{dutting2019simple}.
For a binary action space, MLRP ensures that $n_2$ is always implementable,%
\footnote{As a corollary of a similar result for min-pay contracts \citep[][Lemma 7]{dutting2019simple}.} 
and that the optimal contract is a threshold contract:
$t^*(j)=B\Indicator{j\ge j_0}$. This is by \Cref{pro:opt-for-2-actions}, and by the fact that $\exists j_0$ 
such that $\nicefrac{f_2(j)}{f_1(j)} \ge 1$ iff $j\ge j_0$ (see also \Cref{subsubsec:two_action_contracts_with_mlrp}).%
\footnote{This also holds for binary outcomes in an arbitrary action space, see \Cref{subsubsec:two_outcome_contracts_with_mlrp}.}
 Interestingly, this is similar to the relation between the Neyman-Pearson lemma and the Karlin-Rubin theorem, which characterizes the 
most powerful hypothesis test under monotone likelihood ratio \citep{lehmann2005testing}.

\paragraph{MLRP for general action space.}
MLRP does not guarantee threshold contracts in general:
In \Cref{subsubsec:general_contracts_with_mlrp},
we give a constructive counterexample
satisfying MLRP, but for which the optimal contract is not threshold.
However, refining MLRP to also capture `diminishing returns'
turns out to be sufficient for recovering guarantees generally.
For target action $a_N$, denote the
\emph{survival probability} of an action $a_i$ by $s_i=\prob{j \sim f_i}{j \ge j^*}$,
where $j^*$ is the minimal outcome at which action $a_N$ is more likely than $a_{N-1}$.
These will serve as formal means for capturing the concavity of learning curves.
\looseness=-1

\begin{definition}[\ConcavityAssumptionShort{}]
\label{def:concavity_assumption}
A contract design setting
satisfies \ConcavityAssumption{} (\ConcavityAssumptionShort{}) if it satisfies MLRP,
and additionally the actions' survival probability 
is concave as a function of the actions' cost. 
\end{definition}

Our final result shows that \ConcavityAssumptionShort{} 
guarantees optimality of threshold contracts, and success of \AlgNameShort{}:
\looseness=-1
\begin{theorem}[Sufficiency for threshold]
\label{thm:sufficiency}
    Consider a contract design setting with \ConcavityAssumptionShort{}.
    Then the optimal contract is a threshold contract,
    and is recovered by the \AlgNameShort{} algorithm.
\end{theorem}

We prove this claim by showing that concavity implies that only one
alternative action 
is binding in the linear program equivalent to \cshref{eq:min-budget_contract}, reducing the problem to the two-action case. 
By applying \Cref{pro:opt-for-2-actions}, we obtain optimality of threshold contracts in this case as well (proof in \Cref{subsubsec:general_contracts_with_mlrp}). 
This also implies that SBC
always terminates successfully on inputs that satisfy \ConcavityAssumptionShort{}.

In practice, we believe that \ConcavityAssumptionShort{} is a reasonable assumption for realistic learning curves:
in \Cref{subsec:binomial_concave_survival},
we prove it is satisfied by a standard theoretical model of learning curves,
and in \Cref{appendix:empirical_prevalence}, we empirically demonstrate that it is (approximately) satisfied in settings where threshold contracts are optimal.
Interestingly, in our empirical study, threshold contracts were often optimal even when \ConcavityAssumptionShort{} did not hold---suggesting the condition is sufficient, but not necessary (see \cshref{subsec:empirical_prevalence}).
\looseness=-1

\section{Experiments}
\label{sec:empirical}

We now turn to our empirical investigation of delegated learning
under full and partial information.
We base our experiments on the recently curated
Learning Curves Database(LCDB) \citep{mohr2022lcdb},
which includes a large collection of stochastic learning curves for multiple classification datasets and methods.
For each dataset and method, the database includes held-out accuracy measurements obtained for increasing sample sizes
$n \in \Set{2^4,2^{4.5},\dots,2^{15}}$,
with multiple repetitions per $n$;
these provide us with stochastic learning curves.
Here we focus primarily on the popular MNIST dataset \citep{lecun1998mnist} as our case study, and on MLP and GBDT as representative classifiers,
but we refer the reader to \Cref{sec:appendix_experiments} for further experiments on additional datasets and methods.
Code is available at: \DelegatedClassificationCodeURL{}.

\subsection{Full information}
We begin with the full information setting to explore in a clean environment how different parameters of the learning setting and environment affect predictive performance and economic outcomes.

\begin{figure}[t!]
    \centering
    \includegraphics[width=\columnwidth]{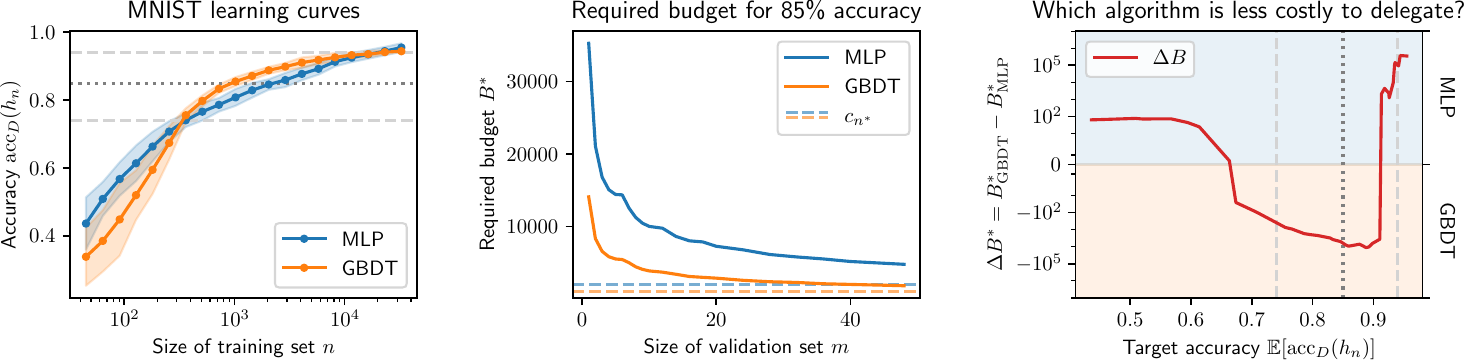}
    \caption{
    Delegating with full information.
    \textbf{(Left)}
    Typical learning curves for two learning algorithms on MNIST.
    \textbf{(Center)}
    Required budget for target accuracy of $\FullInformationExperimentMTargetAcc{}$ per validation set size $m$.
    \textbf{(Right)}
    Different cost regimes,
    indicating per accuracy region which of the two methods is cheaper to delegate.\looseness=-1
    }
    \label{fig:full_information_contract_design}
\end{figure}

\paragraph{Validation set size.}
Fig.~\ref{fig:full_information_contract_design} (left) presents typical stochastic learning curves for two learning algorithms:
Multi-Layered Perceptron (MLP) and Gradient-Boosted Decision Trees (GBDT).
We take an arbitrary accuracy point on the curve at $\acc(n)=\FullInformationExperimentMTargetAcc{}$ (dotted line) to examine the effects of validation set size $m$ on min-budget contracts.
Notice that MLP requires larger $n$ to obtain \FullInformationExperimentMTargetAcc{};
Fig.~\ref{fig:full_information_contract_design} (center) shows how this 
translates to a larger required budget $B^*$,
which holds for all $m$.
As $m$ increases, required budgets and the difference between them both decrease. 
Larger validation sets are therefore useful for reducing required budget.
Nonetheless, even for reasonable $m$, obtained budgets still remain higher than their theoretical lower bounds (target action costs $c_{n^*}$%
). 

\paragraph{Budget regimes.}
Fig.~\ref{fig:full_information_contract_design} (left) also indicates two points in which the learning curves cross (dashed lines), at ${\sim}\FullInformationExperimentCrossingPointLow{}$ and ${\sim}\FullInformationExperimentCrossingPointHigh{}$ accuracy.
These correspond to sample sizes $n$ for which both methods obtain matching accuracies (in expectation).
For a self-sufficient learner,
the implication is that at each of these points,
both methods are equally costly, i.e., both cost $c_n$.
Interestingly, and in contrast, delegation can entail different required budgets \emph{despite} equal accuracies.
Fig.~\ref{fig:full_information_contract_design} (right)
shows for each target accuracy the gap in required budgets between both methods,
$\Delta B^* = B^*_{\textsc{gbdt}} - B^*_{\textsc{mlp}}$.
As can be seen, each method is comparatively more (or less) costly in different accuracy regimes (up to 0.6; between 0.6 and 0.92; and above 0.92).
Crucially,
the budget gap can be large
even when accuracies match (dashed lines).
For example, even though both MLP and GBDT require $n{=}\FullInformationExperimentCrossingNLow{}{\approx}2^{\FullInformationExperimentCrossingTwolognLow{}/2}$ samples to obtain ${\sim}\FullInformationExperimentCrossingPointLow{}$ accuracy, GBDT is cheaper ($\Delta B^*{=}-10^2$);
for $\sim \FullInformationExperimentCrossingPointHigh{}$ which requires $n{=}\FullInformationExperimentCrossingNHigh{}{\approx}2^{\FullInformationExperimentCrossingTwolognHigh{}/2}$ from both,
GBDT is significantly more expensive ($\Delta B^*{=}10^5$).
The reason for this is that optimal budgets are 
determined by the ability to distinguish between distributions (Sec.~\ref{sec:min_budget}).\looseness=-1
\paragraph{Prevalence of simple contracts.}
\label{subsec:empirical_prevalence}
To understand the applicability of our theoretical findings, in \Cshref{appendix:empirical_prevalence} we conduct an empirical prevalence evaluation on additional learning algorithms, and across target actions. We observed that min-budget contracts assume a threshold form and the SBC algorithm returns correct results in more than 85\% of cases overall. 
Restricting optimization to simple contracts, budget requirements were generally less than 1\% higher than that of a min-budget contract, suggesting that simple contracts may provide a good approximation even when min-budget contracts do not assume a simple form.

\subsection{Partial information} \label{sec:exp-partial_info}
We now turn to consider delegation under partial information,
in which the principal must rely on an estimated learning curve. We instantiate this idea by assuming that the principal has access to a small `pilot' dataset of size $k$,
where $k$ is considered small.
Using this set, the principal creates an estimated learning curve $\hat{F}$ by fitting a curve to accuracies obtained for up to some $n_0 \le k$,
and extrapolating to larger $n > n_0$.
In particular, we experiment with fitting parametric power-law curves of the form $\expect{}{\alpha_n}=a-bn^{-c}$, which have been shown to provide good fit in various scenarios both empirically and theoretically \citep{viering2022shape, kaplan2020scaling, bousquet2021theory}.
Since power-law curves are monotone, composition with binomial distributions increasing in $p$ provably results in MLRP stochastic curves \citep[B.1]{dutting2019simple}.

\begin{figure}
    \centering
    \includegraphics[width=\columnwidth]{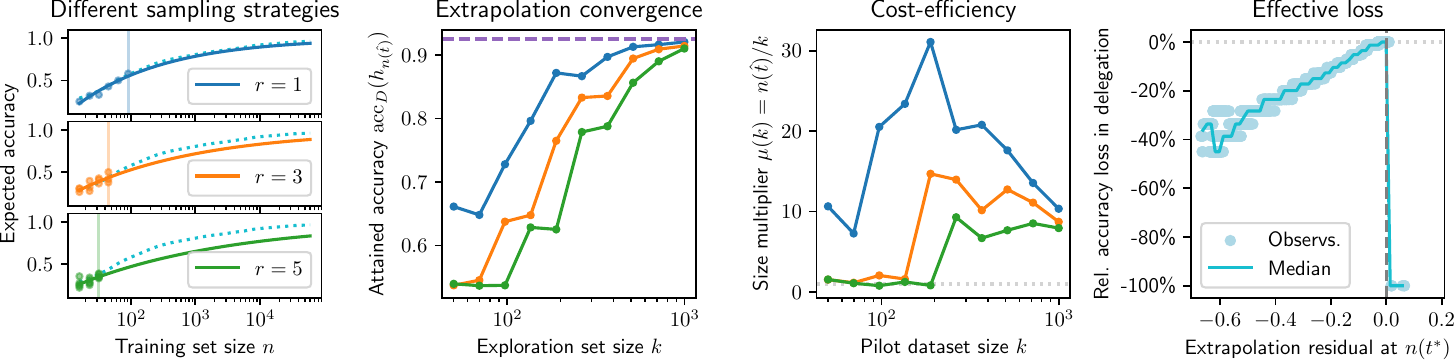}
    \caption{
    Delegating with partial information.
    \textbf{(Left)}
    Extrapolated learning curves for different $r$.
    \textbf{(Center-left)}
    Accuracy obtained via delegation per pilot set size $k$.
    \textbf{(Center-right)}
    Multiplicative gain in effective number of samples due to delegation.
    \textbf{(Right)}
    Implications of over vs. under-estimation.\looseness=-1
    }
    \label{fig:contract_design_with_partial_information}
\end{figure}

\paragraph{Bias-variance tradeoff.}
Given $k$ pilot examples, there are different ways in which the principal can use them to construct an estimated curve.
Here we consider a simple tradeoff: setting $n_0$ to be small but with more samples per $n<n_0$ (low variance),
or setting $n_0$ to be large but with few samples per $n<n_0$ (low bias).
We define $r$ as the number of samples per $n$ (so low $r$ means larger $n_0$).
Then, for a given $r$, we set $n_0$ such that $\sum_{n \le n_0} r \cdot n \le k$ (i.e., such that the total number of used samples does not exceed $k$).
Fig.~\ref{fig:contract_design_with_partial_information} (left) shows different curve fits for $r\in\Set{1,3,5}$, and corresponding $n_0$.
Then, Fig.~\ref{fig:contract_design_with_partial_information} (center-left) shows for a certain fixed budget the accuracy level that can be attained for increasing $k$, and as a function of $r$.
As can be seen, having sufficient points $k$ for constructing $\hat{F}$ is important, but performance grows quickly with $k$
(note log-scale x-axis).
It is also apparent in our example that low bias (via larger $n_0$) is much more important than low variance for constructing useful $\hat{F}$.\looseness=-1

\paragraph{Cost-efficiency tradeoff.}
Because the pilot set provides the principal a basic means for obtaining minimal accuracy, we can ask: given $k$ examples,
and for a fixed budget $B$,
what is the added benefit of delegating learning?
For this, we define
$\mu(k) = n(\hat{t}) / k$ 
to be the \emph{sample-size multiplier}, i.e., the multiplicative gain in the effective number of samples due to delegation.
Fig.~\ref{fig:contract_design_with_partial_information} (center-right) shows $\mu(k)$ for increasing $k$ and across $r$.
For $r=1$ (which is superior in terms of performance and outcomes),
$\mu$ begins at ${\sim}10$, increases to ${\sim}30$ at around $k=\PartialInformationExperimentDataMultiplierArgmax{}$, and slowly decreases back to ${\sim}10$ towards $k= 1,000$.
For $r>1$, we observe that $\mu \approx 1$, i.e.,
there is effectively no gain from delegation,
until around $k=100$,
only after which some gain is restored.
This highlights the importance of obtaining an accurate estimate $\hat{F}$ in terms of the economic consequences of delegation.

\paragraph{Over vs. under-estimation.}
Typically in curve-fitting, over and under-estimation are treated equally, since both types of error can negatively affect goodness of fit and extrapolation quality.
However, for delegation,
the implications of over vs. under-estimation on contract outcomes are highly asymmetric.
Fig.~\ref{fig:contract_design_with_partial_information} (right)
shows for a target incentivized number of samples $n(t^*)$
the relation between the (theoretical) \emph{signed} extrapolation error $n(t^*)$ (i.e., over- or under-estimate, measured in accuracy points)
and the eventual loss in accuracy obtained through delegation, relative to perfect estimation.
Each point in the plot corresponds to one curve-fitting instance,
with points shown for varying $k$, $n_0$, and $r$,
and with multiple independent repetitions.
Results show that in the under-estimation regime
(i.e., \emph{negative} extrapolation error),
loss in accuracy degrades gracefully with the estimation error.
In stark contrast, even minimal over-estimation (\emph{positive} extrapolation error) causes accuracy to plummet dramatically,
as the agent's rational response in those cases was to use the smallest dataset possible.
We interpret this as a consequence of `setting the bar too high'---a rational decision to minimize effort in response to unrealistic expectations.
This has important implications for the choice of how to fit and extrapolate learning curves, suggesting that 
contracts can be tolerant to under-estimation, while
over-estimation should be avoided at all costs.

\section{Discussion}
\label{sec:discussion}

Motivated by the increasingly-common practice of outsourcing learning tasks,
this paper sets out to introduce and study the novel problem of delegated classification.
Our findings suggest that conflict of interests should not be overlooked,
and that contracts hold potential as a means for aligning them.
Our analysis relies on a set of assumptions, which should be carefully considered by practitioners and empiricists alike;
we also believe that there are likely further fruitful connections to explore between contracts and statistical hypothesis testing.
As a problem of contract design,
and when the learning task is reasonably well-behaved,
delegated learning manifests in the form simple threshold contracts.
A natural question for future work is whether simplicity also implies \emph{robustness} to partial knowledge---as is often the case \citep{dutting2019simple}.

\newpage
\paragraph{Acknowledgements.} 
The authors would like to thank Ruth Heller, Shafi Goldwasser, Jonathan Shafer, Ohad Einav, and anonymous reviewers for their insightful remarks and valuable suggestions.
Nir Rosenfeld is supported by the Israel Science Foundation grant no. 278/22.
Eden Saig is supported by the Israel Council for Higher Education PBC scholarship for Ph.D. students in data science.
Funded by the European Union (ERC, ALGOCONTRACT, 101077862, PI: Inbal Talgam-Cohen).

\bibliographystyle{plainnat}
\bibliography{citations}

\begin{thebibliography}{52}
\providecommand{\natexlab}[1]{#1}
\providecommand{\url}[1]{\texttt{#1}}
\expandafter\ifx\csname urlstyle\endcsname\relax
  \providecommand{\doi}[1]{doi: #1}\else
  \providecommand{\doi}{doi: \begingroup \urlstyle{rm}\Url}\fi

\bibitem[Abramowitz et~al.(1988)Abramowitz, Stegun, and
  Romer]{abramowitz1988handbook}
Milton Abramowitz, Irene~A Stegun, and Robert~H Romer.
\newblock Handbook of mathematical functions with formulas, graphs, and
  mathematical tables, 1988.

\bibitem[Alabdulmohsin et~al.(2022)Alabdulmohsin, Neyshabur, and
  Zhai]{alabdulmohsin2022revisiting}
Ibrahim Alabdulmohsin, Behnam Neyshabur, and Xiaohua Zhai.
\newblock Revisiting neural scaling laws in language and vision.
\newblock In Alice~H. Oh, Alekh Agarwal, Danielle Belgrave, and Kyunghyun Cho,
  editors, \emph{Advances in Neural Information Processing Systems}, 2022.

\bibitem[Alon et~al.(2020)Alon, Dobson, Procaccia, Talgam{-}Cohen, and
  Tucker{-}Foltz]{AlonDPTT20}
Tal Alon, Magdalen Dobson, Ariel~D. Procaccia, Inbal Talgam{-}Cohen, and Jamie
  Tucker{-}Foltz.
\newblock Multiagent evaluation mechanisms.
\newblock In \emph{AAAI 2020}, pages 1774--1781, 2020.

\bibitem[Ananthakrishnan et~al.(2023)Ananthakrishnan, Bates, Jordan, and
  Haghtalab]{ananthakrishnan2023delegating}
Nivasini Ananthakrishnan, Stephen Bates, Michael~I Jordan, and Nika Haghtalab.
\newblock Delegating data collection in decentralized machine learning.
\newblock \emph{arXiv preprint arXiv:2309.01837}, 2023.

\bibitem[Azar and Micali(2018)]{azar2018computational}
Pablo~D Azar and Silvio Micali.
\newblock Computational principal--agent problems.
\newblock \emph{Theoretical Economics}, 13\penalty0 (2):\penalty0 553--578,
  2018.

\bibitem[Babaioff et~al.(2006)Babaioff, Feldman, and
  Nisan]{babaioff2006combinatorial}
Moshe Babaioff, Michal Feldman, and Noam Nisan.
\newblock Combinatorial agency.
\newblock In \emph{Proceedings of the 7th ACM Conference on Electronic
  Commerce}, pages 18--28, 2006.

\bibitem[Babaioff et~al.(2012)Babaioff, Feldman, Nisan, and
  Winter]{BabaioffFNW12}
Moshe Babaioff, Michal Feldman, Noam Nisan, and Eyal Winter.
\newblock Combinatorial agency.
\newblock \emph{Journal of Economic Theory}, 147\penalty0 (3):\penalty0
  999--1034, 2012.

\bibitem[Bates et~al.(2022)Bates, Jordan, Sklar, and
  Soloff]{bates2022principal}
Stephen Bates, Michael~I Jordan, Michael Sklar, and Jake~A Soloff.
\newblock Principal-agent hypothesis testing.
\newblock \emph{arXiv preprint arXiv:2205.06812}, 2022.

\bibitem[Bisla et~al.(2021)Bisla, Saridena, and
  Choromanska]{bisla2021theoretical}
Devansh Bisla, Apoorva~Nandini Saridena, and Anna Choromanska.
\newblock A theoretical-empirical approach to estimating sample complexity of
  dnns.
\newblock In \emph{Proceedings of the IEEE/CVF Conference on Computer Vision
  and Pattern Recognition}, pages 3270--3280, 2021.

\bibitem[Bolton and Dewatripont(2004)]{bolton2004contract}
Patrick Bolton and Mathias Dewatripont.
\newblock \emph{Contract theory}.
\newblock MIT press, 2004.

\bibitem[Bousquet et~al.(2021)Bousquet, Hanneke, Moran, Van~Handel, and
  Yehudayoff]{bousquet2021theory}
Olivier Bousquet, Steve Hanneke, Shay Moran, Ramon Van~Handel, and Amir
  Yehudayoff.
\newblock A theory of universal learning.
\newblock In \emph{Proceedings of the 53rd Annual ACM SIGACT Symposium on
  Theory of Computing}, pages 532--541, 2021.

\bibitem[Bousquet et~al.(2022)Bousquet, Daniely, Kaplan, Mansour, Moran, and
  Stemmer]{bousquet2022monotone}
Olivier~J Bousquet, Amit Daniely, Haim Kaplan, Yishay Mansour, Shay Moran, and
  Uri Stemmer.
\newblock Monotone learning.
\newblock In \emph{Conference on Learning Theory}, pages 842--866. PMLR, 2022.

\bibitem[Bynum et~al.(2021)Bynum, Hackebeil, Hart, Laird, Nicholson, Siirola,
  Watson, and Woodruff]{bynum2021pyomo}
Michael~L. Bynum, Gabriel~A. Hackebeil, William~E. Hart, Carl~D. Laird,
  Bethany~L. Nicholson, John~D. Siirola, Jean-Paul Watson, and David~L.
  Woodruff.
\newblock \emph{Pyomo--optimization modeling in python}, volume~67.
\newblock Springer Science \& Business Media, third edition, 2021.

\bibitem[Cai et~al.(2015)Cai, Daskalakis, and Papadimitriou]{CaiDP15}
Yang Cai, Constantinos Daskalakis, and Christos~H. Papadimitriou.
\newblock Optimum statistical estimation with strategic data sources.
\newblock In \emph{COLT 2015}, pages 280--296, 2015.

\bibitem[Carroll(2015)]{carroll2015robustness}
Gabriel Carroll.
\newblock Robustness and linear contracts.
\newblock \emph{American Economic Review}, 105\penalty0 (2):\penalty0 536--563,
  2015.

\bibitem[Chen et~al.(2022)Chen, Li, and Xu]{chen2022selling}
Junjie Chen, Minming Li, and Haifeng Xu.
\newblock Selling data to a machine learner: Pricing via costly signaling.
\newblock In \emph{International Conference on Machine Learning}, pages
  3336--3359. PMLR, 2022.

\bibitem[Cohen et~al.(2022)Cohen, Deligkas, and Koren]{CohenDK22}
Alon Cohen, Argyrios Deligkas, and Moran Koren.
\newblock Learning approximately optimal contracts.
\newblock In \emph{SAGT 2022}, pages 331--346, 2022.

\bibitem[D{\"{u}}tting et~al.(2019)D{\"{u}}tting, Roughgarden, and
  Talgam{-}Cohen]{DuttingRT19}
Paul D{\"{u}}tting, Tim Roughgarden, and Inbal Talgam{-}Cohen.
\newblock Simple versus optimal contracts.
\newblock In \emph{EC 2019}, pages 369--387, 2019.

\bibitem[D{\"u}tting et~al.(2019)D{\"u}tting, Roughgarden, and
  Talgam-Cohen]{dutting2019simple}
Paul D{\"u}tting, Tim Roughgarden, and Inbal Talgam-Cohen.
\newblock Simple versus optimal contracts.
\newblock In \emph{Proceedings of the 2019 ACM Conference on Economics and
  Computation}, pages 369--387, 2019.

\bibitem[D{\"{u}}tting et~al.(2021)D{\"{u}}tting, Ezra, Feldman, and
  Kesselheim]{DuettingEFK21}
Paul D{\"{u}}tting, Tomer Ezra, Michal Feldman, and Thomas Kesselheim.
\newblock Combinatorial contracts.
\newblock In \emph{FOCS 2021}, pages 815--826, 2021.

\bibitem[D{\"u}tting et~al.(2022)D{\"u}tting, Ezra, Feldman, and
  Kesselheim]{dutting2022combinatorial}
Paul D{\"u}tting, Tomer Ezra, Michal Feldman, and Thomas Kesselheim.
\newblock Combinatorial contracts.
\newblock In \emph{2021 IEEE 62nd Annual Symposium on Foundations of Computer
  Science (FOCS)}, pages 815--826. IEEE, 2022.

\bibitem[D{\"{u}}tting et~al.(2023)D{\"{u}}tting, Ezra, Feldman, and
  Kesselheim]{DuettingEFK23}
Paul D{\"{u}}tting, Tomer Ezra, Michal Feldman, and Thomas Kesselheim.
\newblock Multi-agent contracts.
\newblock In \emph{STOC 2023}, 2023.
\newblock To appear.

\bibitem[Ghorbani et~al.(2022)Ghorbani, Firat, Freitag, Bapna, Krikun, Garcia,
  Chelba, and Cherry]{ghorbani2021scaling}
Behrooz Ghorbani, Orhan Firat, Markus Freitag, Ankur Bapna, Maxim Krikun,
  Xavier Garcia, Ciprian Chelba, and Colin Cherry.
\newblock Scaling laws for neural machine translation.
\newblock In \emph{The Tenth International Conference on Learning
  Representations, {ICLR} 2022, Virtual Event, April 25-29, 2022}.
  OpenReview.net, 2022.
\newblock URL \url{https://openreview.net/forum?id=hR\_SMu8cxCV}.

\bibitem[Goldwasser et~al.(2021)Goldwasser, Rothblum, Shafer, and
  Yehudayoff]{goldwasser2021interactive}
Shafi Goldwasser, Guy~N Rothblum, Jonathan Shafer, and Amir Yehudayoff.
\newblock Interactive proofs for verifying machine learning.
\newblock In \emph{12th Innovations in Theoretical Computer Science Conference
  (ITCS 2021)}. Schloss Dagstuhl-Leibniz-Zentrum f{\"u}r Informatik, 2021.

\bibitem[Grossman and Hart(1983)]{GrossmanHart83}
Sanford~J. Grossman and Oliver~D. Hart.
\newblock An analysis of the principal-agent problem.
\newblock \emph{Econometrica}, 51\penalty0 (1):\penalty0 7--45, 1983.

\bibitem[Grossman and Hart(1992)]{grossman1992analysis}
Sanford~J Grossman and Oliver~D Hart.
\newblock An analysis of the principal-agent problem.
\newblock In \emph{Foundations of insurance economics}, pages 302--340.
  Springer, 1992.

\bibitem[Hart et~al.(2011)Hart, Watson, and Woodruff]{hart2011pyomo}
William~E Hart, Jean-Paul Watson, and David~L Woodruff.
\newblock Pyomo: modeling and solving mathematical programs in python.
\newblock \emph{Mathematical Programming Computation}, 3\penalty0 (3):\penalty0
  219--260, 2011.

\bibitem[Ho et~al.(2014)Ho, Slivkins, and Vaughan]{ho2014adaptive}
Chien-Ju Ho, Aleksandrs Slivkins, and Jennifer~Wortman Vaughan.
\newblock Adaptive contract design for crowdsourcing markets: Bandit algorithms
  for repeated principal-agent problems.
\newblock In \emph{Proceedings of the fifteenth ACM conference on Economics and
  computation}, pages 359--376, 2014.

\bibitem[Ho et~al.(2016)Ho, Slivkins, and Vaughan]{HoSV16}
Chien{-}Ju Ho, Aleksandrs Slivkins, and Jennifer~Wortman Vaughan.
\newblock Adaptive contract design for crowdsourcing markets: Bandit algorithms
  for repeated principal-agent problems.
\newblock \emph{Journal of Artificial Intelligence Research}, 55:\penalty0
  317--359, 2016.

\bibitem[Hoiem et~al.(2021)Hoiem, Gupta, Li, and
  Shlapentokh-Rothman]{hoiem2021learning}
Derek Hoiem, Tanmay Gupta, Zhizhong Li, and Michal Shlapentokh-Rothman.
\newblock Learning curves for analysis of deep networks.
\newblock In \emph{International conference on machine learning}, pages
  4287--4296. PMLR, 2021.

\bibitem[Holmstr\"om(1979)]{Holmstrom79}
Bengt Holmstr\"om.
\newblock Moral hazard and observability.
\newblock \emph{The Bell Journal of Economics}, 10\penalty0 (1):\penalty0
  74--91, 1979.

\bibitem[Holmstrom and Milgrom(1987)]{MilgromH87}
Bengt Holmstrom and Paul Milgrom.
\newblock Aggregation and linearity in the provision of intertemporal
  incentives.
\newblock \emph{Econometrica: Journal of the Econometric Society}, pages
  303--328, 1987.

\bibitem[Hutter(2021)]{hutter2021learning}
Marcus Hutter.
\newblock Learning curve theory.
\newblock \emph{arXiv preprint arXiv:2102.04074}, 2021.

\bibitem[Kaplan et~al.(2020)Kaplan, McCandlish, Henighan, Brown, Chess, Child,
  Gray, Radford, Wu, and Amodei]{kaplan2020scaling}
Jared Kaplan, Sam McCandlish, Tom Henighan, Tom~B Brown, Benjamin Chess, Rewon
  Child, Scott Gray, Alec Radford, Jeffrey Wu, and Dario Amodei.
\newblock Scaling laws for neural language models.
\newblock \emph{arXiv preprint arXiv:2001.08361}, 2020.

\bibitem[Kleinberg and Kleinberg(2018)]{kleinberg2018delegated}
Jon Kleinberg and Robert Kleinberg.
\newblock Delegated search approximates efficient search.
\newblock In \emph{Proceedings of the 2018 ACM Conference on Economics and
  Computation}, pages 287--302, 2018.

\bibitem[Kleinberg and Raghavan(2019)]{KleinbergR19}
Jon Kleinberg and Manish Raghavan.
\newblock How do classifiers induce agents to invest effort strategically?
\newblock In \emph{EC 2019}, pages 825--844, 2019.

\bibitem[Kleinberg and Raghavan(2020)]{KleinbergR20}
Jon~M. Kleinberg and Manish Raghavan.
\newblock Algorithmic classification and strategic effort.
\newblock \emph{SIGecom Exch.}, 18\penalty0 (2):\penalty0 45--52, 2020.

\bibitem[Lahaie(2008)]{lahaie2008take}
S{\'e}bastien Lahaie.
\newblock How to take the dual of a linear program.
\newblock \emph{Columbia University, New York}, 2008.

\bibitem[LeCun(1998)]{lecun1998mnist}
Yann LeCun.
\newblock The {MNIST} database of handwritten digits.
\newblock \emph{http://yann. lecun. com/exdb/mnist/}, 1998.

\bibitem[Lehmann et~al.(1986)Lehmann, Romano, and Casella]{lehmann2005testing}
Erich~Leo Lehmann, Joseph~P Romano, and George Casella.
\newblock \emph{Testing statistical hypotheses}, volume~3.
\newblock Springer, 1986.

\bibitem[Mahmood et~al.(2022)Mahmood, Lucas, Acuna, Li, Philion, Alvarez, Yu,
  Fidler, and Law]{mahmood2022much}
Rafid Mahmood, James Lucas, David Acuna, Daiqing Li, Jonah Philion, Jose~M
  Alvarez, Zhiding Yu, Sanja Fidler, and Marc~T Law.
\newblock How much more data do {I} need? {E}stimating requirements for
  downstream tasks.
\newblock In \emph{Proceedings of the IEEE/CVF Conference on Computer Vision
  and Pattern Recognition}, pages 275--284, 2022.

\bibitem[Martimort and Laffont(2009)]{martimort2009theory}
David Martimort and Jean-Jacques Laffont.
\newblock \emph{The theory of incentives: the principal-agent model}.
\newblock Princeton University Press, 2009.

\bibitem[Mohr et~al.(2022)Mohr, Viering, Loog, and van Rijn]{mohr2022lcdb}
Felix Mohr, Tom~J Viering, Marco Loog, and Jan~N van Rijn.
\newblock Lcdb 1.0: An extensive learning curves database for classification
  tasks.
\newblock \emph{Machine Learning and Knowledge Discovery in Databases,
  ECMLPKDD. p. accepted. Lecture Notes in Computer Science, Springer}, 2022.

\bibitem[Pedregosa et~al.(2011)Pedregosa, Varoquaux, Gramfort, Michel, Thirion,
  Grisel, Blondel, Prettenhofer, Weiss, Dubourg, Vanderplas, Passos,
  Cournapeau, Brucher, Perrot, and Duchesnay]{scikit-learn}
F.~Pedregosa, G.~Varoquaux, A.~Gramfort, V.~Michel, B.~Thirion, O.~Grisel,
  M.~Blondel, P.~Prettenhofer, R.~Weiss, V.~Dubourg, J.~Vanderplas, A.~Passos,
  D.~Cournapeau, M.~Brucher, M.~Perrot, and E.~Duchesnay.
\newblock Scikit-learn: Machine learning in {P}ython.
\newblock \emph{Journal of Machine Learning Research}, 12:\penalty0 2825--2830,
  2011.

\bibitem[Rigollet and H{\"u}tter(2015)]{rigollet2015high}
Phillippe Rigollet and Jan-Christian H{\"u}tter.
\newblock High dimensional statistics.
\newblock \emph{Lecture notes for course 18S997}, 813\penalty0 (814):\penalty0
  46, 2015.

\bibitem[Rosenfeld et~al.(2020)Rosenfeld, Rosenfeld, Belinkov, and
  Shavit]{rosenfeld2019constructive}
Jonathan~S. Rosenfeld, Amir Rosenfeld, Yonatan Belinkov, and Nir Shavit.
\newblock A constructive prediction of the generalization error across scales.
\newblock In \emph{8th International Conference on Learning Representations,
  {ICLR} 2020, Addis Ababa, Ethiopia, April 26-30, 2020}. OpenReview.net, 2020.
\newblock URL \url{https://openreview.net/forum?id=ryenvpEKDr}.

\bibitem[{Royal Swedish Academy of Sciences}(2016)]{Nobel16}
{Royal Swedish Academy of Sciences}.
\newblock {S}cientific background on the 2016 {N}obel {Pr}ize in {E}conomic
  {S}ciences, 2016.

\bibitem[Salanie(2017)]{Salanie17}
Bernard Salanie.
\newblock \emph{The Economics of Contracts: A Primer}.
\newblock MIT press, 2017.

\bibitem[Viering and Loog(2022)]{viering2022shape}
Tom Viering and Marco Loog.
\newblock The shape of learning curves: a review.
\newblock \emph{IEEE Transactions on Pattern Analysis and Machine
  Intelligence}, 2022.

\bibitem[Xia et~al.(2023)Xia, Artetxe, Zhou, Lin, Pasunuru, Chen, Zettlemoyer,
  and Stoyanov]{xia2022training}
Mengzhou Xia, Mikel Artetxe, Chunting Zhou, Xi~Victoria Lin, Ramakanth
  Pasunuru, Danqi Chen, Luke Zettlemoyer, and Veselin Stoyanov.
\newblock Training trajectories of language models across scales.
\newblock In \emph{Proceedings of the 61st Annual Meeting of the Association
  for Computational Linguistics (Volume 1: Long Papers), {ACL} 2023, Toronto,
  Canada, July 9-14, 2023}, pages 13711--13738. Association for Computational
  Linguistics, 2023.
\newblock \doi{10.18653/V1/2023.ACL-LONG.767}.

\bibitem[Zhao et~al.(2022)Zhao, Liang, and Zou]{ZhaoLZ22}
Xinyi Zhao, Weixin Liang, and James Zou.
\newblock Data budgeting for machine learning.
\newblock \emph{arXiv preprint arXiv:2210.00987}, 2022.

\bibitem[Zhu et~al.(2023)Zhu, Bates, Yang, Wang, Jiao, and Jordan]{ZhuEtAl22}
Banghua Zhu, Stephen Bates, Zhuoran Yang, Yixin Wang, Jiantao Jiao, and
  Michael~I. Jordan.
\newblock The sample complexity of online contract design.
\newblock In \emph{Proceedings of the 24th {ACM} Conference on Economics and
  Computation, {EC} 2023, London, United Kingdom, July 9-12, 2023}, page 1188.
  {ACM}, 2023.
\newblock \doi{10.1145/3580507.3597673}.

\end{thebibliography}

\appendix
\newpage
\section{Broader implications}
\label{sec:appendix_ethics}

In this paper, we set out to formalize and study the task of delegating classification through the lens of contract design.
Given that our work is largely motivated by the increasingly common practice of outsourcing learning tasks to specialized service providers,
we believe our algorithm, analysis, and empirical observations 
carry meaningful implications for practitioners and decision-makers alike.
At the same time,
it is important to remember that our work---as others considering economic aspects of learning---studies delegation in a simplified setting and under certain assumptions.
As such,
and since devising and agreeing to legally-binding contracts
can have concrete implications on real-world outcomes,
care should be taken when applying ideas or conclusions that derive from our work in practice.\looseness=-1

For example, consider that our formalism relies on the assumption that the learning agent is all-knowing and rational.
Yet in reality, agents must act under partial information,
face irreducible (and often unquantifiable) uncertainty,
and---being human---are subject to common behavioral biases.
It is unclear a-priori if and how our statements and conclusions carry over to this setting.
As another example, notice that our formalism considers a one-shot setting where a single contract between a single principal-agent pair is instantiated once.
But in reality, competition and long-term reputation may play a significant role in determining the agent's incentive structure,
and consequently, her behavior.
In such cases, na{\"\i}vely applying our framework
without careful inspection of the appropriate incentives on both ends
can result in sub-optimal contracts, possibly to the detriment of all involved parties.
Nonetheless, given that our work aims to take an initial step towards establishing delegated learning,
we view its extension to non-rational agents and temporal and competitive settings as intriguing future work.\looseness=-1

One message that our work conveys is that in delegation,
simplicity, in the form of threshold contracts, has merit.
This draws connections to other works that similarly argue for simplicity as an important and useful property of effective delegation mechanisms.
Our work shows that simple contracts are,
under reasonable conditions,
computationally feasible and theoretically optimal.
Economically, threshold contracts are practically appealing since they are easy to understand, communicate, and regulate.
Given that contracts are in essence social constructs,
we believe these properties are key for establishing threshold contracts
as effective building blocks for machine learning markets.

\section{Min-budget contract design -- deferred proofs}

\paragraph{Notation.}
To align with traditional contract design notation, in this section, we denote the action space by $\mathcal{A}=[n]=\Set{1,\dots,n}$, and the outcome space by $\OutcomeSpace=\Outcomes$.
We denote by $\DistOver{\mathcal{X}}$ the set of distributions over a given set $\mathcal{X}$.
For contract design problems, we denote the outcome probabilities by $F_{i,j}$, such that $F_i \in \DistOver{\OutcomeSpace}$ is the outcome distribution associated with action $i$, and also denote the cost of each action by $c_i$. Given $x\in\Reals$, we denote $x^+=\mathrm{ReLU}(x)=\max\Set{0,x}$.
We denote the indicator function by $\Indicator{\cdot}$,
the total variation distance between distributions $P,Q$ by $\TV{P}{Q}$ (see \Cref{def:total_variation_distance}),
and the survival function of $P\in\DistOver{\OutcomeSpace}$ by $\Survival{P}{\cdot}$ (see \Cref{def:survival_function}).

\paragraph{A note on individual rationality.}
In the contract design literature, a contract is said to be \emph{incentive compatible} (IC) with respect to some action $a^*$ if it satisfies $u_{a^*}(t) \ge u_a(t)$ for all actions $a\in\Actions$.
In the delegated classification setting, this corresponds to the constraint $n(t)=n^*$ in \cref{eq:min-budget_contract}.
As an additional constraint, contracts in which the agent's expected utility is always non-negative ($u_{n^*}(t)\ge 0$) are said to be \emph{individually rational} (IR). 
In the case of delegated classification, 
it is natural to assume that there always exists a valid action that the agent can take at zero cost --- for example, returning a dummy classifier that always abstains from prediction (thus having zero accuracy), or a classifier which makes a prediction at random. As such, contracts in the delegated classification setting can be assumed to be individually rational without loss of generality, as any action $n$ that is chosen by the agent has utility which is weakly larger than the utility of the zero-cost action, which is always non-negative.
Moreover, even in cases where a zero-loss action does not exist, we show that individual rationality (IR) can be attained in a straightforward manner, by adding the minimal cost $c_1$ to each entry of an incentive compatible (IC) contract ($t_j\mapsto t_j+c_1$):

\begin{claim}
    \label{cla:IR}
    Given an IC contract $t$ with $c_1>0$, the contract $t+c_1$ (add $c_1$ to every coordinate of the contract) is both IC and IR.
\end{claim}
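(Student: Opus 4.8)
The plan is to exploit one structural fact: adding a constant to a contract shifts the agent's expected utility for \emph{every} action by that same constant. Since incentive compatibility constrains only the relative ordering of the utilities $\Set{u_a(t)}_{a\in\Actions}$, a uniform shift cannot disturb it; and taking the shift to be exactly $c_1$ raises the overall level just enough to clear the individual-rationality threshold.

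First I would record the key identity. Fix any action $a\in\Actions$. Because $\sum_{j\in\Outcomes} F_{a,j}=1$, we have $\sum_{j} F_{a,j}\bigl(t(j)+c_1\bigr)=\sum_{j}F_{a,j}\,t(j)+c_1$, and therefore
\[
u_a(t+c_1)=\sum_{j}F_{a,j}\bigl(t(j)+c_1\bigr)-c_a=u_a(t)+c_1 .
\]
In words, the map $t\mapsto t+c_1$ translates the entire utility profile by the single constant $c_1$.

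Next I would verify the two properties in turn. For IC, let $a^*$ be the action implemented by $t$, so that $u_{a^*}(t)\ge u_a(t)$ for all $a\in\Actions$; adding $c_1$ to both sides of each inequality and applying the identity gives $u_{a^*}(t+c_1)\ge u_a(t+c_1)$ for all $a$, so $t+c_1$ is IC and implements the same $a^*$. For IR, I would instantiate the IC inequality at the minimum-cost action (action $1$, with cost $c_1$): using $u_{a^*}(t)\ge u_1(t)$ together with $u_1(t)=\sum_{j}F_{1,j}\,t(j)-c_1\ge -c_1$, where the final inequality holds because $t\ge 0$ forces the expected payment to be non-negative. Combining with the identity then yields $u_{a^*}(t+c_1)=u_{a^*}(t)+c_1\ge -c_1+c_1=0$, which is exactly individual rationality.

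I do not expect any genuine obstacle here; the only point deserving care is the direction of the bound used for IR. The shift must dominate the largest possible loss the agent can suffer under $t$, and this worst-case loss is bounded below by $-c_1$ precisely because payments are non-negative while the cheapest action costs $c_1$. This is why adding exactly $c_1$ suffices, and it explains why the hypothesis $c_1>0$ is the natural companion to the statement.
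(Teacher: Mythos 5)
Your proof is correct and takes essentially the same route as the paper's: both arguments rest on the IC constraint of $t$ against the least-costly action, the normalization $\sum_{j} F_{a,j}=1$ (so the shift moves every expected utility by exactly $c_1$), and non-negativity of payments $t\ge 0$ to bound $u_1(t)\ge -c_1$. If anything, yours is slightly more complete, since you state explicitly that the uniform shift preserves the IC ordering and hence the implemented action, a step the paper leaves implicit in its single chain of inequalities.
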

\begin{proof}
    An IC contract implementing action $a^*$ satisfies $u_{a^*}(t) \ge u_a(t)$, for all actions $a\in\Actions$. 
    In particular, this inequality holds in relation to the least-costly action, denoted by $1\in\Actions$. 
    Plugging the definition of the agent's expected utility (\cref{eq:agent_rational_choice}), 
    we obtain
    $
    u_{a^*}(t)
    \ge \expect{j\sim f_1}{t} - c_1$.
    Thus, under the mapping $t'_j= t_j+c_1$, it holds that:
    $$
    u_{a^*}(t')
    = u_{a^*}(t+c_1)
    \ge \expect{j\sim f_1}{t+c_1} - c_1 = \expect{j\sim f_1}{t} \ge 0 $$
    and therefore the contract $t'=t+c_1$ is individually rational and still implements action $a^*$.
\end{proof}

\subsection{Relation between budget-optimal and min-budget contracts}
\label{subsec:budget_optimal_and_min_budget}

\begin{proof}[Proof of \Cref{pro:why-min-budget}]

Given budget $B>0$, denote by $t_{\mathrm{BO}}$ the budget-optimal contract, and denote the action it implements by $n^*$. 
By definition, $t_\mathrm{BO}$ is a feasible solution to the min-budget contract design problem implementing action $n^*$. Denote by $t_\mathrm{MB}$ the corresponding optimal solution to the min-budget problem implementing action $n^*$ (\cref{eq:min-budget_contract}). $t_\mathrm{MB}$ implements the same action $n^*$ by definition, and satisfies $\Norm{t_\mathrm{MB}}_\infty \le \Norm{t_\mathrm{BO}}_\infty \le B$ due to the optimization objective. Hence, $t_\mathrm{MB}$ is a budget-optimal contract for the given budget $B$, which is also min-budget.
\end{proof}

\paragraph{Iterative min-budget.}
To find the budget-optimal contract using iterative applications of \cref{eq:min-budget_contract}, we observe that any budget-limited contract $t:\OutcomeSpace \to {[0,B]}$ has bounded expected pay $\expect{f_n}{t}\le B$ for any distribution $f_n$.
Hence, actions $n'$ with cost $c_{n'}>B$ cannot be implemented using budget $B$, as the agent's utility $u_{n'}=\expect{f_{n'}}{t}-c_{n'}<0$ will be smaller than the utility of the zero-cost action $u_0=\expect{f_n}{t}-0\ge 0$.
Define the reduced action set: 
$$
\Actions'=\Set{n\in\Actions\mid c_n\le B \wedge \text{$n$ is implementable}}
$$
$\Actions'$ is finite when $\Actions$ is finite, or when the data cost is unbounded and the learning curve is monotone. 
To find $n^*$ within this space, go over all $n\in\Actions'$, calculate the minimal budget $B_n$ required for implementation, and take $\argmin_{n\in\Actions'} B_n$. This is possible within a finite number of steps.

\subsection{The min-budget linear program and equivalent forms}
\label{subsec:min_budget_lp}

The min-budget contract (\cref{eq:min-budget_contract}) implementing action $i\in\Actions$ is given by the MIN-BUDGET linear program:

\begin{equation}
\label{eq:min_budget_lp}
\begin{aligned}
    &\min_{t\in\Reals_{\ge 0}^\Size{\OutcomeSpace}, B\in\Reals_{\ge 0}}
    B
    \\
    & \mathrm{s.t.}
    \\
    & \forall j\in \OutcomeSpace: t_j \le B
    & \BudgetConstraintName
    \\
    &\forall i' \neq i: \sum_{j\in\OutcomeSpace} F_{i',j}t_j - c_{i'} \le \sum_{j\in\OutcomeSpace} F_{i,j} t_j - c_{i}
    & \ICConstraintName
\end{aligned}
\end{equation}

The dual of the min-budget LP is given by:

\begin{claim}
\label{claim:min_budget_lp_dual_proof}
The dual linear program of \Cref{eq:min_budget_lp} is given by:
\begin{equation}
\label{eq:min_budget_lp_dual}
\begin{aligned}
    &\max_{\lambda\in\Reals_{\ge 0}^{n-1}, \mu\in\Reals_{\ge 0}^\Size{\OutcomeSpace}}
    \sum_{i'\neq i} \left(c_i - c_{i'}\right) \lambda_{i'}
    \\
    & \mathrm{s.t.}
    \\
    & \forall j\in \OutcomeSpace: \sum_{i'\neq i}
    \left( F_{i,j} - F_{i',j} \right) \lambda_{i'} \le \mu_j
    \\
    &\sum_{j\in\OutcomeSpace} \mu_j \le 1
\end{aligned}
\end{equation}
\end{claim}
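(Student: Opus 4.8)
The plan is to derive the dual by the standard Lagrangian recipe for linear programs, treating the primal MIN-BUDGET LP (\Cref{eq:min_budget_lp}) as a minimization of $B$ over the nonnegative variables $t\in\Reals_{\ge 0}^{\Size{\OutcomeSpace}}$ and $B\ge 0$. First I would orient both constraint families uniformly: rewrite each budget constraint $t_j\le B$ as $B-t_j\ge 0$, and each IC constraint as $\sum_{j} (F_{i,j}-F_{i',j})\,t_j \ge c_i-c_{i'}$ (subtract the action-$i$ payoff from both sides and flip signs). I would then attach a multiplier $\mu_j\ge 0$ to the $j$-th budget constraint and $\lambda_{i'}\ge 0$ to the IC constraint for each competing action $i'\neq i$. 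These are exactly the sign conventions ($\ge$ constraints in a minimization, nonnegative multipliers) that produce a maximization dual with the correct inequality directions.

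Next I would form the Lagrangian $L(t,B,\lambda,\mu)=B-\sum_{j}\mu_j(B-t_j)-\sum_{i'\neq i}\lambda_{i'}\big(\sum_{j}(F_{i,j}-F_{i',j})\,t_j-(c_i-c_{i'})\big)$ and compute $\inf_{t\ge 0,\,B\ge 0} L$, which yields the dual objective. Grouping by primal variable, the coefficient of $B$ is $1-\sum_{j}\mu_j$; since $B$ ranges over $\Reals_{\ge 0}$, the infimum is finite precisely when $1-\sum_{j}\mu_j\ge 0$, giving the dual constraint $\sum_{j}\mu_j\le 1$ (it is the nonnegativity of $B$, rather than $B$ being free, that turns this into an inequality). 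The coefficient of each $t_j$ is $\mu_j-\sum_{i'\neq i}\lambda_{i'}(F_{i,j}-F_{i',j})$, and finiteness of the infimum over $t_j\ge 0$ forces it to be nonnegative, which is exactly the per-outcome dual constraint $\sum_{i'\neq i}(F_{i,j}-F_{i',j})\lambda_{i'}\le\mu_j$.

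Finally, whenever these feasibility conditions hold the variable-dependent terms vanish at the infimum and only the constant $\sum_{i'\neq i}(c_i-c_{i'})\lambda_{i'}$ survives, which I would read off as the objective to be maximized over $\lambda\ge 0,\ \mu\ge 0$ subject to the two derived constraints, matching \Cref{eq:min_budget_lp_dual} exactly. The computation is entirely mechanical, so the main thing to watch is the sign bookkeeping in the two rearrangements — in particular keeping the target action $i$ on the correct side of the IC inequality so the objective emerges as $c_i-c_{i'}$ rather than its negation. As a sanity check I would verify the dimension correspondence: the single coupling variable $B$ dualizes to the scalar constraint $\sum_{j}\mu_j\le 1$, while the $\Size{\OutcomeSpace}$ payment variables $t_j$ dualize to the $\Size{\OutcomeSpace}$ per-outcome constraints, and the $\Size{\Actions}-1$ IC constraints supply the $\Size{\Actions}-1$ dual variables $\lambda_{i'}$.
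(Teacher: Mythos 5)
Your proposal is correct and produces exactly the dual in \Cref{eq:min_budget_lp_dual}, but by a genuinely different route than the paper. The paper's proof is a canonical-form reduction: it rewrites \Cref{eq:min_budget_lp} as $\min_{x\ge 0}\, c^T x$ subject to $Cx\le d$ with $x=(t,B)$, explicitly assembles the block matrix $C$ (IC rows on top, budget rows $t_j - B \le 0$ below) together with the vectors $c,d$, and then reads the dual off the cited pairing $\max_{y\ge 0}\, -d^T y$ subject to $C^T y\ge -c$. You instead derive the dual from first principles via the Lagrangian, obtaining each dual constraint as the finiteness condition for $\inf_{t\ge 0,\,B\ge 0} L$: the coefficient of $B$ gives $\sum_j\mu_j\le 1$ and the coefficient of $t_j$ gives the per-outcome constraint. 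The two computations coincide term for term (your finiteness conditions are precisely the rows of $C^T y\ge -c$), so nothing is lost in rigor; what your version buys is self-containedness (no appeal to an external dual formula) and transparency about \emph{why} each constraint arises --- in particular, your observation that $\sum_j \mu_j\le 1$ comes out as an inequality precisely because $B\ge 0$ is a sign-constrained variable rather than a free one is exactly right, and is left implicit in the paper's matrix bookkeeping. What the paper's route buys is that all sign conventions are localized in the single matrix $C$, which makes the derivation easy to audit for transcription errors; your closing dimension check matches the structure of its $C\in\Reals^{(n-1+\Size{\OutcomeSpace})\times(\Size{\OutcomeSpace}+1)}$ exactly.
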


\begin{proof}
We take the dual by translating the optimization problem into canonical form. The canonical form we target:
\begin{equation*}
    \min_{x\ge0} c^T x 
    \quad\text{s.t.}\quad
    Cx \le d
\end{equation*}
and its dual, as given by \citet{lahaie2008take}, is:
\begin{equation*}
    \max_{y\ge0} -d^T y 
    \quad\text{s.t.}\quad
    C^T y \ge -c
\end{equation*}
To translate \Cref{eq:min_budget_lp}, we note that in our case the components $c$,$C$,$d$ are given by:

\begin{equation*}
    c^T=
    \begin{pmatrix}
    0 & \dots & 0 & 1
    \end{pmatrix}
    \in \Reals^{\Size{\Omega}+1}
\end{equation*}

\begin{equation*}
    C=
    \left(
    \begin{array}{c|c}
    \begin{matrix}
    F_{1,0}-F_{i,0} & \dots & & & F_{1,\Size{\Omega}}-F_{i,\Size{\Omega}} \\
    \vdots & \ddots & & & \vdots \\
    & & F_{i',j}-F_{i,j} & & \\
    & & & \ddots & \\
    F_{n,0}-F_{i,0} & & & & F_{n,m}-F_{i,m} \\
    \end{matrix} 
    & 
    \begin{matrix}
        0\\
        \vdots\\
        \\
        \\
        0
    \end{matrix}
    \\ \hline
    I_{m+1}
    &
    \begin{matrix}
    -1 \\
    \vdots \\
    \\
    -1
    \end{matrix} \\
    \end{array}
    \right)
    \in \Reals^{(n-1+\Size{\Omega})\times (\Size{\Omega}+1)}
\end{equation*}

\begin{equation*}
    d^T=
    \begin{pmatrix}
    c_1-c_i & \dots & c_n-c_i & \vline & 0 & \dots & 0
    \end{pmatrix} \in \Reals^{n-1+\Size{\Omega}}
\end{equation*}

To simplify formulation, we denote the dual optimization variable $y$ as follows:
\begin{equation*}
    y^T=\begin{pmatrix}
        \lambda_1 & \dots & \lambda_{i-1} & \lambda_{i+1} & \dots \lambda_n 
        & \vline &
        \mu_0 & \dots & \mu_m
    \end{pmatrix} \in \Reals^{n-1+\Size{\Omega}}
\end{equation*}

Under this formulation, the dual's objective is:
\begin{align}
    \label{eq:min_budget_dual_proof_objective}
    -d^T y = \sum_{i\neq i'} \left(c_i - c_{i'}\right) \lambda_{i'}
\end{align}
The constraints given by the first $m$ rows of $C^T$ are:
\begin{equation*}
    \forall j\in\OutcomeSpace: \sum_{i'\neq i} \left(F_{i',j}-F_{i,j}\right) +\mu_j \ge 0 
\end{equation*}
and equivalently:
\begin{equation}
    \label{eq:min_budget_dual_proof_j_constraint}
    \forall j\in\OutcomeSpace: \sum_{i'\neq i} \left(F_{i,j}-F_{i',j}\right) \le \mu_j
\end{equation}
The constraint corresponding to the last row of $C^T$ is given by:
\begin{equation*}
    \sum_{j\in\OutcomeSpace} -\mu_m \ge -1
\end{equation*}
and equivalently:
\begin{equation}
    \label{eq:min_budget_dual_proof_mu_constraint}
    \sum_{j\in\OutcomeSpace} \mu_m \le 1
\end{equation}

Combining equations (\ref{eq:min_budget_dual_proof_objective}, \ref{eq:min_budget_dual_proof_j_constraint}, \ref{eq:min_budget_dual_proof_mu_constraint}) yields the linear program given by \Cref{eq:min_budget_lp_dual}.
\end{proof}

To map between contracts and hypothesis tests, we introduce the following variable transformation:
\begin{definition}[Statistical representation of contracts]
\label{def:variable_transformation}
For a given contract $t:\OutcomeSpace\to\NonNegativeReals$, denote $B=\max_j t_j$. The statistical representation of $t$ is given by:
\begin{equation*}
t_j = \phi_j \beta^{-1}
\end{equation*}
where $\phi_j\in[0,1]$ and $\beta=B^{-1}$.
\end{definition}
Note that the transformation $(t,B)\mapsto(\phi,\beta)$ is non-linear, and well-defined for all $B>0$.
Under this variable transformation, the MIN-BUDGET transforms to an equivalent linear program:

\begin{lemma}
    \label{lemma:min_budget_statistical_lp}
    For a feasible design problem, the min-budget contract design LP (\Cref{eq:min_budget_lp}) is equivalent to:
    \begin{equation}
    \label{eq:min_budget_statistical_lp}
    \begin{aligned}
        &\max_{\phi\in[0,1]^\Size{\Omega}, \beta \in\NonNegativeReals}
        \beta
        \\
        & \mathrm{s.t.}
        \\
        &\forall i' \neq i: 
        \sum_{j\in\OutcomeSpace} F_{i,j} (1-\phi_j)
        +
        \sum_{j\in\OutcomeSpace} F_{i',j} \phi_j
        \le
        1-(c_i-c_{i'})\beta
    \end{aligned}
    \end{equation}
\end{lemma}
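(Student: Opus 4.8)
The plan is to prove the equivalence directly through the nonlinear change of variables of \Cref{def:variable_transformation}, namely $t_j = \phi_j \beta^{-1}$ with $\beta = B^{-1}$, equivalently $\phi_j = t_j/B$ and $\beta = 1/B$. Since this substitution divides by $B$, the first thing to establish is that it is legitimate: I would argue that for a feasible design problem in which the target action $i$ is not the (trivially implementable) cheapest action, the optimal budget satisfies $B>0$, so both programs may be restricted to the regime $B>0 \Leftrightarrow \beta>0$ finite, on which $(t,B)\mapsto(\phi,\beta)$ is a bijection onto its image. In the degenerate case where $i$ is the cheapest action, $B^\ast=0$ and the transformed program is unbounded with $\beta^\ast=\infty$, which is still consistent with $\beta^\ast=1/B^\ast$; this case I would dispose of in a single line.

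Next I would transform the two families of constraints. The $\BudgetConstraintName$ constraints are the easy half: combined with nonnegativity $t_j\ge 0$, the inequalities $t_j\le B$ become exactly $\phi_j\in[0,1]$ after dividing by $B>0$, so they are absorbed into the box domain $\phi\in[0,1]^{\Size{\OutcomeSpace}}$ of \Cref{eq:min_budget_statistical_lp}. For the $\ICConstraintName$ constraints I would substitute $t_j=\phi_j/\beta$ and multiply through by $\beta>0$, which (crucially) clears the denominator and renders the constraint \emph{linear} in $(\phi,\beta)$, yielding $\sum_{j\in\OutcomeSpace}(F_{i',j}-F_{i,j})\phi_j \le (c_{i'}-c_i)\beta$. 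The one nontrivial algebraic step is to recognize this as the stated constraint: using that each $F_i$ is a distribution, so $\sum_{j\in\OutcomeSpace}F_{i,j}=1$, I would rewrite its left-hand side as $\sum_{j\in\OutcomeSpace} F_{i,j}(1-\phi_j)+\sum_{j\in\OutcomeSpace} F_{i',j}\phi_j - 1$, which rearranges exactly into $\sum_{j\in\OutcomeSpace} F_{i,j}(1-\phi_j)+\sum_{j\in\OutcomeSpace} F_{i',j}\phi_j \le 1-(c_i-c_{i'})\beta$.

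Finally I would match the objectives: minimizing $B$ over the feasible region is equivalent to maximizing $\beta=1/B$, since $B\mapsto 1/B$ is a strictly decreasing bijection on $(0,\infty)$; hence the optimum of \Cref{eq:min_budget_statistical_lp} is $\beta^\ast=1/B^\ast$, and optimal solutions correspond via $t_j^\ast=\phi_j^\ast/\beta^\ast$. Assembling the box-constraint reduction, the $\ICConstraintName$ rewriting, and the objective flip establishes that the two programs are equivalent.

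The step I expect to require the most care is the legitimacy of the substitution at the boundary, not the algebra: the map is nonlinear, so I must verify that multiplying the $\ICConstraintName$ constraints by $\beta$ neither loses nor introduces feasible points, which hinges on having $\beta>0$ (equivalently $B<\infty$) together with $B>0$. Establishing $B>0$ for a nondegenerate target action, and cleanly separating out the cheapest-action corner case, is the only genuine obstacle; the constraint-by-constraint verification and the objective flip are routine once positivity is in hand.
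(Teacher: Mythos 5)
Your proposal is correct and follows essentially the same route as the paper's proof: the nonlinear substitution $t_j=\phi_j\beta^{-1}$, $B=\beta^{-1}$ of \Cref{def:variable_transformation}, constraint-by-constraint translation of \BudgetConstraintName{} into the box domain and of \ICConstraintName{} via $\sum_j F_{i,j}=1$, and the objective flip $\min B \Leftrightarrow \max\beta$. Your extra attention to the positivity of $B$ and the cheapest-action corner case is a welcome refinement of a point the paper only addresses implicitly (in the remark after \Cref{def:variable_transformation} that the map is well-defined for $B>0$), but it does not change the substance of the argument.
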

\begin{proof}
Given \Cref{eq:min_budget_lp}, define $\phi\in [0,1]^\Size{\OutcomeSpace}$, $\beta \ge 0$ according to \cref{def:variable_transformation}:
\begin{equation}
\label{eq:statistical_lp_variable_transformation}
\begin{aligned}
    B &= \beta^{-1}
    \\
    t_j &= \phi_j B = \phi_j \beta^{-1}
\end{aligned}
\end{equation}

Under the transformation defined by \Cref{eq:statistical_lp_variable_transformation}, the $\BudgetConstraintName{}$ constraint in \Cref{eq:min_budget_lp} transforms as follows:
\begin{equation}    
\label{eq:statistical_lp_budget_constraint_transform}
\begin{aligned}
t_j \le B
&\Leftrightarrow
\phi_j \beta^{-1} \le \beta^{-1}
\\
&\Leftrightarrow
\phi_j \le 1
\end{aligned}
\end{equation}
and the $\ICConstraintName{}$ constraint in \Cref{eq:min_budget_lp} transforms as:
\begin{equation}
\begin{aligned}
\label{eq:statistical_lp_ic_constraint_transform}
\sum_j F_{i,j} t_j - c_i 
\ge
\sum_j F_{i',j} t_j - c_{i'}
&\Leftrightarrow
\sum_j \left(F_{i,j} - F_{i',j}\right)\phi_j \beta^{-1}
\ge
c_i - c_{i'}
\\
&\Leftrightarrow
\sum_j F_{i,j} \phi_j - \sum_j F_{i',j} \phi_j
\ge
\left(c_i - c_{i'}\right)\beta
\\
&\Leftrightarrow
1 - \sum_j F_{i,j} (1-\phi_j) - \sum_j F_{i',j} \phi_j
\ge
\left(c_i - c_{i'}\right)\beta
\\
&\Leftrightarrow
\sum_j F_{i,j} (1-\phi_j) + \sum_j F_{i',j} \phi_j
\le
1-\left(c_i - c_{i'}\right)\beta
\end{aligned}
\end{equation}
where the first equivalence is by \Cref{eq:statistical_lp_variable_transformation}, and the third equivalence is valid as $\sum_j F_{i',j}=1$ for all $i'$. Finally, the objective of \Cref{eq:min_budget_lp} transforms as:
\begin{align}
\label{eq:statistical_lp_ic_objective_transform}
\min B \Leftrightarrow \max \beta
\end{align}

Combining equations (\ref{eq:statistical_lp_budget_constraint_transform}, \ref{eq:statistical_lp_ic_constraint_transform}, \ref{eq:statistical_lp_ic_objective_transform}) yields the linear program in \Cref{eq:min_budget_statistical_lp}.
\end{proof}

\subsection{Relation to min-pay contract design}
\label{subsec:relation_to_min_pay}

Min-pay contract design aims to design a contract which minimizes the expected pay under the implemented action $n^*$:
\begin{equation}
\label{eq:min-pay_contract}
t^*=\argmin\nolimits_{t} \expect{j\sim f_{n^*}}{t_j}
\quad \text{s.t.} \quad
n(t)=n^*
\end{equation}
In contrast to the min-budget contract (\cref{eq:min-budget_contract}), the $\Norm{t}_\infty$ objective measuring maximal pay is replaced with the $\expect{j\sim f_{n^*}}{t_j}$ objective measuring expected pay. 
\Cref{eq:min-pay_contract} is equivalent to the MIN-PAY linear program:
\begin{equation}
\label{eq:min_pay_lp}
\begin{aligned}
    &\min_{t\in\Reals_{\ge 0}^\Size{\OutcomeSpace}}
    \sum_{j\in\OutcomeSpace} F_{i,j} t_j
    \\
    & \mathrm{s.t.}
    \\
    &\forall i' \neq i: \sum_{j\in\OutcomeSpace} F_{i',j}t_j - c_{i'} \le \sum_{j\in\OutcomeSpace} F_{i,j} t_j - c_{i}
    & \ICConstraintName
\end{aligned}
\end{equation}

\subsubsection{Characterization of implementability}
The implementatbility of min-pay contracts is characterized in \citet{dutting2019simple}. We cite the main result for completeness:

\begin{proposition}[Min-pay implementability; \citep{dutting2019simple}, A.2]
\label{prop:min_pay_feasibility_dutting}
An action $i\in\Actions$ is implementable (up to tie-breaking) if and only if there is no convex combination of the other actions that results in the same distribution $f_i = \sum_{i'\neq i} \alpha_{i'} f_{i'}$, 
but lower cost $c_i > \sum_{i'\neq i} \alpha_{i'} c_{i'}$.
\end{proposition}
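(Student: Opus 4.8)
The plan is to recognize that action $i$ being implementable (up to tie-breaking) is precisely feasibility of the incentive-compatibility constraint set of the MIN-PAY LP (\Cref{eq:min_pay_lp}): there exists $t\in\Reals_{\ge 0}^{\Size{\OutcomeSpace}}$ with $\sum_j (F_{i,j}-F_{i',j})\,t_j \ge c_i - c_{i'}$ for every $i'\neq i$. I would then characterize feasibility of this linear system by a theorem of alternatives, and finally exploit the fact that each $F_{i'}$ is a \emph{probability} distribution to turn the resulting dual certificate into the stated convex combination.

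First I would dispatch the easy direction. Suppose $f_i=\sum_{i'\neq i}\alpha_{i'} f_{i'}$ for some $\alpha$ in the simplex with $\sum_{i'\neq i}\alpha_{i'}c_{i'} < c_i$. Then for \emph{any} contract $t\ge 0$, linearity of expected pay gives $u_i(t)=\sum_{i'\neq i}\alpha_{i'}\,u_{i'}(t) - \big(c_i-\sum_{i'\neq i}\alpha_{i'}c_{i'}\big)$, and since the parenthesized term is strictly positive we get $u_i(t) < \max_{i'\neq i} u_{i'}(t)$. Thus $i$ is strictly dominated and cannot be a best response under any contract, even up to tie-breaking; this proves the ``only if'' direction by contraposition.

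For the converse I would apply LP duality to the feasibility system, reusing the dualization recipe already carried out in \Cref{claim:min_budget_lp_dual_proof}. Writing the system as $At\ge b$, $t\ge 0$ with $A_{i',j}=F_{i,j}-F_{i',j}$ and $b_{i'}=c_i-c_{i'}$, the theorem of alternatives states that infeasibility is equivalent to the existence of $\lambda\ge 0$ with $A^{\!\top}\lambda\le 0$ and $b^{\!\top}\lambda>0$. The condition $b^{\!\top}\lambda>0$ forces $\Lambda:=\sum_{i'\neq i}\lambda_{i'}>0$, so I may normalize $\alpha_{i'}:=\lambda_{i'}/\Lambda$ to land in the simplex. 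Dividing the two certificate conditions by $\Lambda$ yields $f_{i,j}\le \sum_{i'\neq i}\alpha_{i'} f_{i',j}$ for every outcome $j$, together with $\sum_{i'\neq i}\alpha_{i'}c_{i'} < c_i$.

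The hard part---and the step I would be most careful about---is upgrading the coordinatewise inequality $f_{i,j}\le \sum_{i'}\alpha_{i'}f_{i',j}$ to the \emph{equality} $f_i=\sum_{i'}\alpha_{i'}f_{i'}$ demanded by the statement. Here the probabilistic structure does the work: both $f_i$ and $g:=\sum_{i'}\alpha_{i'}f_{i'}$ are probability distributions, so summing the inequality over $j$ gives $1\le 1$; an entrywise inequality between nonnegative vectors of equal total mass must hold with equality in every coordinate, hence $g=f_i$. This recovers exactly a convex combination reproducing $f_i$ at strictly lower cost, completing the contrapositive. I would close by remarking that the limited-liability constraint $t\ge 0$ is precisely what produces the weak inequality $A^{\!\top}\lambda\le 0$ in the certificate rather than an equality, and that the strictness $b^{\!\top}\lambda>0$---equivalently the strict cost gap $c_i>\sum_{i'}\alpha_{i'}c_{i'}$---is what the ``up to tie-breaking'' qualifier encodes: a convex combination matching both $f_i$ and $c_i$ only produces a tie, which favorable tie-breaking resolves in favor of $i$.
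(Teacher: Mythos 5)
Your proof is correct, but there is nothing in the paper to compare it against: the paper states \Cref{prop:min_pay_feasibility_dutting} purely as an imported result, citing \citet{dutting2019simple} (Appendix A.2) ``for completeness,'' and gives no proof of its own. Your argument---identifying implementability up to tie-breaking with feasibility of the weak IC system $\sum_j (F_{i,j}-F_{i',j})t_j \ge c_i - c_{i'}$, $t\ge 0$; dispatching the ``only if'' direction by the strict-domination identity $u_i(t)=\sum_{i'\neq i}\alpha_{i'}u_{i'}(t)-\bigl(c_i-\sum_{i'\neq i}\alpha_{i'}c_{i'}\bigr)$; and obtaining the converse from a Farkas certificate $\lambda\ge 0$, $A^{\top}\lambda\le 0$, $b^{\top}\lambda>0$, normalized to a simplex vector---is essentially the standard LP-duality proof found in the cited source. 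The step you flagged as delicate is handled correctly: since both $f_i$ and $g=\sum_{i'\neq i}\alpha_{i'}f_{i'}$ have total mass one, the coordinatewise inequality $f_{i,j}\le g_j$ forces $f_i=g$, which is exactly what turns the dual certificate into the stated convex combination. Your closing observation about tie-breaking (a combination matching both $f_i$ and $c_i$ yields only a tie, which favorable tie-breaking resolves) is also accurate and explains the role of the strict cost inequality in the statement.
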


We leverage to \cref{prop:min_pay_feasibility_dutting} to characterize the implementability of min-budget contracts. This is possible due to the following connection:
\begin{claim}[Implementability equivalence]
\label{claim:min_pay_feasibility_equivalence}
A contract $t$ is feasible solution of MIN-BUDGET if and only if it is a feasible solution of MIN-PAY.
\end{claim}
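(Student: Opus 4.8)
The plan is to exploit the fact that the feasible region of a linear program depends only on its constraints, not on its objective, and to observe that MIN-BUDGET (\Cref{eq:min_budget_lp}) and MIN-PAY (\Cref{eq:min_pay_lp}) share identical $\ICConstraintName$ constraints together with the same non-negativity requirement $t\in\Reals_{\ge 0}^{\Size{\OutcomeSpace}}$. The only structural difference in the constraint sets is that MIN-BUDGET carries the auxiliary variable $B\ge 0$ and the associated $\BudgetConstraintName$ constraints $t_j\le B$ for all $j\in\OutcomeSpace$. I would therefore argue that projecting the MIN-BUDGET feasible region onto its $t$-coordinates recovers exactly the MIN-PAY feasible region.

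First I would fix the reading of the statement: a contract $t\in\Reals_{\ge 0}^{\Size{\OutcomeSpace}}$ is a feasible solution of MIN-BUDGET precisely when there exists some $B\ge 0$ for which the pair $(t,B)$ satisfies both $\BudgetConstraintName$ and $\ICConstraintName$. For the forward direction, I would take any $t$ feasible for MIN-PAY, so $t$ satisfies the $\ICConstraintName$ constraints, and exhibit the witness $B=\Norm{t}_\infty=\max_{j\in\OutcomeSpace} t_j$; since $t\ge 0$ this $B$ is non-negative, and by construction $t_j\le B$ for every $j$, so $(t,B)$ is feasible for MIN-BUDGET. For the converse, I would take any $t$ feasible for MIN-BUDGET, so that some $(t,B)$ satisfies all its constraints; discarding $B$ and the now-irrelevant $\BudgetConstraintName$ constraints, the point $t$ still satisfies $t\ge 0$ and the $\ICConstraintName$ constraints, which are exactly the feasibility conditions for MIN-PAY.

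There is no genuine obstacle here: the entire content of the claim is that the $\BudgetConstraintName$ constraint never restricts $t$ on its own, because $B$ is a free auxiliary variable that can always be raised to $\Norm{t}_\infty$. The only point warranting a little care is framing feasibility for MIN-BUDGET as an existential statement over the variable $B$ (rather than over the pair $(t,B)$), after which both inclusions follow immediately. This equivalence is exactly what is needed to transport the implementability characterization of \Cref{prop:min_pay_feasibility_dutting} from MIN-PAY to MIN-BUDGET.
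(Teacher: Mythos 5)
Your proposal is correct and follows essentially the same route as the paper's proof: both directions hinge on the shared \ICConstraintName{} constraints, with the witness $B=\max_{j}t_j$ supplied to recover \BudgetConstraintName{} feasibility in the MIN-PAY-to-MIN-BUDGET direction. Your explicit framing of MIN-BUDGET feasibility as existential over the auxiliary variable $B$ is a slightly more careful phrasing of the same argument, not a different one.
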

\begin{proof}
A contract $t$ which satisfies MIN-BUDGET (\cref{eq:min_budget_lp}) satisfies the \ICConstraintName{} constraint, and thus also satisfies MIN-PAY (\cref{eq:min_pay_lp}). 
Conversely, a contract $t$ which satisfies MIN-PAY satisfies the \ICConstraintName{} constraint. Set $B=\max_j t_j$ and obtain a feasible solution to MIN-BUDGET.
\end{proof}

\subsubsection{Min-pay contracts under MLRP}

The optimal min-pay contracts under the MLRP assumption are characterized in \citet{dutting2019simple}:

\begin{proposition}[Optimal min-pay contract under MLRP; \citep{dutting2019simple}, Lemma 7]
\label{prop:min_pay_optimal_contract_under_mlrp}
Consider a contract design setting for which MLRP holds.
If the highest-cost action $n$ is implementable, then there is a min-pay contract has a single nonzero-payment, which is rewarded for the highest outcome $m$.
\end{proposition}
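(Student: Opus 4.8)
The plan is to exhibit an explicit single-payment contract, verify that it implements the top action $n$, and then certify its optimality by comparing the MIN-PAY LP \cref{eq:min_pay_lp} against a one-constraint relaxation. Throughout I write $i=n$ for the implemented action and use the fact that MLRP means the likelihood ratio $F_{i',j}/F_{i,j}$ is nondecreasing in $j$ whenever $i'>i$. In particular MLRP implies first-order stochastic dominance, so the top-outcome mass $F_{i,m}=\Survival{F_i}{m-1}$ is nondecreasing in $i$, and (for distinct distributions) $F_{i,m}<F_{n,m}$ for every $i<n$.

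First I would define the candidate contract. Set
\[
\tau=\max_{i<n}\frac{c_n-c_i}{F_{n,m}-F_{i,m}},\qquad t^*_m=\tau,\quad t^*_j=0\ \ (j<m),
\]
where the maximum ranges over $i<n$; strict dominance at the top outcome makes this set nonempty and $\tau$ finite. Let $i^*$ attain the maximum. The $\ICConstraintName$ constraint of an action $i$ against a contract paying only at $m$ reads $\tau\,(F_{n,m}-F_{i,m})\ge c_n-c_i$, which holds for all $i$ precisely because $\tau$ is a maximum; hence $t^*$ implements $n$ and is feasible for \cref{eq:min_pay_lp}. That some feasible contract exists at all is exactly the implementability hypothesis, invoked through \cref{prop:min_pay_feasibility_dutting} and \cref{claim:min_pay_feasibility_equivalence}.

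The heart of the argument is optimality, which I would establish by relaxation. Dropping every IC constraint except the one against $i^*$ enlarges the feasible set, so the relaxed optimum lower-bounds the MIN-PAY value. The relaxed problem is the single-constraint LP $\min_{t\ge 0}\sum_j F_{n,j}t_j$ subject to $\sum_j(F_{n,j}-F_{i^*,j})t_j\ge c_n-c_{i^*}$, whose optimum concentrates all payment on the outcome $j$ maximizing the \emph{bang-per-buck} ratio $(F_{n,j}-F_{i^*,j})/F_{n,j}=1-F_{i^*,j}/F_{n,j}$; outcomes with $F_{n,j}\le F_{i^*,j}$ are never paid, since they consume objective while only loosening the constraint. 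Because $n>i^*$, MLRP makes $F_{i^*,j}/F_{n,j}$ nonincreasing in $j$, so this ratio peaks at $j=m$, and the relaxed optimum is attained at $t_m=(c_n-c_{i^*})/(F_{n,m}-F_{i^*,m})=\tau$ with value $\tau F_{n,m}$. Since the feasible contract $t^*$ attains exactly this value, and the relaxation lower-bounds the true optimum, $t^*$ is optimal; this is also consistent with \cref{pro:opt-for-2-actions} specialized to the pair $(n,i^*)$.

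I expect the bang-per-buck step to be the main obstacle: it must be justified as a vertex argument for a single-constraint LP, confirming both that payments on the \emph{wrong} outcomes ($F_{n,j}\le F_{i^*,j}$) never help and that MLRP genuinely forces the optimal support onto the top outcome $m$. The remaining care is bookkeeping around degeneracies --- ties such as $F_{i,m}=F_{n,m}$ or $c_i=c_n$ --- which I would rule out using strict stochastic dominance at $m$ under MLRP together with the implementability assumption that guarantees a finite, attained optimum.
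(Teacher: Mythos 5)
The paper does not actually prove this proposition: it is imported verbatim from \citet{dutting2019simple} (their Lemma~7) ``for completeness,'' so there is no in-paper proof to compare against, and your argument must stand on its own. It does. The sandwich structure is sound: the single-payment contract $t^*_m=\tau=\max_{i<n}(c_n-c_i)/(F_{n,m}-F_{i,m})$ satisfies every \ICConstraintName{} constraint precisely because $\tau$ is the maximum of those ratios, hence is feasible for \Cref{eq:min_pay_lp}; dropping all constraints except the one for the maximizer $i^*$ gives a single-constraint LP over a pointed polyhedron whose feasible vertices are exactly the single-payment contracts on outcomes with $F_{n,j}>F_{i^*,j}$, with vertex value $(c_n-c_{i^*})\,F_{n,j}/(F_{n,j}-F_{i^*,j})=(c_n-c_{i^*})/(1-F_{i^*,j}/F_{n,j})$; MLRP makes $F_{i^*,j}/F_{n,j}$ nonincreasing in $j$, so this is minimized at $j=m$, giving the lower bound $\tau F_{n,m}$, which $t^*$ attains. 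Worth noting: your two key moves are exactly the toolkit this paper develops for its \emph{min-budget} results --- the ``keep only one IC constraint'' relaxation is the engine of \Cref{theorem:concavity_is_sufficient_condition_for_threshold}, and the bang-per-buck vertex argument is how \Cref{claim:two_outcome_mlrp_is_threshold} is proved (where the paper itself observes that the resulting contract coincides with the Dütting et al.\ min-pay contract). So your proof makes the imported lemma self-contained using machinery already present in the paper, which is a genuine if modest gain over citation.

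One caveat deserves sharper treatment than your closing sentence gives it. You defer ties to bookkeeping, to be resolved by ``strict stochastic dominance at $m$ under MLRP.'' But under the paper's \Cref{def:mlrp}, MLR is imposed only on pairs with \emph{strictly} ordered costs, so if some $i<n$ has $c_i=c_n$, no relation between $F_i$ and $F_n$ is assumed and the strictness $F_{i,m}<F_{n,m}$ (which otherwise follows from \Cref{claim:mlr_likelihood_ratio_strict_inequality}) is simply unavailable. This is not a removable technicality: with three actions, $c_1=0$, $c_2=c_3=1$, $F_1=(0.5,0.3,0.2)$, $F_2=(0.2,0.2,0.6)$, $F_3=(0.2,0.3,0.5)$, the setting satisfies \Cref{def:mlrp}, action $3$ is implementable (e.g.\ by $t=(0,\tfrac{10}{3},\tfrac{10}{3})$ with principal-favoring tie-breaking), yet no contract paying only at outcome $m$ implements action $3$, since any such payment makes action $2$ strictly better. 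So with cost ties the \emph{statement itself} can fail; the correct fix is to assume strictly increasing costs, or MLR between all pairs of actions as in the original Dütting et al.\ setting --- under either reading your proof is airtight.
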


See \cref{fig:min_pay_qualitative_comparison} for a qualitative comparison of min-pay and min-budget contracts.

\paragraph{Min-pay contracts in delegated learning settings with MLRP.} While simple, the min-pay contract given by \cref{prop:min_pay_optimal_contract_under_mlrp} is impractical in realistic delegated classification scenarios. 
In a delegated classification setting, the highest outcome corresponds to 100\% validation set accuracy (i.e all validation set samples classified correctly).
As $m$ grows, the highest outcome becomes exponentially less likely, and thus the min-pay contract awards increasingly high payments with increasingly lower probabilities (See \cref{fig:min_pay_comparison_as_a_function_m}). In such settings, even a slight degree of risk-averseness is likely to affect decisions: From the agent's perspective, the probability of any receiving payment from a min-pay contract may become small to a degree where even a slight degree of agent risk-averseness will manifest itself. From the principal's side, the exponential rate of growth in required budget

\begin{figure}
    \centering
    \includegraphics[width=0.8\columnwidth]{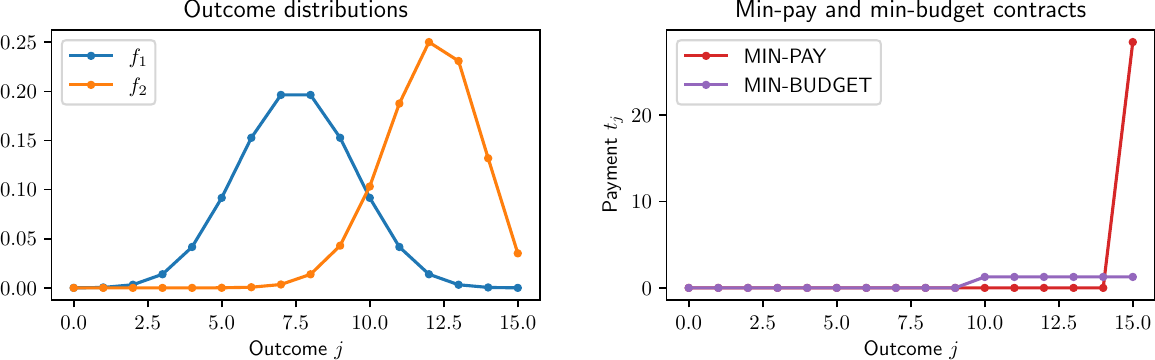}
    \caption{
    Qualitative comparison of min-pay and min-budget contracts under MLRP.
    \textbf{(Left)}
    The contract design setting, representing two possible actions with binomial outcome distributions ($p_1=\MinPayExperimentPLow$, $p_2=\MinPayExperimentPHigh$, $m=\MinPayExperimentM$). Action $1$ has zero cost $c_1=0$, and action $2$ has unit cost $c_2=1$.
    \textbf{(Right)}
    The resulting min-pay (red) and min-budget (purple) contracts, obtained by solving \cref{eq:min_pay_lp} and \cref{eq:min_budget_lp}, respectively. The min-pay awards payment only for the highest outcome, in alignment with \cref{prop:min_pay_optimal_contract_under_mlrp}; the min-budget is a threshold contract, in alignment with \cref{claim:optimal_two_action_contract_with_mlrp}.
    }
    \label{fig:min_pay_qualitative_comparison}
\end{figure}

\begin{figure}
    \centering
    \includegraphics[width=\columnwidth]{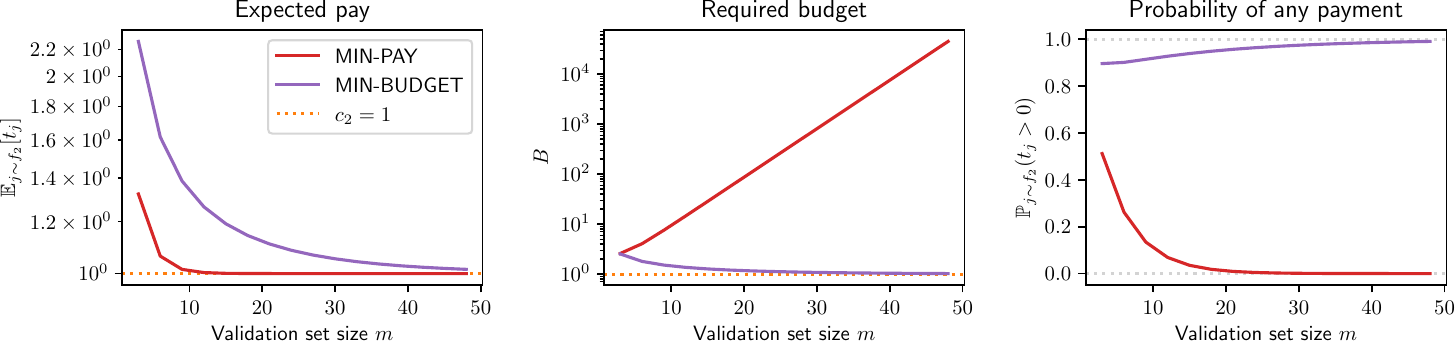}
    \caption{
    Comparison of min-pay and min-budget contracts under MLRP for varying validation set size $m$.
    The delegation setting is similar to the one depicted in \Cref{fig:min_pay_qualitative_comparison}: two binomial-outcome actions ($p_1=\MinPayExperimentPLow$, $p_2=\MinPayExperimentPHigh$) and varying $m$. Action $1$ has zero cost $c_1=0$, and action $2$ has unit cost $c_2=1$.
    In the left and center plots, the cost of action $2$ is represented by an orange line ($c_2=1$) representing the lower bound as in \cref{fig:full_information_contract_design} (Center).
    \textbf{(Left)}
    Expected pay $\expect{j\sim f_2}{t_j}$.
    \textbf{(Center)}
    Required budget $\Norm{t}_\infty$.
    \textbf{(Right)}
    Probability of getting any payment $\prob{j\sim f_2}{t_j>0}$.
    }
    \label{fig:min_pay_comparison_as_a_function_m}
\end{figure}

\subsection{Min-budget contracts with two actions}
In this section, we explore min-budget settings with two actions, and prove \Cref{thm:contract-test-equiv}.  
When $n=2$, assume without loss of generality that the contract implements action $i=2$, and denote $c=c_2-c_1>0$.

We recall the definition of total variation distance:
\begin{definition}[Total variation distance]
\label{def:total_variation_distance}
Given two distributions $P,Q \in \DistOver{\OutcomeSpace}$, the total variation distance between $P$ and $Q$ is:
\begin{align*}
\TV{P}{Q}
&= \frac{1}{2}\Norm{P-Q}_1 
\end{align*}
\end{definition}

The following equivalent definition is useful:

\begin{claim}
\label{claim:total_variation_as_max_sum}
Let $P,Q \in \DistOver{\OutcomeSpace}$. It holds:
\begin{align*}
\TV{P}{Q}
&= \sum_{j\in\OutcomeSpace} \left(Q_j-P_j\right)^+
\end{align*}
where $x^+=\max\Set{x,0}$.
\end{claim}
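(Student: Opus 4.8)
The plan is to reduce the claim to the elementary algebraic identity relating the absolute value of a real number to its positive and negative parts, and then to exploit the normalization that $P$ and $Q$ are both probability distributions. Throughout I write $a^+=\max\Set{a,0}$ and note the two pointwise identities $\Abs{a}=a^++(-a)^+$ and $a=a^+-(-a)^+$, valid for every real $a$.

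First I would apply the identity $\Abs{a}=a^++(-a)^+$ coordinatewise to $a=Q_j-P_j$ and sum over $j\in\OutcomeSpace$, which gives
\[
\sum_{j\in\OutcomeSpace} \Abs{Q_j-P_j} = \sum_{j\in\OutcomeSpace} (Q_j-P_j)^+ + \sum_{j\in\OutcomeSpace} (P_j-Q_j)^+ .
\]
Dividing by $2$ and using \Cref{def:total_variation_distance}, this already expresses $\TV{P}{Q}$ as the average of the two positive-part sums; it remains only to show these two sums are equal.

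The key step is therefore to prove $\sum_j (Q_j-P_j)^+ = \sum_j (P_j-Q_j)^+$. For this I would apply the second identity $a=a^+-(-a)^+$ to $a=Q_j-P_j$ and sum, obtaining
\[
\sum_{j\in\OutcomeSpace} (Q_j-P_j) = \sum_{j\in\OutcomeSpace} (Q_j-P_j)^+ - \sum_{j\in\OutcomeSpace} (P_j-Q_j)^+ .
\]
Since $P,Q\in\DistOver{\OutcomeSpace}$, the left-hand side equals $\sum_j Q_j - \sum_j P_j = 1-1 = 0$, so the two positive-part sums coincide. Substituting this equality back, the right-hand side of the first display becomes $2\sum_j (Q_j-P_j)^+$, the factor of $\tfrac12$ cancels, and the claim follows.

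I do not anticipate a genuine obstacle: the result is a routine identity, and the only point requiring care is invoking the normalization $\sum_j P_j = \sum_j Q_j = 1$ precisely where it is needed, since without it the two positive-part sums need not agree and the factor of $\tfrac12$ would not disappear.
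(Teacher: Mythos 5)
Your proposal is correct and follows essentially the same route as the paper's proof: decompose the $L_1$ norm in \Cref{def:total_variation_distance} into the two positive-part sums, then use $\sum_j P_j = \sum_j Q_j = 1$ to conclude that $\sum_j (Q_j-P_j)^+ = \sum_j (P_j-Q_j)^+$, which cancels the factor of $\tfrac12$. The only difference is cosmetic: you spell out the justification of that equality via the identity $a = a^+ - (-a)^+$, which the paper leaves implicit.
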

\begin{proof}
By definition:
\begin{align*}
    \TV{P}{Q}
    &= \frac{1}{2}\Norm{P-Q}_1 
\end{align*}
Decompose the $L_1$ norm:
\begin{align*}
    \frac{1}{2}\Norm{P-Q}_1 
    & =\frac{1}{2}\sum_{j\in \OutcomeSpace} \left(\left(Q_j-P_j\right)^+ + \left(P_j-Q_j\right)^+\right)
\end{align*}
As $\sum_{j\in\OutcomeSpace}P_j=\sum_{j\in\OutcomeSpace}Q_j=1$,
it holds that: 
$$\sum_{j\in\OutcomeSpace}\left(Q_j-P_j\right)^+ = \sum_{j\in\OutcomeSpace}\left(P_j-Q_j\right)^+$$
and therefore $\TV{P}{Q}
= \sum_{j\in\OutcomeSpace} \left(Q_j-P_j\right)^+$ as required.
\end{proof}

\subsubsection{Optimal two-action contract}
\label{subsubsec:optimal_two_action_contracts}
\begin{theorem}[Two-action min-budget contract; formal statement of \Cref{pro:opt-for-2-actions}]
\label{lemma:two_action_optimal_min_budget_contract}
When $n=2$, the optimal min-budget contract $t^*$ is given by:
\begin{equation}
\label{eq:min_budget_two_action_optimal_contract}
    t^*_j=
    \frac{c}{\TV{F_2}{F_1}}
    \Indicator{F_{2,j} \ge F_{1,j}}
\end{equation}
\end{theorem}

\begin{proof}
We prove this claim using LP duality, by showing that the optimal primal objective corresponding to $t^*$ is identical to a feasible solution of the dual LP.

The primal LP (\Cref{eq:min_budget_lp}) for two actions is given by:
\begin{align}
\label{eq:min_budget_two_action_lp}
    &\min_{t\in\Reals_{\ge 0}^\Size{\OutcomeSpace}, B\in\Reals_{\ge 0}}
    B
    \\ \nonumber
    & \mathrm{s.t.}
    \\ \nonumber
    & \forall j\in\OutcomeSpace: t_j \le B
    & \BudgetConstraintName
    \\ \nonumber
    &
    \sum_{j\in\OutcomeSpace} \left(F_{2,j}-F_{1,j}\right)t_j \ge c
    & \ICConstraintName
\end{align}

As $t^*$ is bounded, the optimal objective $B^*$ corresponds to the maximal possible payout of $t^*$:
\begin{equation}
\label{eq:min_budget_two_action_optimal_objective}
B^* = \max_{j\in\OutcomeSpace} t^*_j = \frac{c}{\TV{F_2}{F_1}}
\end{equation}
and therefore the \BudgetConstraintName{} constraint is satisfied. For the \ICConstraintName{} constraint, denote by $\OutcomeSpace_\ge$ the following set:
\begin{equation*}
    \OutcomeSpace_\ge = \Set{j\in\Omega \mid F_{2,j}\ge F_{1,2}}
\end{equation*}
using this notation, we note that $t^*_j>0$ if and only if $j\in\Omega_\ge$. Plugging into the constraint, we obtain:
\begin{align*}
    \sum_{j\in\OutcomeSpace} \left(F_{2,j}-F_{1,j}\right)t_j^*
    &=
    \sum_{j\in\OutcomeSpace_\ge} \left(F_{2,j}-F_{1,j}\right)B^*
    \\
    &=
    c \frac{\sum_{j\in\OutcomeSpace_\ge} \left(F_{2,j}-F_{1,j}\right)}{\TV{F_2}{F_1}}
    \\
    &= c
\end{align*}
Where $\TV{F_2}{F_1}=\sum_{j\in\OutcomeSpace_\ge} \left(F_{2,j}-F_{1,j}\right)$ by definition. This shows that $t^*$ is a feasible solution for the primal LP.

To prove optimality, we show that the the dual linear program attains an identical objective. By \Cref{claim:min_budget_lp_dual_proof}, the dual LP for two actions is given by:
\begin{equation}
\label{eq:min_budget_two_action_lp_dual}
\begin{aligned}
    &\max_{\lambda\in\Reals_{\ge 0}, \mu\in\Reals_{\ge 0}^\Size{\OutcomeSpace}}
    c \lambda
    \\
    & \mathrm{s.t.}
    \\
    &
    \forall j\in\OutcomeSpace:
    \left( F_{2,j} - F_{1,j} \right) \lambda \le \mu_j
    \\
    &\sum_{j\in\OutcomeSpace} \mu_j \le 1
\end{aligned}
\end{equation}
Denote the vector $\vec{v}_j(\lambda)\in\NonNegativeReals^\Size{\OutcomeSpace}$:
\begin{equation*}
    \vec{v}_j(\lambda) = \lambda \left( F_{2,j} - F_{1,j} \right)^+
\end{equation*}
We note that outcomes $j'$ for which $F_{2,j'} \le F_{1,j'}$ (formally $j'\in\OutcomeSpace\setminus \OutcomeSpace_\ge$) correspond to constraints which are satisfied for any $\lambda\ge 0$. Otherwise $j\in\OutcomeSpace_\ge$, and in these cases $\lambda$ can be increased until $\vec{v}(\lambda)$ saturates the simplex constraint $\sum_{j\in\OutcomeSpace} \mu_j=1$. The simplex constraint is binding for a value $\lambda^*>0$ satisfying:
\begin{equation*}
    \sum_{j\in\OutcomeSpace} \lambda^* \left( F_{2,j} - F_{1,j} \right)^+ = 1
\end{equation*}
and therefore the optimal value $\lambda^*$ of the dual LP (\cref{eq:min_budget_two_action_lp_dual}) is:
\begin{equation}
\label{eq:min_budget_two_action_optimal_dual}
c\lambda^*
= \frac{c}{\sum_{j\in\OutcomeSpace} \left(F_{2,j}-F_{1,j}\right)^+}
= \frac{c}{\TV{F_2}{F_1}}
\end{equation}
Where the second equality is given by \cref{claim:total_variation_as_max_sum}.
The dual objective in \cref{eq:min_budget_two_action_optimal_dual} is identical to the primal objective attained by the contract $t^*$ in \Cref{eq:min_budget_two_action_optimal_objective}, and therefore $t^*$ is an optimal contract by strong LP duality.
\end{proof}

\subsubsection{Contracts and hypothesis tests}
\label{subsubsec:equivalence_to_neyman_pearson}
We recall the formal definition of the Neyman-Pearson lemma:
\begin{lemma}[Neyman-Pearson, {\citep[e.g.,][4.3]{rigollet2015high}}]
    \label{lemma:neyman_pearson_lemma}
    Let $P,Q\in\DistOver{\mathcal{X}}$ be two probability measures 
    over an arbitrary set~$\mathcal{X}$. Then for any hypothesis test 
    $\psi:\mathcal{X}\to\Set{0,1}$, 
    it holds:
    \begin{equation}
        \label{eq:neyman_pearson}
        P(\psi(x)=1) + Q(\psi(x)=0) \ge 1-\TV{P}{Q}.
    \end{equation}
    Moreover, equality holds for the Likelihood Ratio test $\psi^*(x)=\Indicator{q(x) \ge p(x)}$.
\end{lemma}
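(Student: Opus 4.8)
The plan is to prove the inequality by pointwise (coordinatewise) minimization of the error sum, and then to verify that the likelihood ratio test attains the resulting bound. Writing $p,q$ for the mass functions of $P,Q$ (in the general case, densities against a common dominating measure, with sums replaced by integrals), I would first expand the sum of type-1 and type-2 errors of a test $\psi:\mathcal{X}\to[0,1]$ as
\begin{equation*}
P(\psi(x)=1)+Q(\psi(x)=0)
= \sum_{x\in\mathcal{X}} \Big(\psi(x)p(x)+(1-\psi(x))q(x)\Big)
= \sum_{x\in\mathcal{X}}\Big(q(x)+\psi(x)\big(p(x)-q(x)\big)\Big).
\end{equation*}
I allow fractional $\psi$ here precisely so that the lower bound covers randomized tests as well; the conclusion will be that determinism is without loss.

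The first key step is that each summand depends only on the single value $\psi(x)$ and is affine in it, so the sum is minimized coordinate by coordinate over $\psi(x)\in[0,1]$ at an endpoint: taking $\psi(x)=0$ when $p(x)\ge q(x)$ and $\psi(x)=1$ when $p(x)<q(x)$, in either case attaining the value $\min\{p(x),q(x)\}$. Summing the pointwise minima and using $\min\{p(x),q(x)\}=p(x)-(p(x)-q(x))^+$ together with $\sum_x p(x)=1$ and the identity $\TV{P}{Q}=\sum_x (p(x)-q(x))^+$ --- which is \Cref{claim:total_variation_as_max_sum} via the symmetry $\sum_x(p(x)-q(x))^+=\sum_x(q(x)-p(x))^+$ established in its proof --- gives
\begin{equation*}
\sum_{x\in\mathcal{X}}\min\{p(x),q(x)\}
= 1-\sum_{x\in\mathcal{X}}\big(p(x)-q(x)\big)^+
= 1-\TV{P}{Q}.
\end{equation*}
Combining this with the pointwise lower bound yields $P(\psi(x)=1)+Q(\psi(x)=0)\ge 1-\TV{P}{Q}$ for every (possibly randomized) test, which is \eqref{eq:neyman_pearson}; since the minimizer can always be chosen in $\{0,1\}$, the bound is already tight over integral tests.

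For the equality claim I would simply check that the likelihood ratio test $\psi^*(x)=\Indicator{q(x)\ge p(x)}$ realizes this coordinatewise minimizer: it sets $\psi^*(x)=1$ exactly when $q(x)\ge p(x)$ (matching the choice $\psi(x)=1$ for $p(x)<q(x)$) and $\psi^*(x)=0$ when $q(x)<p(x)$ (matching $\psi(x)=0$ for $p(x)>q(x)$), so each summand attains $\min\{p(x),q(x)\}$ under $\psi^*$ and the inequality becomes equality.

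I do not anticipate a real obstacle: the argument is a clean separable optimization, and its one nontrivial ingredient --- the identity relating $\sum_x\min\{p(x),q(x)\}$ to $1-\TV{P}{Q}$ --- is already in hand from \Cref{claim:total_variation_as_max_sum}. The only point demanding slight care is the tie case $p(x)=q(x)$: there both endpoints give the same value, so the tie-breaking built into $\psi^*$ (outputting $1$) does not affect optimality, and in the non-discrete setting one must additionally note that ties form a set on which $p-q$ vanishes and hence contribute nothing to either side.
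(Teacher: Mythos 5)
Your proof is correct, but it takes a genuinely different route from the paper. The paper never proves the core inequality itself: it defers the binary-test case to the cited reference (\citet[Section 4.3]{rigollet2015high}) and proves only the extension to fractional tests $\psi:\mathcal{X}\to[0,1]$, via an LP argument --- the error sum is linear in $\psi$, the feasible region is the hypercube $[0,1]^{\mathcal{X}}$ whose vertices are exactly the binary tests, and a feasible LP attains its optimum at a vertex, so the optimal binary rule remains optimal among randomized rules. Your argument is instead fully self-contained: you expand the error sum as $\sum_x \bigl(q(x)+\psi(x)(p(x)-q(x))\bigr)$, minimize each affine summand separately over $\psi(x)\in[0,1]$ to get $\sum_x \min\{p(x),q(x)\}$, identify this with $1-\TV{P}{Q}$ via \Cref{claim:total_variation_as_max_sum}, and check that the likelihood ratio test realizes the pointwise minimizer. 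This subsumes both halves of the paper's treatment at once: the inequality holds for randomized tests directly, and your observation that the coordinatewise minimizer can always be chosen in $\{0,1\}$ replaces the vertex argument. What the paper's route buys is brevity and a thematic link to the LP machinery it uses throughout (the statistical formulation of MIN-BUDGET); what yours buys is a complete, elementary proof with no external dependency, including the correct remark that ties $p(x)=q(x)$ make the specific tie-breaking in $\psi^*$ immaterial. One small caveat: \Cref{claim:total_variation_as_max_sum} is stated for distributions over the finite outcome space, so for an arbitrary $\mathcal{X}$ you need its analogue for densities with respect to a common dominating measure --- which you acknowledge, and which is routine.
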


In our analysis, we will assume that the space $\mathcal{X}$ is finite.
We also note that \Cref{lemma:neyman_pearson_lemma} is stated for decision rules with binary output, but its domain can be extended to fractional decision functions $\psi:\mathcal{X}\to[0,1]$ without loss of generality:
As the sum of errors is linear in $\psi$,
the optimal fractional decision rule is a solution to the linear program
$
\min_{\psi\in\left[0,1\right]^\mathcal{X}} \left( 
P(\psi=1) + Q(\psi=0)
\right)
$ where $P(\psi=1)=\sum_{x\in\mathcal{X}} p_x \psi(x)$ and $Q(\psi=0)=\sum_{x\in\mathcal{X}} q_x (1-\psi(x))$.
The feasible region of this linear program is the hypercube $\left[0,1\right]^\mathcal{X}$ and its vertices are the set of binary decision rules $\Set{0,1}^\mathcal{X}$, which also includes the optimal binary rule $\psi^*$ given by \Cref{lemma:neyman_pearson_lemma}.
As every feasible linear program attains its optimum on a vertex, the binary $\psi^*$ is also the optimal among fractional rules.

\begin{proof}[Proof of \Cref{thm:contract-test-equiv}]

For a given two-action contract, denote $P_j=F_{1,j}$, $Q_j=F_{2,j}$, and $c=c_2-c_1$.
We recall that the statistical min-budget LP for two actions is given by \Cref{lemma:min_budget_statistical_lp}:
\begin{equation}
\begin{aligned}
    \label{eq:min_budget_statistical_lp_two_actions}
    &\max_{\phi\in[0,1]^\Size{\OutcomeSpace}, \beta \in\NonNegativeReals}
    \beta
    \\
    & \mathrm{s.t.}
    \\
    &
    \sum_{j\in\OutcomeSpace} P_j \phi_j
    +
    \sum_{j\in\OutcomeSpace} Q_j (1-\phi_j)
    \le
    1-c\beta
\end{aligned}
\end{equation}
and the Neyman-Pearson lemma is given by \Cref{eq:neyman_pearson}. 

Given a min-budget contract design problem, 
apply \Cref{pro:opt-for-2-actions} to obtain its optimal solution, as given by \cref{eq:min_budget_two_action_optimal_contract}:
\begin{align*}
    B^* 
    &= \frac{c}{\TV{P}{Q}}
    \\
    t^*_j
    &=
    \frac{c}{\TV{P}{Q}}
    \Indicator{Q_j \ge P_j}
\end{align*}
Applying the transformation from 
\cref{def:variable_transformation} on the optimal contract, we obtain equivalently:
\begin{align*}
\beta^* &= \left(B^*\right)^{-1} = \frac{\TV{P}{Q}}{c}
\\
\phi^*_j &= t_j^*/B^* = \Indicator{Q_j \ge P_j}
\end{align*}
Note that $\phi^*_j$ is a maximum-likelihood decision rule, similar to the optimal critical function in the Neyman-Pearson lemma.
Additionally, by optimality of the contract-design optimization variable $\beta^*$, any feasible solution $\phi'$ of \Cref{eq:min_budget_statistical_lp_two_actions} satisfies:
\begin{equation*}
    \sum_{j=1}^m P_j {\phi'}_j
    +
    \sum_{j=1}^m Q_j (1-\phi'_j)
    \ge
    1-c\beta^* 
    = 1-\TV{P}{Q}
\end{equation*}
with equality satisfied by the maximum likelihood rule $\phi^*$. Therefore, the min-budget optimality of the contract $t^*$ implies the power optimality of the hypothesis test $\phi^*$. 

Conversely, let $\phi:\OutcomeSpace\to[0,1]$ be an maximal-power hypothesis test. This provides a lower bound on the constraint in \Cref{eq:min_budget_statistical_lp_two_actions}:
\begin{equation*}
    \sum_{j\in\OutcomeSpace} P_j \phi_j
    +
    \sum_{j\in\OutcomeSpace} Q_j (1-\phi_j)
    \ge
    1-\TV{P}{Q}
\end{equation*}
By the Neyman-Pearson lemma, the bound is tight for the maximum likelihood rule $\phi^\star=\Indicator{Q\ge P}$, and therefore the optimal objective $\beta^\star$ satisfies:
\begin{align*}
    1-c\beta^\star=1-\TV{P}{Q}
\end{align*}
and hence $\beta^\star=\frac{\TV{P}{Q}}{c}$.
Applying the transformation in \cref{def:variable_transformation} yields the optimal contract $t_j^\star=\frac{c}{\TV{P}{Q}}\Indicator{Q_j \ge P_j}$, showing that the corresponding contract is min-budget optimal if the corresponding hypothesis has optimal statistical power.
\end{proof}

\begin{remark}
     Since the proof of \Cref{pro:opt-for-2-actions} is independent of the Neyman-Pearson lemma, the argument proving the optimality of $\phi^*$ in the proof above implies the Neyman-Pearson lemma for finite $\mathcal{X}$.
\end{remark}

\subsection{More than two actions}
\begin{definition}[All-or-nothing contract]
Let $B>0$. An all-or-nothing contract $t:\OutcomeSpace\to\Reals_{\ge0}$ satisfies $t_j\in\Set{0,B}$ for all $j\in\OutcomeSpace$.
\end{definition}

\subsubsection{Binary outcomes}
\label{subsec:two_outcomes_and_more_than_two_actions}
In this section, we show that every binary-outcome min-budget contract is all-or-nothing. This is a corollary of a more general lemma:

\begin{lemma}
For any feasible min-budget contract $t^*$, there exists $j_0\in\OutcomeSpace$ such that $t^*_{j_0}=0$.
\end{lemma}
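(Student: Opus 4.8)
The plan is to argue by contradiction, exploiting the fact that the \ICConstraintName{} constraints of the MIN-BUDGET LP (\Cref{eq:min_budget_lp}) are invariant under adding a constant to every coordinate of a contract, whereas the $\Norm{\cdot}_\infty$ objective is not. First I would suppose toward a contradiction that $t^*$ is an optimal (min-budget) contract with $t^*_j>0$ for every $j\in\OutcomeSpace$. Setting $\delta=\min_{j\in\OutcomeSpace} t^*_j>0$, I define the shifted contract $t'_j=t^*_j-\delta$, which satisfies $t'_j\ge 0$ for all $j$ with equality at the minimizing coordinate, so it is a valid non-negative contract.

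The next step is to check that $t'$ remains feasible and implements the same action. Because each row of $F$ is a probability distribution, $\sum_{j\in\OutcomeSpace} F_{i',j}=\sum_{j\in\OutcomeSpace} F_{i,j}=1$; hence subtracting $\delta$ from every payment decreases both sides of each inequality $\sum_j F_{i',j}t_j-c_{i'}\le\sum_j F_{i,j}t_j-c_i$ by exactly $\delta$, leaving every \ICConstraintName{} constraint satisfied. Thus $t'$ is feasible for MIN-BUDGET and induces the same best response as $t^*$.

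Finally, the budget strictly decreases: $\Norm{t'}_\infty=\max_{j}(t^*_j-\delta)=\Norm{t^*}_\infty-\delta<\Norm{t^*}_\infty$, contradicting the optimality of $t^*$. Therefore no all-positive contract can be min-budget optimal, which forces $t^*_{j_0}=0$ for some $j_0\in\OutcomeSpace$.

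The only genuine subtlety---and the one step I would state explicitly---is the shift-invariance of the \ICConstraintName{} constraints, which hinges on each $F_{i,\cdot}$ summing to one so that the constant $\delta$ cancels identically across the inequalities. Everything else follows directly from the definition of $\Norm{\cdot}_\infty$ and the non-negativity of $\delta$. Note this argument never uses $\Size{\OutcomeSpace}=2$, so the lemma in fact holds for any outcome space.
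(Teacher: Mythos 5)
Your proof is correct and follows essentially the same route as the paper's: contradiction via subtracting the minimum payment $\delta=\min_j t^*_j$ from every coordinate, using the fact that each $F_{i,\cdot}$ sums to one so the \ICConstraintName{} constraints are invariant under the shift, and concluding that the budget strictly decreases. Your closing remark that the argument never uses $\Size{\OutcomeSpace}=2$ matches the paper as well, which states the lemma for a general outcome space and only specializes to two outcomes in the subsequent corollary.
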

\begin{proof}
By contradiction, assume that $t^*_j>0$ for all $j$, and denote $j_0=\argmin_{j} t^*_j$.
Denote $a=\min_{j} t_j$, and note that $a>0$. Define the contract $\tilde{t}$ as follows: 
$$
\forall j: \tilde{t}_j=t_j-a
$$
By definition, $\tilde{t}_j\ge 0$ for all $j$, and $\tilde{t}_{j_0}=0$.
Since $t^*$ is a feasible min-budget contract, it satisfies the min-budget LP in \cref{eq:min_budget_lp}, and in particular the \ICConstraintName{} constraint:
$$
\forall i\in[n-1]: \sum_j F_{i,j}t^*_j - c_i \le \sum_j F_{n,j}t^*_j - c_n
$$
Plugging in the definition $\tilde{t}=t^*-a$ and using the fact that $\sum_j F_{i,j} = 1$ for all $i\in[n]$, we obtain:
$$
\forall i\in[n-1]: \sum_j F_{i,j}\tilde{t}_j - c_i \le \sum_j F_{n,j}\tilde{t}_j - c_n
$$
and therefore $\tilde{t}$ satisfies the \ICConstraintName{} constraint as well. This is a contradiction, since $\tilde{t}$ is a feasible contract with a lower required budget. From this we conclude that the optimal contract must satisfy $t_{j}=0$ for some $j$.
\end{proof}

\begin{corollary}
\label{corollary:two_outcome_contracts_are_always_all_or_nothing}
In any min-budget contract design setting with two outcomes, the optimal contract is all-or-nothing.
\end{corollary}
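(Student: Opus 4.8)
The plan is to obtain the corollary as an immediate consequence of the preceding lemma, which guarantees that every feasible (and hence every optimal) min-budget contract assigns zero payment to at least one outcome. With only two outcomes, the single surviving payment is forced to equal the budget by the definition of $\Norm{\cdot}_\infty$, and this is the entire content of the argument.

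First I would fix an optimal min-budget contract $t^*$ and write $B=\Norm{t^*}_\infty=\max_{j\in\OutcomeSpace} t^*_j$ for its budget. Since $\Size{\OutcomeSpace}=2$, say $\OutcomeSpace=\Set{0,1}$, there are exactly two payment values $t^*_0$ and $t^*_1$. Applying the preceding lemma to $t^*$ yields an outcome $j_0\in\OutcomeSpace$ with $t^*_{j_0}=0$; let $j_1$ denote the other outcome, so that $\Set{j_0,j_1}=\OutcomeSpace$.

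Next I would show $t^*_{j_1}=B$. Because $t^*_{j_0}=0\le t^*_{j_1}$, the maximum of the two payments is attained at $j_1$, so $B=\max\Set{t^*_{j_0},t^*_{j_1}}=t^*_{j_1}$. Hence $t^*_{j_0}=0$ and $t^*_{j_1}=B$, which means $t^*_j\in\Set{0,B}$ for every $j\in\OutcomeSpace$, precisely the definition of an all-or-nothing contract. The only degenerate case is $B=0$ (both payments vanish), which is trivially all-or-nothing.

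I do not anticipate any real obstacle: all the substance lives in the preceding lemma, and the corollary reduces to the bookkeeping observation that, with two outcomes, once one payment is zero the remaining payment must coincide with the budget by definition of the sup-norm. The step most worth stating carefully is the justification that $B=t^*_{j_1}$, since this is exactly where the two-outcome hypothesis is used.
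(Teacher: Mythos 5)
Your proposal is correct and matches the paper's intended argument exactly: the paper states this corollary as an immediate consequence of the preceding lemma (one payment must be zero at optimality), and your observation that with two outcomes the remaining payment equals $\Norm{t^*}_\infty=B$ by definition of the sup-norm is precisely the omitted bookkeeping step. Your handling of the degenerate case $B=0$ is a fine extra precaution but not a point of divergence.
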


\subsubsection{Hardness: Basic definitions and construction}
\label{subsec:computational_hardness}
In this section, we show that finding optimal all-or-nothing contracts is NP-hard in the general case.

\begin{definition}[Maximin contract design matrix]
\label{def:3sat_maximin_contract_design_matrix}
For a given contract design problem targeting action $n$ and satisfying $c_i<c_n$ for all $i\in[n-1]$, the maximin design matrix $A$ is defined as:
\begin{equation}
A_{ij}=\frac{ F_{n,j} - F_{i,j} }{c_n-c_i}    
\end{equation}
\end{definition}

\begin{claim}
An optimal all-or-nothing contract is a solution of the following optimization problem:
\begin{equation}
\label{eq:3sat_maximin_optimization_problem}
\begin{aligned}
    &\max_{\phi\in\Set{0,1}^\Size{\OutcomeSpace}}
    \min_{i\in[n-1]}
    \left(A\phi\right)_i
\end{aligned}
\end{equation}
Where $A$ is the corresponding maximin design matrix, as given by \Cref{def:3sat_maximin_contract_design_matrix}.
\end{claim}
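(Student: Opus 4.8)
The plan is to parametrize every all-or-nothing contract by its support and reduce the budget-minimization objective directly to the stated maximin program. First I would write any all-or-nothing contract as $t_j = B\phi_j$, where $B=\Norm{t}_\infty$ is the common nonzero payment level and $\phi\in\Set{0,1}^\Size{\OutcomeSpace}$ is the indicator vector of the outcomes for which the agent is paid. Substituting this into the \ICConstraintName{} constraint of the MIN-BUDGET LP (\cref{eq:min_budget_lp}), I rearrange $\sum_j F_{i,j} t_j - c_i \le \sum_j F_{n,j} t_j - c_n$ into $\sum_j (F_{n,j}-F_{i,j})\, t_j \ge c_n - c_i$ for every $i\in[n-1]$. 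Since the setting of \cref{def:3sat_maximin_contract_design_matrix} assumes $c_i<c_n$, I can divide by the strictly positive quantity $c_n-c_i$ without flipping the inequality; recognizing $A_{ij}=(F_{n,j}-F_{i,j})/(c_n-c_i)$ together with $t_j=B\phi_j$, each constraint collapses cleanly to $B\,(A\phi)_i \ge 1$.

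Next I would observe that, for a fixed support $\phi$, the smallest budget satisfying all $n-1$ of these constraints is $B(\phi)=1/\min_{i\in[n-1]}(A\phi)_i$, valid exactly when $\min_i(A\phi)_i>0$; if $\min_i(A\phi)_i\le 0$ then some constraint cannot be met by any finite $B$, so that support fails to implement action $n$. Minimizing the budget over all all-or-nothing contracts thus reduces to minimizing $B(\phi)$ over $\phi\in\Set{0,1}^\Size{\OutcomeSpace}$. Because $x\mapsto 1/x$ is strictly decreasing on the positive reals, this is equivalent to maximizing $\min_{i\in[n-1]}(A\phi)_i$, which is precisely \cref{eq:3sat_maximin_optimization_problem}. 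A maximizer $\phi^\star$ then yields the optimal all-or-nothing contract via $t^\star_j = \phi^\star_j/\min_i(A\phi^\star)_i$.

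The step I expect to require the most care is the feasibility and sign bookkeeping around the reciprocal. I would argue that implementability of action $n$ (invoking \cref{claim:min_pay_feasibility_equivalence} together with \cref{prop:min_pay_feasibility_dutting}) guarantees at least one support $\phi$ with all $(A\phi)_i>0$, so the maximin objective is strictly positive at its optimum and the correspondence between maximizers of \cref{eq:3sat_maximin_optimization_problem} and minimum-budget all-or-nothing supports is exact. Supports with $\min_i(A\phi)_i\le 0$ are automatically discarded, since they correspond to an infinite required budget and can never be optimal; this is what lets the maximin formulation faithfully capture the original constrained budget-minimization.
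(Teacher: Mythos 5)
Your core reduction is correct, and it is essentially the paper's own proof: the paper routes the same computation through \cref{lemma:min_budget_statistical_lp} (the normalization $t_j=\phi_j\beta^{-1}$ with $\beta=B^{-1}$), divides the \ICConstraintName{} constraints of \cref{eq:min_budget_lp} by $c_n-c_i>0$ to expose the maximin matrix $A$, and then restricts $\phi$ to $\Set{0,1}^{\Size{\OutcomeSpace}}$; you simply re-derive that variable transformation inline instead of citing the lemma.

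However, the step you flag as ``requiring the most care'' contains a genuine error. Implementability of action $n$ --- which is what \cref{claim:min_pay_feasibility_equivalence} and \cref{prop:min_pay_feasibility_dutting} characterize --- is a statement about arbitrary contracts $t\in\Reals_{\ge 0}^{\Size{\OutcomeSpace}}$, i.e.\ about \emph{fractional} $\phi\in[0,1]^{\Size{\OutcomeSpace}}$; it does \emph{not} guarantee a binary support with all $(A\phi)_i>0$. Concretely, take three outcomes and three actions with $F_1=(0.2,0.5,0.3)$, $F_2=(0.6,0,0.4)$, $F_3=(0.4,0.3,0.3)$, costs $c_1=0$, $c_2=0.1$, $c_3=0.2$, and target action $n=3$, so that $A_1=(1,-1,0)$ and $A_2=(-2,3,-1)$. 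The contract $t=(10,8.5,0)$ implements action $3$ with strict inequalities, so action $3$ is implementable; yet enumerating all eight binary vectors shows $\min_i(A\phi)_i\le 0$ for every $\phi\in\Set{0,1}^3$ (the best, e.g.\ $\phi=(1,1,0)$, achieve exactly $0$), so no all-or-nothing contract implements action $3$ at any finite budget. Hence the ``exact correspondence'' you assert can fail: a maximizer of \cref{eq:3sat_maximin_optimization_problem} need not correspond to any valid contract, and integrality is precisely where fractional and binary feasibility part ways (which is also why the integer problem can be NP-hard while the LP relaxation is easy). This does not sink your proof of the claim as literally stated, since the claim speaks of \emph{an optimal all-or-nothing contract}, and the existence of any such contract already forces the binary maximin value to be positive --- under that hypothesis your reciprocal argument is exactly right, and it is all the hardness reduction needs (there, every assignment vector $\phi^x$ attains value at least $Q_\mathrm{pos}>0$). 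But the implementability-based justification should be deleted or weakened to this conditional form.
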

\begin{proof}
By \Cref{lemma:min_budget_statistical_lp}, \Cref{eq:min_budget_statistical_lp}, the min-budget contract design LP is equivalent to:
\begin{equation*}
\begin{aligned}
    &\max_{\phi\in\left[0,1\right]^\Size{\OutcomeSpace}, \beta\in\Reals_{\ge 0}}
    \beta
    \\
    & \mathrm{s.t.}
    \\
    &
    \forall i<n: 
    \sum_{j\in\OutcomeSpace} \frac{ F_{n,j} - F_{i,j} }{c_n-c_i} \phi_j 
    \ge \beta
\end{aligned}
\end{equation*}

As $A_{ij}=\frac{ F_{n,j} - F_{i,j} }{c_n-c_i}$,
it holds that $\sum_{j\in\OutcomeSpace} \frac{ F_{n,j} - F_{i,j} }{c_n-c_i} \phi_j = A \phi $. The LP above is therefore equivalent to:

\begin{equation*}
\begin{aligned}
    &\max_{\phi\in\left[0,1\right]^\Size{\OutcomeSpace}, \beta\in\NonNegativeReals}
    \beta
    \\
    & \mathrm{s.t.}
    \\
    &
    \forall i<n: 
    A_i\phi 
    \ge \beta
\end{aligned}
\end{equation*}

as $\beta$ is a lower bound for every constraint, its optimal value is the maximal minimum attainable through variation of $\phi$:
\begin{equation*}
\begin{aligned}
    &\max_{\phi\in\left[0,1\right]^\Size{\OutcomeSpace}}
    \min_{i\in[n-1]} \left(A\phi \right)_i
\end{aligned}
\end{equation*}
and restricting the optimization space of $\phi$ to $\Set{0,1}^\Size{\OutcomeSpace}$ yields the desired result.
\end{proof}

\begin{definition}[3-CNF -- Conjunctive normal form]
A 3-CNF formula over $m$ variables and $n$ clauses is a boolean-valued function $\psi:\Set{0,1}^m\to\Set{0,1}$ of the form:
$$\psi(x_1,\dots,x_m) = \bigwedge_{i=1}^{n} \left(z_{i1} \vee z_{i2} \vee z_{i3}\right)$$
where $z_{ik}\in\Set{x_1,\dots,x_m,\neg x_1,\dots,\neg x_m}$.
\end{definition}

\begin{definition}[Number of positives in clause $i$]
Given a 3-CNF $\psi$ and an assignment $x\in\Set{0,1}^m$, denote by $\sigma_i(\psi,x)$ the number of variables $z_{ik}$ in clause $i$ which evaluate to $1$ under assignment $x$.
\end{definition}

\begin{claim}
\label{claim:3sat_reduction_satisfiability_iff_min_sigma}
A formula $\psi$ is satisfiable if and and only if there exists an assignment $x\in\Set{0,1}^m$ such that $\min_{i\in[n]} \sigma_i(\psi,x) \ge 1$.
\end{claim}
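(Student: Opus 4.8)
The plan is to unwind the definitions, since both sides of the equivalence are really two encodings of the same condition. First I would recall that $\psi$ being satisfiable means precisely that there exists an assignment $x\in\Set{0,1}^m$ with $\psi(x)=1$. Since $\psi=\bigwedge_{i=1}^{n}\left(z_{i1}\vee z_{i2}\vee z_{i3}\right)$ is a conjunction of clauses, $\psi(x)=1$ holds if and only if every clause $i\in[n]$ evaluates to $1$ under the assignment $x$. This reduces the global satisfiability condition to a per-clause condition.

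Next I would establish the per-clause equivalence. For a fixed clause $i$, the disjunction $z_{i1}\vee z_{i2}\vee z_{i3}$ evaluates to $1$ under $x$ if and only if at least one of its three literals evaluates to $1$ under $x$. By the definition of $\sigma_i(\psi,x)$ as the number of literals $z_{ik}$ in clause $i$ that evaluate to $1$, the statement ``at least one literal is true'' is exactly $\sigma_i(\psi,x)\ge 1$. Hence clause $i$ is satisfied under $x$ if and only if $\sigma_i(\psi,x)\ge 1$.

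Combining the two observations gives, for a fixed $x$, that $\psi(x)=1$ if and only if $\sigma_i(\psi,x)\ge 1$ for all $i\in[n]$, which is precisely the condition $\min_{i\in[n]}\sigma_i(\psi,x)\ge 1$. Taking the existential quantifier over $x$ on both sides then yields the claimed equivalence between satisfiability of $\psi$ and the existence of an assignment with $\min_{i\in[n]}\sigma_i(\psi,x)\ge 1$.

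I do not expect any genuine obstacle here, as the claim is essentially definitional and serves only to recast 3SAT satisfiability into the maximin form of \Cref{eq:3sat_maximin_optimization_problem}. The one point that warrants care is making the semantic correspondence between Boolean disjunction and the counting function $\sigma_i$ fully explicit, so that the equivalence ``clause $i$ satisfied $\iff\sigma_i(\psi,x)\ge 1$'' is derived from the definition of $\sigma_i$ rather than merely asserted; once this is spelled out, the conjunction structure of $\psi$ delivers the result immediately.
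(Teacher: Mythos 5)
Your proposal is correct and follows essentially the same route as the paper's own proof: both directions unwind the definition of $\sigma_i$ to identify ``clause $i$ satisfied under $x$'' with $\sigma_i(\psi,x)\ge 1$, and then use the conjunctive structure of the 3-CNF to pass between $\psi(x)=1$ and the condition holding for all $i\in[n]$. Your write-up is slightly more explicit about the per-clause correspondence, but there is no substantive difference.
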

\begin{proof}
If $\psi$ is satisfiable then there exists $x\in\Set{0,1}^m$ such that $\psi(x)=1$. Since $\psi$ is a 3-CNF, every clause $i$ must evaluate to $1$, and therefore $\sigma_i(\psi,x)\ge 1$ for all $i\in[n]$.
Conversely, if there exist an assignment $x$ such that $\min_{i\in[n]} \sigma_i(\psi,x) \ge 1$ then by definition $x$ satisfies every clause, and therefore the whole formula $\psi$.
\end{proof}

\subsubsection{Hardness: Reduction from 3SAT}
Given a 3-CNF $\psi$ with $n$ clauses and $m$ variables, we define a min-budget contract design problem over $n+1$ actions and $m+3$ variables:
The target action is $i=n+1$. The cost associated with action $i$ is:
\begin{equation}
\label{eq:3sat_reduction_cost}
c_i = 
\begin{cases}
1 & i=n+1 \\
0 & \mathrm{otherwise}
\end{cases}
\end{equation}
Contract outcomes are denoted $\OutcomeSpace = \Set{1,\dots,m,\mathrm{pos},\mathrm{neg},\mathrm{const}}$. For simplicity of notations, let $\varepsilon>0$, which will be assigned a suitable value later in the proof.
The distribution of the target outcome is denoted by $Q$ and defined as:
\begin{equation}
\label{eq:3sat_reduction_q}
\begin{aligned}
&\forall j\in[m]: Q_j = \frac{\varepsilon}{m}
\\
&Q_\mathrm{pos} = 1-\varepsilon\left(1+\frac{3}{m}\right)
\\
&Q_\mathrm{neg} = 0
\\
&Q_\mathrm{const} = \frac{3\varepsilon}{m}
\end{aligned}
\end{equation}
For each $i\in[n]$, denote the number of negated variables in clause $i$ by $k_i\in\Set{0,\dots,3}$. 
The distribution corresponding to the $i$-th action is denoted by $P^{(i)}$ and defined as:
\begin{equation}
\label{eq:3sat_reduction_p}
\begin{aligned}
&\forall j\in[m]: P^{(i)}_j = 
\begin{cases}
    0 & \text{$x_j$ exists in clause $i$ and is not negated} \\
    \frac{2\varepsilon}{m} & \text{$x_j$ exists in clause $i$ and is negated} \\
    \frac{\varepsilon}{m} & \mathrm{otherwise}
\end{cases}
\\
&P^{(i)}_\mathrm{pos} = 0
\\
&P^{(i)}_\mathrm{neg} = 1-\varepsilon\left(
1
+ \frac{9-4k_i}{m}
\right)
\\
&P^{(i)}_\mathrm{const} = (3-k_i)\frac{\varepsilon}{m}
\end{aligned}
\end{equation}
These distributions are well-defined whenever $\varepsilon<\frac{1}{4}$ and $m\ge3$. 
For concreteness, set: 
\begin{equation}
\label{eq:3sat_reduction_varepsilon}
    \varepsilon=\frac{1}{10}
\end{equation}

Using \Cref{def:3sat_maximin_contract_design_matrix}
and combining equations (\ref{eq:3sat_reduction_cost}, \ref{eq:3sat_reduction_q}, \ref{eq:3sat_reduction_p}), the maximin design matrix $A^\psi\in\Reals^{(n+1)
\times (m+3)}$ corresponding to the contract design problem above is given by:
\begin{equation}
\label{eq:3sat_reduction_contract_matrix}
\begin{aligned}
&\forall j\in[m]: A^\psi_{i,j} =
\begin{cases}
    \frac{\varepsilon}{m} & \text{$x_j$ exists in clause $i$ and is not negated} \\
    -\frac{\varepsilon}{m} & \text{$x_j$ exists in clause $i$ and is negated} \\
    0 & \mathrm{otherwise}
\end{cases}
\\
&A^\psi_{i,\mathrm{pos}} = Q_\mathrm{pos}
\\
&A^\psi_{i,\mathrm{neg}} = -P^{(i)}_\mathrm{neg}
\\
&A^\psi_{i,\mathrm{const}} = k_i\frac{\varepsilon}{m}
\end{aligned}
\end{equation}

\begin{definition}
\label{def:3sat_reduction_phi_x_vector}
Given an assignment $x\in\Set{0,1}^m$, the corresponding contract vector $\phi^x\in\Set{0,1}^{m+3}$ is defined as:
\begin{equation}
\begin{aligned}
&\forall j\in[m]: \phi^x_j = x_j
\\
&\phi^x_\mathrm{pos} = 1
\\
&\phi^x_\mathrm{neg} = 0
\\
&\phi^x_\mathrm{const} = 1
\end{aligned}
\end{equation}
\end{definition}

\begin{claim}
\label{claim:3sat_reduction_assignment_matmul}
Let $\psi$ be a 3-CNF, and $x\in\Set{0,1}^m$ an assignment. It holds that:
\begin{equation*}
\left(A^\psi\phi^x\right)_i = 
\frac{\varepsilon}{m}\sigma_i(\psi, x)
+ Q_\mathrm{pos}
\end{equation*}
\end{claim}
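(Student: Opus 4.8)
The plan is to directly expand the matrix--vector product $(A^\psi \phi^x)_i$ using the definition of $A^\psi$ in \cref{eq:3sat_reduction_contract_matrix} together with the definition of $\phi^x$ in \cref{def:3sat_reduction_phi_x_vector}, and then match terms against the definition of $\sigma_i(\psi,x)$. First I would split the product into its four groups of coordinates: the variable coordinates $j\in[m]$, and the three special coordinates $\mathrm{pos},\mathrm{neg},\mathrm{const}$. The $\mathrm{neg}$ coordinate contributes nothing because $\phi^x_\mathrm{neg}=0$; the $\mathrm{pos}$ coordinate contributes exactly $Q_\mathrm{pos}$ (since $\phi^x_\mathrm{pos}=1$ and $A^\psi_{i,\mathrm{pos}}=Q_\mathrm{pos}$); and the $\mathrm{const}$ coordinate contributes $k_i\tfrac{\varepsilon}{m}$ (since $\phi^x_\mathrm{const}=1$ and $A^\psi_{i,\mathrm{const}}=k_i\tfrac{\varepsilon}{m}$). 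This isolates the substantive work in the variable block $\sum_{j\in[m]} A^\psi_{i,j}\, x_j$.

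The key step is to evaluate this variable sum in terms of $\sigma_i$. Since $A^\psi_{i,j}$ is nonzero only for the (at most three) variables occurring in clause $i$, equalling $+\tfrac{\varepsilon}{m}$ for a positively-occurring variable and $-\tfrac{\varepsilon}{m}$ for a negated one, the sum equals $\tfrac{\varepsilon}{m}\bigl(\sum_{\text{positive}} x_j - \sum_{\text{negated}} x_j\bigr)$. I would then relate this to $\sigma_i$ by recalling how literals evaluate: a positive literal on $x_j$ contributes $x_j$ to $\sigma_i$, while a negated literal $\neg x_j$ contributes $1-x_j$. Summing over the clause, $\sigma_i = \sum_{\text{positive}} x_j + \sum_{\text{negated}}(1-x_j) = \sum_{\text{positive}} x_j - \sum_{\text{negated}} x_j + k_i$, where $k_i$ is the number of negated variables in clause $i$. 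Rearranging yields $\sum_{\text{positive}} x_j - \sum_{\text{negated}} x_j = \sigma_i - k_i$, so the variable block equals $\tfrac{\varepsilon}{m}(\sigma_i - k_i)$.

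Finally I would assemble the four contributions: $\tfrac{\varepsilon}{m}(\sigma_i - k_i)$ from the variable block, $Q_\mathrm{pos}$ from $\mathrm{pos}$, $0$ from $\mathrm{neg}$, and $k_i\tfrac{\varepsilon}{m}$ from $\mathrm{const}$. The two $k_i$ terms cancel, leaving $(A^\psi\phi^x)_i = \tfrac{\varepsilon}{m}\,\sigma_i(\psi,x) + Q_\mathrm{pos}$, as claimed. The one step demanding care --- and the conceptual heart of the reduction --- is the literal-counting identity: the asymmetry between positive and negated occurrences (each negated literal contributing the ``flipped'' value $1-x_j$) introduces the offset $k_i$, and the entire purpose of the $\mathrm{const}$ outcome is that its coefficient $k_i\tfrac{\varepsilon}{m}$ is engineered precisely to absorb this offset, so that the final value depends only on $\sigma_i$. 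Verifying this cancellation is routine arithmetic, but it is exactly where a sign or bookkeeping error would most easily slip in.
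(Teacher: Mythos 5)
Your proof is correct and follows essentially the same route as the paper's: both decompose $(A^\psi\phi^x)_i$ into the variable block plus the $\mathrm{pos}$, $\mathrm{neg}$, and $\mathrm{const}$ coordinates, and both use the literal-counting identity $\sigma_i = \sum_{\text{positive}} x_j + \sum_{\text{negated}}(1-x_j)$ so that the $\mathrm{const}$ entry $k_i\tfrac{\varepsilon}{m}$ absorbs the offset from negated literals. The only difference is cosmetic bookkeeping --- the paper folds the $\mathrm{const}$ term into the variable sum before recognizing $\sigma_i$, whereas you identify $\tfrac{\varepsilon}{m}(\sigma_i - k_i)$ first and cancel at the end.
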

\begin{proof}
To prove this claim, plug $\phi_x$ from \Cref{def:3sat_reduction_phi_x_vector} into $A^\psi$ defined in \Cref{eq:3sat_reduction_contract_matrix}. Here we denote $A=A^\psi$, $\phi=\phi^x$ for simplicity. We obtain:
\begin{equation*}
\begin{aligned}
\left(A\phi\right)_i
&= 
\underbrace{\sum_{j\in[m]} A_{i,j} \phi_j}_{
\sum_\text{$x_j$ in clause} \phi_j
-
\sum_\text{$\neg x_{j'}$ in clause} \phi_{j'}
}
+ 
\underbrace{A_{i,\mathrm{pos}}}_{=Q_\mathrm{pos}}
\underbrace{\phi_\mathrm{pos}}_{=1}
+ 
A_{i,\mathrm{neg}}
\underbrace{\phi_\mathrm{neg}}_{=0}
+ 
\underbrace{A_{i,\mathrm{const}}}_{=k_i\frac{\varepsilon}{m}}
\underbrace{\phi_\mathrm{const}}_{=1}
\\
&=
\frac{\varepsilon}{m}\left(
\sum_\text{$x_j$ in clause $i$} \phi_j
+
\sum_\text{$\neg x_{j'}$ in clause $i$} \left(1-\phi_{j'}\right)
\right)
+ Q_\mathrm{pos}
\\
&=
\frac{\varepsilon}{m}\sigma_i(\psi, x)
+ Q_\mathrm{pos}
\end{aligned}
\end{equation*}
\end{proof}

\begin{claim}
\label{claim:3sat_satisfiable_iff_optimization_above_bound}
A 3-CNF $\psi$ is satisfiable if and only if the corresponding optimization problem in \Cref{eq:3sat_maximin_optimization_problem} attains a value of at least $Q_\mathrm{pos}+\frac{\varepsilon}{m}$.
\end{claim}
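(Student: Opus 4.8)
The plan is to prove the two implications separately, leaning entirely on the reduction already set up. The forward direction is immediate. If $\psi$ is satisfiable, \Cref{claim:3sat_reduction_satisfiability_iff_min_sigma} supplies an assignment $x\in\Set{0,1}^m$ with $\min_{i\in[n]}\sigma_i(\psi,x)\ge 1$, and I would evaluate the objective of \Cref{eq:3sat_maximin_optimization_problem} at the corresponding contract vector $\phi^x$ of \Cref{def:3sat_reduction_phi_x_vector}. By \Cref{claim:3sat_reduction_assignment_matmul}, $(A^\psi\phi^x)_i=\frac{\varepsilon}{m}\sigma_i(\psi,x)+Q_\mathrm{pos}\ge Q_\mathrm{pos}+\frac{\varepsilon}{m}$ for every clause $i$, so $\min_i(A^\psi\phi^x)_i\ge Q_\mathrm{pos}+\frac{\varepsilon}{m}$; since $\phi^x$ is feasible, the optimum is at least this large.

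The converse is where the real work lies, because a maximizer $\phi\in\Set{0,1}^{m+3}$ of \Cref{eq:3sat_maximin_optimization_problem} need not be of the ``assignment'' form $\phi^x$: its auxiliary coordinates $\phi_\mathrm{pos},\phi_\mathrm{neg},\phi_\mathrm{const}$ are free, and I must first show they cannot be used to reach the threshold by any other route. Starting from $\min_i(A^\psi\phi)_i\ge Q_\mathrm{pos}+\frac{\varepsilon}{m}$, I would read the clause rows off \Cref{eq:3sat_reduction_contract_matrix}, using that $\sum_{j\in[m]}A^\psi_{i,j}\phi_j$ and $k_i\frac{\varepsilon}{m}\phi_\mathrm{const}$ are each at most $3\frac{\varepsilon}{m}$ in absolute value whereas $P^{(i)}_\mathrm{neg}\approx 1$. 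Plugging $\varepsilon=\tfrac1{10}$ (\Cref{eq:3sat_reduction_varepsilon}) and $m\ge 3$: if $\phi_\mathrm{pos}=0$ then every clause row is at most $6\frac{\varepsilon}{m}$, and if $\phi_\mathrm{neg}=1$ then, using $Q_\mathrm{pos}-P^{(i)}_\mathrm{neg}=\frac{\varepsilon}{m}(6-4k_i)$ (and the worst case $\phi_\mathrm{pos}=1$), every clause row is at most $\frac{\varepsilon}{m}(12-4k_i)\le 12\frac{\varepsilon}{m}$. Both bounds sit strictly below $Q_\mathrm{pos}+\frac{\varepsilon}{m}$ for every $m\ge 3$ (this is exactly where the gap between the $O(\varepsilon/m)$ terms and the $\Theta(1)$ value $Q_\mathrm{pos}$ is exploited), so since there is at least one clause, neither is compatible with the threshold. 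Hence $\phi_\mathrm{pos}=1$ and $\phi_\mathrm{neg}=0$.

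With the auxiliary coordinates pinned, I would extract the assignment $x\defeq(\phi_1,\dots,\phi_m)$ and split on $\phi_\mathrm{const}$. If $\phi_\mathrm{const}=1$ then $\phi=\phi^x$ and \Cref{claim:3sat_reduction_assignment_matmul} gives $\frac{\varepsilon}{m}\sigma_i(\psi,x)+Q_\mathrm{pos}\ge Q_\mathrm{pos}+\frac{\varepsilon}{m}$, forcing $\sigma_i(\psi,x)\ge 1$. If instead $\phi_\mathrm{const}=0$, the same computation without the $k_i\frac{\varepsilon}{m}$ term gives $(A^\psi\phi)_i=\frac{\varepsilon}{m}\big(\sum_{\mathrm{pos}}\phi_j-\sum_{\mathrm{neg}}\phi_{j'}\big)+Q_\mathrm{pos}$, where the sums run over the unnegated and negated literals of clause $i$, so the threshold forces $\sum_{\mathrm{pos}}\phi_j-\sum_{\mathrm{neg}}\phi_{j'}\ge 1$; adding $k_i$ and recalling $\sigma_i=\sum_{\mathrm{pos}}\phi_j+\sum_{\mathrm{neg}}(1-\phi_{j'})$ again yields $\sigma_i(\psi,x)\ge 1+k_i\ge 1$. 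In either case $\min_{i\in[n]}\sigma_i(\psi,x)\ge 1$, and \Cref{claim:3sat_reduction_satisfiability_iff_min_sigma} certifies that $\psi$ is satisfiable.

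I expect the main obstacle to be the case analysis on the auxiliary coordinates in the converse: ruling out that a feasible $\phi$ could reach the threshold by turning off $\mathrm{pos}$ or turning on $\mathrm{neg}$ rather than by encoding a satisfying assignment. This is precisely the step that pins down why the specific constants $\varepsilon=\tfrac1{10}$ and $m\ge 3$ were chosen, and all of the quantitative estimates concentrate there.
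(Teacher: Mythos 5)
Your proof is correct, and the forward direction coincides with the paper's argument verbatim (same use of \Cref{claim:3sat_reduction_satisfiability_iff_min_sigma}, \Cref{def:3sat_reduction_phi_x_vector}, and \Cref{claim:3sat_reduction_assignment_matmul}). Where you genuinely diverge is the converse. The paper dispatches the auxiliary coordinates with a sign-based monotonicity swap: since every row of $A^\psi$ in \Cref{eq:3sat_reduction_contract_matrix} has $A_{i,\mathrm{pos}}=Q_\mathrm{pos}\ge 0$, $A_{i,\mathrm{neg}}=-P^{(i)}_\mathrm{neg}\le 0$, and $A_{i,\mathrm{const}}=k_i\tfrac{\varepsilon}{m}\ge 0$, replacing any optimizer by the vector with $\phi_\mathrm{pos}=1$, $\phi_\mathrm{neg}=0$, $\phi_\mathrm{const}=1$ (keeping the variable coordinates) weakly increases every coordinate of $A^\psi\phi$, so without loss of generality the optimizer is of the form $\phi^x$ and \Cref{claim:3sat_reduction_assignment_matmul} finishes immediately; no quantitative estimate is needed. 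You instead prove the stronger statement that $\phi_\mathrm{pos}=1$ and $\phi_\mathrm{neg}=0$ are \emph{forced} at any point reaching the threshold, which does require the numerics, and your estimates check out: with $\phi_\mathrm{pos}=0$ each row is at most $6\tfrac{\varepsilon}{m}$, with $\phi_\mathrm{neg}=1$ at most $\tfrac{\varepsilon}{m}(12-4k_i)$ (your identity $Q_\mathrm{pos}-P^{(i)}_\mathrm{neg}=\tfrac{\varepsilon}{m}(6-4k_i)$ is right), and both sit strictly below $Q_\mathrm{pos}+\tfrac{\varepsilon}{m}=1-\varepsilon-2\tfrac{\varepsilon}{m}$ because $14\tfrac{\varepsilon}{m}+\varepsilon<1$ for $\varepsilon=\tfrac{1}{10}$, $m\ge 3$. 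Your $\phi_\mathrm{const}=0$ branch is also sound: the threshold forces $\sum_{\mathrm{pos}}\phi_j-\sum_{\mathrm{neg}}\phi_{j'}\ge 1$, hence $\sigma_i\ge 1+k_i\ge 1$. The trade-off: the paper's swap is shorter and makes the specific constants irrelevant to this step (they only ensure the distributions in the reduction are well-defined), while your argument is heavier but characterizes \emph{all} threshold-achieving solutions and pinpoints exactly where the $\Theta(1)$-versus-$O(\varepsilon/m)$ separation in the construction is exploited. Both are complete proofs of the claim.
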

\begin{proof}
If $\psi$ is satisfiable by assignment $x\in\Set{0,1}^m$, then $\sigma_i(\psi, x)\ge 1$ for all $i\in[n]$ by \Cref{claim:3sat_reduction_satisfiability_iff_min_sigma}.
Let $\phi^x$ denote the vector corresponding to the satisfying assignment, according to \Cref{def:3sat_reduction_phi_x_vector}.
Apply \Cref{claim:3sat_reduction_assignment_matmul} to obtain:
\begin{equation*}
\begin{aligned}
\left(A^\psi \phi^x\right)_i
&= 
\frac{\varepsilon}{m}\sigma_i(\psi, x)
+ Q_\mathrm{pos}
\\
&\ge
Q_\mathrm{pos}
+ \frac{\varepsilon}{m}
\end{aligned}
\end{equation*}

Conversely, assume the optimal solution to \Cref{eq:3sat_maximin_optimization_problem} is at least $Q_\mathrm{pos}+ \frac{\varepsilon}{m}$, and denote the vector attaining the optimal solution by $\phi^\mathrm{opt}$. As the corresponding matrix $A^\psi$, defined in \Cref{eq:3sat_reduction_contract_matrix},
satisfies the following for all $i\in[n]$:
\begin{equation*}
\begin{aligned}
A_{i,\mathrm{pos}}
= Q_\mathrm{pos} 
&\ge 0
\\
A_{i,\mathrm{neg}}
= -P^{(i)}_\mathrm{neg} 
&\le 0
\\A_{i,\mathrm{const}} 
= k_i \frac{\varepsilon}{m} 
&\ge 0
\end{aligned}
\end{equation*}
it can be assumed that the optimal solution $\psi_\mathrm{opt}$ satisfies:
\begin{equation*}
\begin{aligned}
\phi^\mathrm{opt}_\mathrm{pos} &= 1
\\
\phi^\mathrm{opt}_\mathrm{neg} &= 0
\\
\phi^\mathrm{opt}_\mathrm{const} &= 1
\end{aligned}
\end{equation*}
because otherwise the value of any $\left(A^\psi \phi\right)_i$ would increase by setting these entries in $\phi$ to the values above.
Hence, we can denote $\phi^\mathrm{opt}=\phi^x$ according to \Cref{def:3sat_reduction_phi_x_vector}, and apply \Cref{claim:3sat_reduction_assignment_matmul} to obtain:
$$
\left(A^\psi\phi^x\right)_i = 
\frac{\varepsilon}{m}\sigma_i(\psi, x)
+ Q_\mathrm{pos}
$$
as the value attained is at least $\frac{\varepsilon}{m}
+ Q_\mathrm{pos}$, it must hold that $\sigma_i(\psi, x)\ge 1$ for all $i\in[n]$, and therefore $x$ satisfies $\psi$ according to \Cref{claim:3sat_reduction_satisfiability_iff_min_sigma}.
\end{proof}

\subsubsection{Proof of hardness}
We are now ready to prove \Cref{thm:hardness}:
\begin{proof}[Proof of \Cref{thm:hardness}]
By reduction from 3SAT.
Given a 3-CNF $\psi$ with $n$ clauses and $m$ variables, construct in polynomial time the corresponding matrix $A^\psi$ as defined by \cref{eq:3sat_reduction_contract_matrix}, and apply an all-or-nothing min-budget contract solver according to \cref{eq:3sat_maximin_optimization_problem} to obtain the optimal solution. The validity of the LP is given by \cref{eq:3sat_maximin_optimization_problem}.
According to \Cref{claim:3sat_satisfiable_iff_optimization_above_bound}, the formula $\psi$ is satisfiable if and only if the value attained in the optimization is at least $\frac{\varepsilon}{m}
+ Q_\mathrm{pos}$, where $m$ is the number of variables in $x$, and $\varepsilon,Q_\mathrm{pos}$ are constants defined in \cref{eq:3sat_reduction_varepsilon}, \cref{eq:3sat_reduction_q} respectively.
\end{proof}

\subsection{The \algName{} algorithm}
To prove the soundness of \AlgNameShort{}, we first make the following definition:

\begin{definition}[($i'$-IC) relaxation]
\label{def:sba_ic_relaxation}
Consider a MIN-BUDGET LP with target action $i\in\Actions$, as given by \cref{eq:min_budget_lp}. For any action $i'\neq i$, the ($i'$-IC) relaxation of the min-budget problem is given by eliminating all \ICConstraintName{} constraints except for the one corresponding to action $i'$.
\end{definition}

\label{appendix:local_algorithm}
\begin{proof}[Proof of \Cref{prop:SBD_soundness}]
Denote the target action by $i\in\Actions$. On each iteration of the loop, the algorithm considers some action $i'\neq i$, and applies \Cref{pro:opt-for-2-actions} on the ($i'$-IC) relaxation of the original MIN-BUDGET LP. Since the relaxed LP has only one (IC) constraint, its optimal solution, denoted by $t^*(i',i)$, is given by \cref{eq:binary-action_closed-form}.

By \Cref{def:sba_ic_relaxation}, the feasible region of the original LP lies within feasible region corresponding to its ($i'$-IC) relaxation. Thus, if an optimal solution of the relaxed LP lies within the feasible region of the original LP, then it is also a global optimum of the original LP. $t^*(i',i)$ lies within the feasible region of the original LP is it satisfies the remaining (IC) constraints---a condition equivalent to the notation $a(t^*(i',i))=i$ by \cref{eq:agent_rational_choice}. The \AlgNameShort{} algorithm terminates successfully only in such cases, and therefore it is sound.
\end{proof}

\paragraph{A note on ties in \AlgNameShort{}.}
Denote the target action by $i$. To simplify presentation, the algorithm presentation in \Cref{sub:beyond-binary} implicitly assumes that required budgets $\Norm{t^*(i',i)}_\infty$ are distinct for every pair of actions $(i',i)$, and therefore the return value of \AlgNameShort{} is well-defined. In case of ties, the return value is not well-defined (as the iteration order is not well-defined), but small a modification to the algorithm allows ties to be broken explicitly without affecting other properties of the algorithm: In case of ties in optimal required budget, the soundness of the algorithm and the all-of-nothing property of the returned contracts is not affected. However, the exact functional form of the returned contract may be affected by the iteration order. In case such issue becomes relevant, the \AlgNameShort{} algorithm can be modified to first collect a set of optimal contracts (instead of immediately returning when an optimal feasible contract is found), and then select one of the optimal contracts based on the desired criteria. As the proof of \Cref{prop:SBD_soundness} does not depend on iteration order, soundness will not be affected, and worst-case time complexity will remain the same.

\subsection{MLRP}
In this section, we explore contract design under the MLRP assumption, and prove \Cref{thm:sufficiency}.
We first recall the statistical notion of monotone likelihood ratio:
\begin{definition}[Mononote Likelihood Ratio -- MLR]
    The distributions $P,Q\in\DistOver{\OutcomeSpace}$ have Monotone Likelihood Ratio when $\frac{Q_j}{P_j}$ is monotonically increasing for all $j \in \OutcomeSpace=\Outcomes$. We denote this by $P\prec Q$.
\end{definition}

An introduction to MLR and its relation to statistical hypothesis testing is provided in \citet[Section 3.4]{lehmann2005testing}.
Using this notation, we can reformulate \Cref{def:mlrp}:

\begin{definition}[MLRP; reformulation of Def.~\ref{def:mlrp} using MLR notation]
\label{def:mlrp_mlr}
A principal-agent problem satisfies the Monotone Likelihood Ratio Property
if $F_{i'} \prec F_i$ for every pair of actions $i, i'$ such that $c_{i'} < c_i$.
\end{definition}

\begin{claim} \label{claim:mlr_likelihood_ratio_strict_inequality}
    Let $P,Q\in\DistOver{[m]}$ such that $P\prec Q$. Then
    $P_0>Q_0$
    and
    $P_m<Q_m$.
\end{claim}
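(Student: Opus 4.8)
The plan is to work directly with the likelihood ratio $r_j \defeq Q_j/P_j$, which is well-defined for every outcome since the hypothesis $P\prec Q$ presupposes that $Q_j/P_j$ is a finite increasing function of $j$, and in particular that $P_j>0$ for all $j\in\OutcomeSpace$. The key observation is that the $P$-weighted average of $r$ equals exactly $1$: since $r_j P_j = Q_j$, we have $\sum_{j} r_j P_j = \sum_j Q_j = 1 = \sum_j P_j$, hence $\sum_j (r_j-1)P_j = 0$. The two desired inequalities $P_0 > Q_0$ and $P_m < Q_m$ are equivalent to $r_0<1$ and $r_m>1$ respectively, so it suffices to show that the minimum value $r_0$ of the increasing sequence lies strictly below its mean $1$, and the maximum value $r_m$ lies strictly above it.

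First I would establish $P_0>Q_0$. Because $r$ is increasing, $r_j\ge r_0$ for all $j$, so $0=\sum_j(r_j-1)P_j \ge (r_0-1)\sum_j P_j = r_0-1$, giving the weak bound $r_0\le 1$. To upgrade this to a strict inequality, note that $P\prec Q$ forces $r$ to be non-constant (if $r\equiv 1$ then $P=Q$, contradicting the strict ordering), so in particular $r_m>r_0$; combined with the full-support fact $P_m>0$, the term $(r_m-1)P_m$ would be strictly positive were $r_0=1$, making $\sum_j(r_j-1)P_j>0$ and contradicting the averaging identity. Hence $r_0<1$, i.e.\ $Q_0=r_0P_0<P_0$. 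The statement $P_m<Q_m$ then follows by the symmetric argument: $r_j\le r_m$ yields the weak bound $r_m\ge 1$, and the same non-constancy together with $P_0>0$ rules out equality, so $r_m>1$ and $Q_m=r_mP_m>P_m$.

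The only genuinely delicate point is the strictness of the conclusions, which is precisely where the hypotheses that $P\neq Q$ (implicit in the strict relation $P\prec Q$) and that $P$ has full support (implicit in the ratio $Q_j/P_j$ being defined everywhere) are used; without these one obtains only the weak inequalities $P_0\ge Q_0$ and $P_m\le Q_m$, and indeed the degenerate case $P=Q$ would make the strict claim fail. I therefore expect the careful bookkeeping around these degenerate and zero-probability cases---rather than the averaging computation itself, which is routine---to be the main thing to get right.
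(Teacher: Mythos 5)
Your proof is correct and takes essentially the same route as the paper's: both exploit monotonicity of the likelihood ratio together with the normalization $\sum_j P_j=\sum_j Q_j=1$ and distinctness of $P,Q$ to derive a contradiction when $Q_0\ge P_0$ (and symmetrically at $j=m$), your averaging identity $\sum_j (r_j-1)P_j=0$ being just a repackaging of the paper's summation step. Your explicit flagging of the full-support and $P\neq Q$ assumptions is a nice touch; the paper relies on exactly these facts implicitly (it invokes distinctness mid-proof and needs $P_j>0$ for the ratios to make sense).
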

\begin{proof}
    For the first inequality, assume by contradiction that $Q_0 \ge P_0$. Combining with the MLR property, this assumption implies that $Q_j \ge P_j$ for all $j\in\Outcomes$. As the outcome distributions $P$, $Q$ are distinct, there exists at least one $j$ for which $Q_j>P_j$.
    Summing over $j$ yields: 
    $
        \sum_{j\in\OutcomeSpace} Q_j > \sum_{j\in\OutcomeSpace} P_j
    $,
    which is a contradiction, as the inequality is strict while both sides sum to $1$. The proof for the second inequality follows in the same way.
\end{proof}

\begin{definition}[MLR crossing point $j^*_{P,Q}$]
    \label{def:mlr_crossing_point}
    Let $P,Q\in\DistOver{\OutcomeSpace}$ such that $P\prec Q$. 
    The \emph{crossing point} of $Q$ over $P$ is the outcome $j^*\in\Outcomes$ such that $Q_{j^*-1} < P_{j^*-1}$ and $Q_{j^*} \ge P_{j^*}$. When distribution are not clear from context, we denote $j^*=j^*_{P,Q}$
\end{definition}

In words, the crossing point $j^*$ is the outcome such that for every lower outcome, $P$ is strictly more likely than $Q$, and starting with this outcome $Q$ is weakly more likely. 
By \cref{claim:mlr_likelihood_ratio_strict_inequality}, the MLR crossing point $j^*$ is uniquely defined for any $P\prec Q$, and it holds that $j^*>0$.

\begin{definition}[Survival function]
\label{def:survival_function}
    Given $P\in\DistOver{[m]}$, the survival function $\Survival{P}{\cdot}:\OutcomeSpace\to[0,1]$ is defined as:
\begin{equation*}
    \Survival{P}{j} = \prob{j' \sim P}{j'>j} = \sum_{j'=j+1}^{m} P_i
\end{equation*}
\end{definition}

Informally, the survival function is 1 minus the distribution's CDF. This function is useful in our context, since the total variation distance between two distributions (as appearing in \cref{lemma:two_action_optimal_min_budget_contract}) can be represented as the difference between their survival functions when they satisfy MLR. The intuitive reason for this is that MLR means the distributions are ``single-crossing''.  

\begin{claim}[Total variation under MLR]
\label{claim:total_variation_mlr_survival}
    Let $P,Q\in\DistOver{\OutcomeSpace}$ such that $P\prec Q$. 
    It holds:
\begin{equation*}
    \TV{P}{Q} = \Survival{Q}{j^*_{P,Q}-1}-\Survival{P}{j^*_{P,Q}-1}
\end{equation*}
\end{claim}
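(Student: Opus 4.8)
The plan is to exploit the single-crossing structure that MLR forces on the pair $P \prec Q$, reducing the total variation to a single tail sum that I can then read off as a difference of survival functions.

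First I would prove the single-crossing property: for the crossing point $j^* = j^*_{P,Q}$ of \Cref{def:mlr_crossing_point}, it holds that $Q_j \ge P_j$ for every $j \ge j^*$ and $Q_j \le P_j$ for every $j < j^*$. This is immediate from monotonicity of the likelihood ratio: for $j \ge j^*$ we have $\nicefrac{Q_j}{P_j} \ge \nicefrac{Q_{j^*}}{P_{j^*}} \ge 1$ by the defining inequality $Q_{j^*} \ge P_{j^*}$, and symmetrically for $j \le j^*-1$ we have $\nicefrac{Q_j}{P_j} \le \nicefrac{Q_{j^*-1}}{P_{j^*-1}} < 1$. In particular the set $\Set{j \in \OutcomeSpace \mid Q_j \ge P_j}$ coincides with $\Set{j^*, \dots, m}$.

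Second, I would invoke the alternative expression for total variation from \Cref{claim:total_variation_as_max_sum}, writing $\TV{P}{Q} = \sum_{j\in\OutcomeSpace}\left(Q_j - P_j\right)^+$. By the single-crossing property each summand with $j < j^*$ vanishes, while for $j \ge j^*$ the positive part equals $Q_j - P_j$; hence the sum collapses to $\sum_{j=j^*}^m \left(Q_j - P_j\right)$. Finally, I would split this into two tails and match each against \Cref{def:survival_function}: since $\Survival{Q}{j^*-1} = \sum_{j=j^*}^m Q_j$ and $\Survival{P}{j^*-1} = \sum_{j=j^*}^m P_j$, subtracting gives exactly $\TV{P}{Q} = \Survival{Q}{j^*-1} - \Survival{P}{j^*-1}$, as claimed.

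I expect the only delicate point to be the first step, and specifically its interaction with outcomes at which $P_j$ or $Q_j$ vanishes, where the likelihood ratio is not literally a finite number. This can be handled either by phrasing MLR through the cross-multiplied inequality $Q_j P_{j'} \ge Q_{j'} P_j$ for $j \ge j'$ (which never divides by zero), or by appealing to \Cref{claim:mlr_likelihood_ratio_strict_inequality}, which guarantees $P_0 > Q_0$ and $P_m < Q_m$ so that $j^*$ is well-defined and interior; the remaining tail-sum bookkeeping is routine.
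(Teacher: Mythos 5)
Your proof is correct and follows essentially the same route as the paper's: both invoke \Cref{claim:total_variation_as_max_sum} to write $\TV{P}{Q}=\sum_{j\in\OutcomeSpace}\left(Q_j-P_j\right)^+$, collapse this sum to the tail $\sum_{j=j^*}^{m}\left(Q_j-P_j\right)$ using the single-crossing structure at $j^*_{P,Q}$, and conclude by \Cref{def:survival_function}. The only difference is one of rigor, in your favor: you prove the single-crossing property explicitly (and flag the zero-probability caveat in the likelihood ratio), whereas the paper treats that step as immediate from \Cref{def:mlr_crossing_point}.
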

\begin{proof}
By \Cref{claim:total_variation_as_max_sum}, it holds that:
$$
\TV{P}{Q} = \sum_{j\in\Outcomes} \left(Q_j-P_j\right)^+
$$
Using the $j^*$ notation (see \cref{def:mlr_crossing_point}), we obtain:
$$
\sum_{j\in\Outcomes} \left(Q_j-P_j\right)^+ = \sum_{j = j^*_{P,Q}}^m Q_j-P_j
$$
And the result is obtained by applying the definition of survival function (see \cref{def:survival_function}).
\end{proof}

\subsubsection{Two-action contracts with MLRP}
\label{subsubsec:two_action_contracts_with_mlrp}
Combining \Cref{lemma:two_action_optimal_min_budget_contract} with \Cref{claim:mlr_likelihood_ratio_strict_inequality} leads to a characterization of threshold contracts in the case of two actions:
\begin{claim}
\label{claim:optimal_two_action_contract_with_mlrp}
    For any two-action contract design problem satisfying MLRP, the optimal contract is a threshold contract:
    \begin{equation*}
        t_j^* = \frac{c}{\TV{F_2}{F_1}} \Indicator{j\ge j^*_{F_1,F_2}}
    \end{equation*}
    where $j^*_{F_1,F_2}=\min\Set{j\in\Outcomes\mid F_{2,j} \ge F_{1,j}}$ as in \Cref{def:mlr_crossing_point}.
\end{claim}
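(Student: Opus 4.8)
The plan is to obtain the claim as an immediate specialization of \Cref{lemma:two_action_optimal_min_budget_contract} once the likelihood-ratio indicator appearing there is shown to collapse to a threshold indicator under MLRP. First I would invoke \Cref{lemma:two_action_optimal_min_budget_contract}, which already characterizes the optimal two-action min-budget contract as $t^*_j=\frac{c}{\TV{F_2}{F_1}}\Indicator{F_{2,j}\ge F_{1,j}}$ with no structural assumption beyond $n=2$. Since the problem targets the costlier action and hence $c_1<c_2$, the MLRP assumption (Definition~\ref{def:mlrp}) gives $F_1\prec F_2$, i.e.\ the likelihood ratio $F_{2,j}/F_{1,j}$ is nondecreasing in $j$. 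It therefore suffices to prove the set identity $\Set{j\in\Outcomes\mid F_{2,j}\ge F_{1,j}}=\Set{j\in\Outcomes\mid j\ge j^*_{F_1,F_2}}$, after which substitution into the formula above completes the proof.

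To establish this identity I would argue single-crossing. The condition $F_{2,j}\ge F_{1,j}$ is equivalent to $F_{2,j}/F_{1,j}\ge 1$, and since this ratio is nondecreasing, the set of outcomes satisfying it is closed upward: if it holds at some $j$ it holds at every larger outcome. Hence $\Set{j\mid F_{2,j}\ge F_{1,j}}$ is an up-set $\Set{j\ge j_0}$, and I would identify $j_0$ with the crossing point $j^*_{F_1,F_2}$ directly from \Cref{def:mlr_crossing_point}: by definition $F_{2,j^*-1}<F_{1,j^*-1}$ while $F_{2,j^*}\ge F_{1,j^*}$, so $j^*$ is exactly the least outcome at which the ratio reaches $1$. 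Concretely, for $j\ge j^*$ monotonicity gives $F_{2,j}/F_{1,j}\ge F_{2,j^*}/F_{1,j^*}\ge 1$, and for $j<j^*$ (equivalently $j\le j^*-1$) it gives $F_{2,j}/F_{1,j}\le F_{2,j^*-1}/F_{1,j^*-1}<1$, which together yield the claimed equivalence of indicators.

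The one point requiring care—and the step I expect to be the main (and only minor) obstacle—is well-definedness: the ratio $F_{2,j}/F_{1,j}$ must be interpreted correctly where $F_{1,j}=0$, and the crossing point must be shown to exist and to lie strictly inside $\Outcomes$ so the threshold is meaningful. For this I would lean on \Cref{claim:mlr_likelihood_ratio_strict_inequality}, which under $F_1\prec F_2$ guarantees $F_{1,0}>F_{2,0}$ and $F_{1,m}<F_{2,m}$; this makes the up-set nonempty (it contains $m$) and proper (it excludes $0$), so $0<j^*_{F_1,F_2}\le m$ is uniquely defined, exactly as recorded after \Cref{def:mlr_crossing_point}. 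To avoid division-by-zero entirely, I would phrase the single-crossing step in terms of the sign of the difference $F_{2,j}-F_{1,j}$ together with monotonicity of the ratio on the common support, rather than manipulating $F_{2,j}/F_{1,j}$ as a real number; the conclusion $\Indicator{F_{2,j}\ge F_{1,j}}=\Indicator{j\ge j^*_{F_1,F_2}}$ is unaffected, and plugging it back into \Cref{lemma:two_action_optimal_min_budget_contract} yields precisely the stated threshold contract.
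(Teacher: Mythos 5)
Your proposal is correct and follows essentially the same route as the paper's proof: invoke \Cref{lemma:two_action_optimal_min_budget_contract} for the general two-action characterization, then use \Cref{claim:mlr_likelihood_ratio_strict_inequality} together with monotonicity of the likelihood ratio to show the indicator $\Indicator{F_{2,j}\ge F_{1,j}}$ collapses to the threshold indicator $\Indicator{j\ge j^*_{F_1,F_2}}$. Your write-up merely makes explicit the single-crossing and well-definedness details that the paper's two-sentence proof leaves implicit.
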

\begin{proof}
Combining the result of \Cref{claim:mlr_likelihood_ratio_strict_inequality} with the monontonicity assumption, the likelihood ratio $F_{2,j}/F_{1,j}$ crosses $1$ exactly once. 
Denote the crossing point by $j^*$, and apply \Cref{lemma:two_action_optimal_min_budget_contract} to obtain the optimal contract.
\end{proof}

\subsubsection{Two-outcome contracts with MLRP}
\label{subsubsec:two_outcome_contracts_with_mlrp}
When there are more than two actions, assume without loss of generality that the contract aims to implement the last action $n$. The following claim establishes the existence of theshold contracts in the two-outcome setting $\Size{\OutcomeSpace}=2$. In contrast to \cref{corollary:two_outcome_contracts_are_always_all_or_nothing}, the proof in constructive, and yields a concrete contract:

\begin{claim}
\label{claim:two_outcome_mlrp_is_threshold}
    For any contract design problem satisfying MLRP with $n>2$ actions and $m=2$ outcomes, the optimal min-budget contract is a threshold contract.
\end{claim}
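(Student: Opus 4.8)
The plan is to solve the MIN-BUDGET LP directly in the two-outcome case, exploiting the fact that with $\Size{\OutcomeSpace}=2$ the incentive constraints collapse to a single scalar inequality in the ``payment gap'' $t_1-t_0$. Assume without loss of generality that the target is the highest-cost action $n$, write $\OutcomeSpace=\Set{0,1}$, and let each action $i$ induce $F_i=(F_{i,0},F_{i,1})$ with $F_{i,0}+F_{i,1}=1$. First I would record the structural consequence of MLRP: for every $i$ with $c_i<c_n$ we have $F_i\prec F_n$, so by \Cref{claim:mlr_likelihood_ratio_strict_inequality} the low-outcome mass strictly decreases and the high-outcome mass strictly increases with cost, giving $\delta_i:=F_{n,1}-F_{i,1}>0$. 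Thus higher effort concentrates probability on the ``good'' outcome $j=1$.

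Next I would simplify the \ICConstraintName{} constraints of \Cref{eq:min_budget_lp}. Using $F_{n,0}-F_{i,0}=-(F_{n,1}-F_{i,1})=-\delta_i$, each constraint $\sum_j (F_{n,j}-F_{i,j})t_j\ge c_n-c_i$ becomes $\delta_i\,(t_1-t_0)\ge c_n-c_i$. Since $\delta_i>0$ and $c_n-c_i>0$ for every $i$ with $c_i<c_n$, any feasible contract must satisfy $t_1-t_0\ge (c_n-c_i)/\delta_i>0$; in particular $t_1>t_0\ge 0$, so the budget is $B=\Norm{t}_\infty=t_1$. Combining the constraints, feasibility is equivalent to $t_1-t_0\ge \Delta^*$, where $\Delta^*:=\max_{i:\,c_i<c_n}(c_n-c_i)/\delta_i$ is finite precisely because each $\delta_i>0$ (which simultaneously certifies that action $n$ is implementable).

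Finally I would minimize the budget. Minimizing $B=t_1$ subject to $t_1-t_0\ge\Delta^*$ and $t_0\ge0$ forces $t_0=0$ and $t_1=\Delta^*$, and any other feasible point has strictly larger $t_1$. Hence the unique optimal min-budget contract is $t^*=(0,\Delta^*)$, i.e. $t^*_j=\Delta^*\,\Indicator{j\ge 1}$, which is exactly a threshold contract (threshold $j_0=1$) with budget $\Delta^*$. This refines \Cref{corollary:two_outcome_contracts_are_always_all_or_nothing} by ruling out both the ``reverse'' all-or-nothing contract $(B,0)$ and the constant contract $(B,B)$.

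I do not anticipate a serious obstacle, since everything reduces to a one-dimensional minimization once the constraints are rewritten in $t_1-t_0$. The only points requiring care are (i) invoking MLRP correctly to obtain the strict inequality $\delta_i>0$, which guarantees $\Delta^*<\infty$ and ensures the budget is genuinely $t_1$ rather than $t_0$; and (ii) the bookkeeping around cost ties: if $c_i=c_n$ the corresponding constraint is vacuous, and if the intended target is not the maximal-cost action one should first argue, as elsewhere in this section, that restricting to the highest implementable action is without loss. An alternative route, paralleling the proof of \Cref{thm:sufficiency}, is to observe that only the single constraint attaining the maximum in $\Delta^*$ binds, reducing the instance to a two-action problem and then quoting \Cref{claim:optimal_two_action_contract_with_mlrp} directly.
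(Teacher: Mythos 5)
Your proof is correct, but it takes a genuinely different route from the paper's. You work entirely in the primal: with $\Size{\OutcomeSpace}=2$ every \ICConstraintName{} constraint of \Cref{eq:min_budget_lp} collapses (using $F_{i,0}+F_{i,1}=1$) to the single scalar inequality $t_1-t_0\ge (c_n-c_i)/\delta_i$, with $\delta_i=F_{n,1}-F_{i,1}>0$ guaranteed by \Cref{claim:mlr_likelihood_ratio_strict_inequality}, and then a one-dimensional minimization forces $t_0=0$, $t_1=\Delta^*=\max_{i}(c_n-c_i)/\delta_i$. The paper instead invokes its dual machinery: it specializes the dual LP of \Cref{claim:min_budget_lp_dual_proof} to two outcomes, uses the same MLRP consequence to discard the low-outcome dual constraint, solves the resulting ``bang-for-buck'' maximization to get $B^*=\max_{i\in[n-1]}(c_n-c_i)/(F_{n,2}-F_{i,2})$ (identical to your $\Delta^*$ up to outcome indexing), and then verifies that the threshold contract paying $B^*$ on the high outcome is primal-feasible, concluding by strong duality. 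Your argument is more elementary and self-contained, and it yields something the paper's does not state: uniqueness of the optimal contract (any feasible contract has $t_1>\Delta^*$ unless $t_0=0$). The paper's route buys a dual certificate that explicitly identifies which single constraint binds --- the action attaining the max --- which connects to the reduction-to-two-actions theme behind \Cref{thm:sufficiency} (your flagged alternative route via \Cref{claim:optimal_two_action_contract_with_mlrp} is essentially that proof), and it lets the paper observe that the resulting contract coincides with the min-pay contract of \citet[Lemma 7]{dutting2019simple}. Two minor bookkeeping points in your write-up: your claim that cost ties $c_i=c_n$ make the corresponding constraint ``vacuous'' is slightly imprecise --- the constraint becomes $(F_{n,1}-F_{i,1})(t_1-t_0)\ge 0$, which is implied by $t_1\ge t_0$ only when $F_{i,1}\le F_{n,1}$ (MLRP does not order equal-cost actions), and otherwise implementability of action $n$ fails; and your conclusion needs at least one action with $c_i<c_n$ to force $t_1>t_0$, which holds in any nondegenerate instance. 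The paper's proof silently assumes strict cost ordering as well, so neither issue is a gap relative to the paper's own standard of rigor.
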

\begin{proof}
By \Cref{claim:min_budget_lp_dual_proof}, the dual LP for two outcomes is given by:
\begin{equation}
\begin{aligned}
\label{eq:min_budget_two_outcomes_lp_dual_raw}
    &\max_{\lambda\in\Reals_{\ge 0}^{n-1}, \mu\in\Reals_{\ge 0}^2}
    \sum_{i=1}^{n-1} (c_n-c_i) \lambda_i
    \\
    & \mathrm{s.t.}
    \\
    &
    \forall j\in\Set{1,2}:
    \sum_{i=1}^{n-1}\left( F_{n,j} - F_{i,j} \right) \lambda_i \le \mu_j
    \\
    &\sum_{j\in\Set{1,2}} \mu_j \le 1
\end{aligned}
\end{equation}
From \Cref{claim:mlr_likelihood_ratio_strict_inequality}, we obtain that $F_{n,1} - F_{i,1} < 0$ and $F_{n,2} - F_{i,2} > 0$ for all $i\in[n-1]$, 
and therefore the first constraint in \Cref{eq:min_budget_two_outcomes_lp_dual_raw} is always satisfied for $j=1$. Simplifying \Cref{eq:min_budget_two_outcomes_lp_dual_raw} we obtain:
\begin{equation}
\begin{aligned}
\label{eq:min_budget_two_outcomes_lp_dual}
    &\max_{\lambda\in\Reals_{\ge 0}^{n-1}, \mu\in\Reals_{\ge 0}}
    \sum_{i=1}^{n-1} (c_n-c_i) \lambda_i
    \\
    & \mathrm{s.t.}
    \\
    &
    \sum_{i=1}^{n-1}\left( F_{n,2} - F_{i,2} \right) \lambda_i \le 1
\end{aligned}
\end{equation}
which is maximized by allocating all budget to the $\lambda_i$ maximizing the ``bang for buck''. The dual objective is therefore given by: 
\begin{equation}
\begin{aligned}
    B^*
    &= \max_{\lambda\in\Reals_{\ge 0}^{n-1}, \mu\in\Reals_{\ge 0}} 
    \sum_{i\in[n-1]} (c_n-c_i) \lambda_i
    \\
    &=
    \max_{i\in[n-1]} \frac{c_n-c_i}{F_{n,2}-F_{i,2}}
\end{aligned}
\end{equation}
To see that a threshold contract is optimal, let $t_j^*=B^* \cdot \Indicator{j=1}$. Primal objective is $B^*$, and the contract is feasible if the primal LP is feasible. The \ICConstraintName{}  constraint in \Cref{eq:min_budget_lp} can be written as:
$$
\forall i\in[n-1]: \frac{\sum_{j=1}^2 \left(F_{n,j}-F_{i,j}\right)t_j}{c_n-c_i} \ge 1
$$
and indeed for every action $i \in [n-1]$:
\begin{align*}
\frac{\sum_{j=1}^2 \left(F_{n,j}-F_{i,j}\right)t_j}{c_n-c_i}
&= \frac{F_{n,2}-F_{i,2}}{c_n-c_i} B^*
\\
&= \frac{F_{n,2}-F_{i,2}}{c_n-c_i} \max_{i\in[n-1]} \frac{c_n-c_i}{F_{n,2}-F_{i,2}}
\\
&\ge 1
\end{align*}
And therefore $t^*_j$ is feasible in the primal problem, and also optimal by strong LP duality.
Also note that the resulting contract coincides exactly with the optimal min-pay contract as attained by \citet[Lemma 7]{dutting2019simple}.
\end{proof}

\subsubsection{General contracts with MLRP}
\label{subsubsec:general_contracts_with_mlrp}
In this section, we explore min-budget contracts in MLRP settings with more than two actions and more than two outcomes.
We start with a negative example, showing that the MLRP assumption is not sufficient for establishing the optimality of threshold contracts:

\begin{claim}
\label{claim:multi_action_non_threshold_counterexample}
    For $\Size{\OutcomeSpace}>2$, there exists a design problem satisfying MLRP for which the optimal contract is not a threshold.
\end{claim}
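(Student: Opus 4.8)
The plan is to prove the claim by explicit construction: I will exhibit a single principal--agent instance with three actions and three outcomes that satisfies MLRP, yet whose budget-optimal (min-budget) contract for the top action $n_3$ is \emph{not} of the threshold form $t_j = B\,\Indicator{j\ge j_0}$. A counterexample must use at least three actions, since by \Cref{claim:optimal_two_action_contract_with_mlrp} every two-action MLRP instance already yields a threshold contract, and at least three outcomes, since by \Cref{claim:two_outcome_mlrp_is_threshold} two outcomes under MLRP do as well; so I target the smallest potentially interesting case. The guiding heuristic is the remark preceding the claim: I want the three distributions chosen so that, at the consecutive crossing points $j^*_{F_1,F_3}$ and $j^*_{F_2,F_3}$, the survival profile $\Survival{F_i}{\cdot}$ fails to be concave, which is precisely the feature that breaks the reduction to the binary-action case underlying \Cref{thm:sufficiency}.

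Concretely, I would first fix costs $c_1<c_2<c_3$ and hand-pick $F_1,F_2,F_3\in\DistOver{\OutcomeSpace}$, then verify MLRP directly from \Cref{def:mlrp} by checking that the likelihood ratios $F_{2,j}/F_{1,j}$ and $F_{3,j}/F_{2,j}$ are nondecreasing in $j$; \Cref{claim:mlr_likelihood_ratio_strict_inequality} then guarantees the crossing points are well defined and, in particular, that it is never profitable to pay on the lowest outcome. Next I would confirm that the target action $n_3$ is implementable via \Cref{prop:min_pay_feasibility_dutting} together with \Cref{claim:min_pay_feasibility_equivalence}, checking that no convex combination of $F_1,F_2$ reproduces $F_3$ at strictly lower cost. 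I would then solve the min-budget problem for $n_3$ using the dual of \Cref{claim:min_budget_lp_dual_proof} (or equivalently the statistical form of \Cref{lemma:min_budget_statistical_lp}): I would produce a primal contract $t^*$ with $t^*_2=B^*$ and $0<t^*_1<B^*$, together with a matching dual solution $(\lambda,\mu)$ in which \emph{two} IC multipliers are positive, certifying optimality by strong LP duality. Since the support of any optimal contract is confined to the top two outcomes, the only threshold candidates are ``pay on $\{2\}$'' and ``pay on $\{1,2\}$''; I would finish by computing the required budget of each and showing both strictly exceed $B^*$, so that $t^*$, having a strictly fractional payment at the intermediate outcome, is provably non-threshold.

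The structural reason I would make explicit is this: a threshold contract collapses the discrimination between $n_3$ and each competing action $i$ into the single scalar $\Survival{F_3}{j_0-1}-\Survival{F_i}{j_0-1}$, so minimizing $B$ over a single threshold is equivalent to applying the two-action bound of \Cref{claim:optimal_two_action_contract_with_mlrp} against the worst competitor. When these survival differences are concave across the crossing points, only one IC constraint binds at the optimum and the single-threshold value is in fact optimal, which is exactly the regime of \Cref{thm:sufficiency}. By deliberately arranging non-concavity at the crossing points, I force the IC constraints against $n_1$ and against $n_2$ to bind \emph{simultaneously}; satisfying both at minimum budget pins the payment on the middle outcome to a strictly interior value, and no monotone step function can reproduce this.

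The main obstacle is purely constructive: finding numerical values of $F_1,F_2,F_3$ and of $(c_1,c_2,c_3)$ that \emph{simultaneously} satisfy MLRP, keep $n_3$ implementable, and make the two IC constraints co-bind at an interior $t_1$, all while remaining valid probability vectors. Once the instance is fixed the verification is routine --- MLRP is a finite list of ratio comparisons, implementability is a small feasibility check, and optimality is a primal--dual match --- but threading all three requirements at once, and certifying that \emph{every} threshold (not merely the obvious one) is strictly worse than the fractional optimum, is where the care is needed.
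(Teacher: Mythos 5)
Your approach is the same as the paper's: an explicit minimal counterexample with three actions and more than two outcomes, engineered so that the crossing-point survival $s_i$ (\Cref{def:crossing_point_survival}) is non-concave in $c_i$, which makes two IC constraints bind simultaneously and forces a strictly fractional payment at an intermediate outcome that no threshold contract can replicate. Your structural scaffolding is also correct and matches the paper: the lower bounds on why three actions and three outcomes are necessary (via \Cref{claim:optimal_two_action_contract_with_mlrp} and \Cref{claim:two_outcome_mlrp_is_threshold}), the identification of non-concavity as the mechanism of failure, and the primal--dual certification plan via \Cref{claim:min_budget_lp_dual_proof}.

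The gap is that you never exhibit the instance. The claim is purely existential, so the witness \emph{is} the proof; everything else in your write-up is verification machinery that can only be run once numbers exist. You yourself flag the ``purely constructive'' step --- finding $F_1,F_2,F_3$ and $c_1<c_2<c_3$ that simultaneously satisfy MLRP, keep the top action implementable, and make the two IC constraints co-bind at an interior payment --- as the main obstacle, and then leave it unresolved; as written there is no argument that these requirements are mutually satisfiable, which is precisely what the claim asserts. The paper closes this gap concretely: $F_i = \Binomial(10,p_i)$ with $p_i\in\Set{0.5,0.65,0.8}$ and costs $(0,0.45,1)$, for which the fractional LP optimum of \Cref{eq:min_budget_statistical_lp} is $t^*_{\mathrm{LP}}=(0,\dots,0,1.1,1.46,1.46,1.46)$ while the best integer (hence any threshold) contract requires budget $1.51$; its accompanying remark also records the minimal three-outcome instance you are targeting, $F_1=(0.5,0.3,0.2)$, $F_2=(0.3,0.4,0.3)$, $F_3=(0.1,0.35,0.55)$ with the same costs. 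One further difference worth noting: the paper certifies threshold suboptimality by numerically comparing the LP optimum against the integer-constrained optimum, whereas your analytic primal--dual certificate with two positive IC multipliers would be a more rigorous finish --- but only after the construction is actually on the table.
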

\begin{proof}
Consider the following contract design problem:
\begin{equation*}
\begin{aligned}
F_{1} &\sim \mathrm{Binomial}(10, 0.5)
\\
F_{2} &\sim \mathrm{Binomial}(10, 0.65)
\\
F_{3} &\sim \mathrm{Binomial}(10, 0.8)
\end{aligned}\qquad\qquad\begin{aligned}
c_{1}&=0
\\
c_{2}&=0.45
\\
c_{3}&=1
\end{aligned}
\end{equation*}
The distributions are binomial, and therefore the contract satisfies MLRP.
Numerically solving \Cref{eq:min_budget_statistical_lp} yields the following fractional contract:
$$
t^*_{\mathrm{LP}} =
(0, 0, 0, 0, 0, 0, 0, 1.1, 1.46, 1.46, 1.46)
$$
Numerically solving \Cref{eq:min_budget_statistical_lp} while imposing integer constraints $\phi_j\in\Set{0,1}$ yields the following contract, which has higher max payout:
$$
t^*_{\mathrm{IP}} =
(0, 0, 0, 0, 0, 0, 0, 1.51, 1.51, 1.51, 1.51)
$$
Thus for this case any threshold contract is min-budget suboptimal. A graphical illustration of the proof is provided in \Cref{fig:multi_action_counterexample}.
\begin{figure}
    \centering
    \includegraphics[width=\columnwidth]{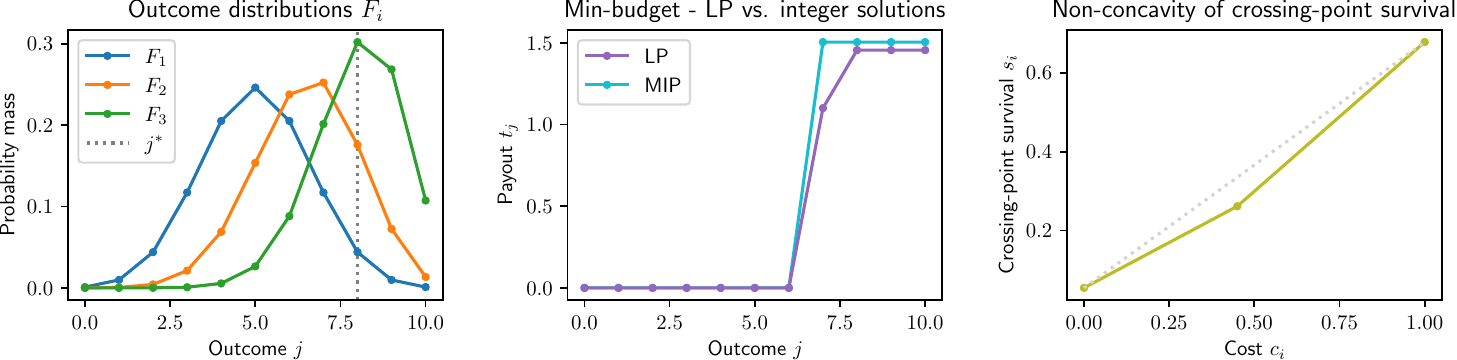}
    \caption{
    Graphical illustration of \Cref{claim:multi_action_non_threshold_counterexample}.
    \textbf{(Left)} Outcome distributions $F_1,F_2,F_3$ and MLR crossing point $j^*_{F_2,F_3}$.
    \textbf{(Center)}
    The min-budget contract $t^*_j$, given by numerically solving \cref{eq:min_budget_statistical_lp} (purple), and numerically solving \cref{eq:min_budget_statistical_lp} while imposing integer constraints $\phi_j\in\Set{0,1}$ (cyan). Note that the fractional LP solution achieves lower max payout.
    \textbf{(Right)}
    Crossing-point survival $s_i=\Survival{F_i}{j^*_{F_i,F_{n}}-1}$ as a function of cost $c_i$ (see \cref{def:crossing_point_survival}). Note that the function is not concave, thus the sufficient condition given in \Cref{thm:sufficiency} does not hold in this case.    }
    \label{fig:multi_action_counterexample}
\end{figure}
\end{proof}

\begin{remark}
The proof of \Cref{claim:multi_action_non_threshold_counterexample} is provided with $10$ outcomes ($\Size{\OutcomeSpace}=10$) for ease of graphical interpretation, however a minimal counterexample may also be constructed using only $3$ outcomes.
For example, consider the following design problem:
\begin{equation*}
\begin{aligned}
F_1 &= (0.5, 0.3, 0.2) \\
F_2 &= (0.3, 0.4, 0.3) \\
F_3 &= (0.1, 0.35, 0.55)
\end{aligned}
\qquad\qquad
\begin{aligned}
c_1 &= 0 \\
c_2 &= 0.45 \\
c_3 &= 1
\end{aligned}
\end{equation*}
The proof for this case follows in the same way, where numerical computation yields the contracts:
\begin{align*}
t^*_{\mathrm{LP}} &=
(0, 1.9, 2.6)
\\
t^*_{\mathrm{IP}} &=
(0, 2.7, 2.7)
\end{align*}
\end{remark}

\begin{remark}
We also note that a counterexample with $2$ outcomes is not possible due to \Cref{claim:two_outcome_mlrp_is_threshold}.
\end{remark}

The negative example shows that even with MLRP, the optimal contract incentivizing the highest implementable action needs to rule out deviations of the agent to all other actions, and not just to the second-highest one. This makes the contract complex. We identify a natural economic condition that is sufficient for considering only a single deviation (to the second-highest action), resulting in a simple contract. Note that considering a single such deviation is equivalent to restricting the action space to actions $\Set{n-1,n}$.

\subsection{Sufficiency condition for more than two actions}
\label{subsec:sufficiency_condition_appendix}

For the following definition, recall the definition of MLR crossing point $j^*$ (Def.~\ref{def:mlr_crossing_point}), and survival function $\mathbb{S}$ (Def.~\ref{def:survival_function}):

\begin{definition}[\ConcavityAssumption{}; Formal statement of \Cref{def:concavity_assumption}]
\label{def:crossing_point_survival}
    For a contract design setting satsifying MLRP, denote by $j^*=j^*_{F_n, F_{n-1}}$ the crossing point of the outcome distribution corresponding to the highest action $F_n$, and the outcome distribution corresponding to the second-highest action $F_{n-1}$. For any action $i\in[n]$, the \emph{crossing-point survival} $s_i$ is defined as:
    $$
    s_i = \Survival{F_i}{j^*-1}
    $$
\end{definition}

In words, $s_i$ is the probability to get an outcome at or above crossing-point $j^*$ according to outcome distribution $F_i$.
From an economic perspective, for every action $i\in[n]$, $s_i$ is the agent's probability to receive any nonzero payment from choosing action $i$, if the contract is designed by restricting the action-space to actions $\Set{n-1,n}$. Note however that the definition of $s_i$ only depends on the outcome distribution $F_{ij}$, and does not require solving the contract design problem.

\subsubsection{Binomial power-law curves satisfy \ConcavityAssumption{}}
\label{subsec:binomial_concave_survival}

Recent theoretical work on learning curves has focused mainly on power-law expected learning curves of the form $a_n=1-\beta n^{-\gamma}$ \citep{bousquet2021theory,hutter2021learning}. In addition, these curves were also found to provide a good fit for certain practical applications \citep[POW2]{viering2022shape}. In this section, we show that a stochastic generalization of these curves satisfies the \ConcavityAssumption{} property:

\begin{definition}[Power-law stochastic learning curve]
    Let $\beta,\gamma\in\NonNegativeReals$, and $m\in\Naturals$. A delegated learning setting with actions $\Actions$ has a realizable power-law stochastic learning curve with parameters $\beta, \gamma$ if $F_i = \mathrm{Binomial}\left(1-\beta n_i^{-\gamma}, m\right)$ for all $n_i\in\Actions$.
\end{definition}

For the proof, we also recall the definition of the regularized incomplete beta function:
\begin{definition}[Regularized incomplete beta function]
\label{def:regularized_incomplete_beta}
    For $k_1,k_2\ge 1$, the regularized incomplete beta function is defined as:
    $$
    I_p(k_1,k_2)=
    \frac{
    \int_0^{p} t^{k_1-1}(1-t)^{k_2-1}\mathrm{d}t
    }{
    \int_0^1 t^{k_1-1}(1-t)^{k_2-1}\mathrm{d}t
    }
    $$
\end{definition}

We also recall that $I_p$ is related to the survival function of the binomial distribution \citep[e.g.,][6.6.4]{abramowitz1988handbook}:
$$
I_p(k+1,n-k)
= \prob{x\sim\mathrm{Binomial}(p,n)}{x > k}
$$

\begin{claim}
\label{claim:binomial_concave_survival}
    Let $\beta,\gamma\in\NonNegativeReals$, $m\in\Naturals$.
    A delegated learning setting with a power-law stochastic learning curve $F_i=\mathrm{Binomial}(1-\beta n_i^{-\gamma}, m)$, action costs $c_i=n_i$, and
    $\min_i n_i \ge \left(
    \nicefrac{
    \beta (m+\gamma^{-1})
    }{
    1+\gamma^{-1}
    }
    \right)^{\frac{1}{\gamma}}$
    satisfies the \ConcavityAssumption{} assumption.
\end{claim}
\begin{proof}
Denote the expected accuracy of each action by $a_n=1-\beta n^{-\gamma}$. The survival function of the binomial distribution is given by:
$$
s_n = I_{a_n}\left(j^*,m+1-j^*\right)
$$
Where $I_{a_n}$ is the regularized incomplete beta function (\cref{def:regularized_incomplete_beta}). With slight abuse of notation, we treat $a_n$ and $s_n$ as continuous functions of $n$.
Taking the derivative by $n$ and ignoring the constant denominator, we obtain:
\begin{align*}
\FullDerivative{s_n}{n}
&\propto
a_n^{j^*-1}(1-a_n)^{m-j^*}
\FullDerivative{a_n}{n}
\end{align*}
Plugging the functional form of $a_n$:
\begin{align*}
\FullDerivative{s_n}{n}
&\propto
\left(1-\beta n^{-\gamma} \right)^{j^*-1}
\left(\beta n^{-\gamma} \right)^{m-j^*}
\beta \gamma n^{-(\gamma+1)}
\\
&\propto
\left(\beta n^{-\gamma} \right)^{m-j^* + 1+ \gamma^{-1}}
\left(1-\beta n^{-\gamma} \right)^{j^*-1}
\end{align*}
As a function of $\beta n^{-\gamma}$, the function $\FullDerivative{s_n}{n}$ attains its extremum at:
$$
\beta \tilde{n}^{-\gamma}=
\frac{
m+1-j^*+\gamma^{-1}
}{
m+\gamma^{-1}
}
$$
$$
\tilde{n} =
\left(
\frac{
\beta (m+\gamma^{-1})
}{
m+1-j^*+\gamma^{-1}
}
\right)^{\frac{1}{\gamma}}
$$
And therefore the function $s_n$ has an inflection point at $\tilde{n}$ (see \cref{fig:binomial_concave_survival}).
From the upper bound $j^*\le m$ we obtain:
$$
\tilde{n} \le
\left(
\frac{
\beta (m+\gamma^{-1})
}{
1+\gamma^{-1}
}
\right)^{\frac{1}{\gamma}} = n_0
$$
And as $\min_i n_i \ge n_0$, the function $s_n$ is convex as a function of $c_n$ for all $n\in\Actions$.
\end{proof}

The proof is illustrated in \Cref{fig:binomial_concave_survival}.

\begin{figure}
    \centering
    \includegraphics[width=\columnwidth]{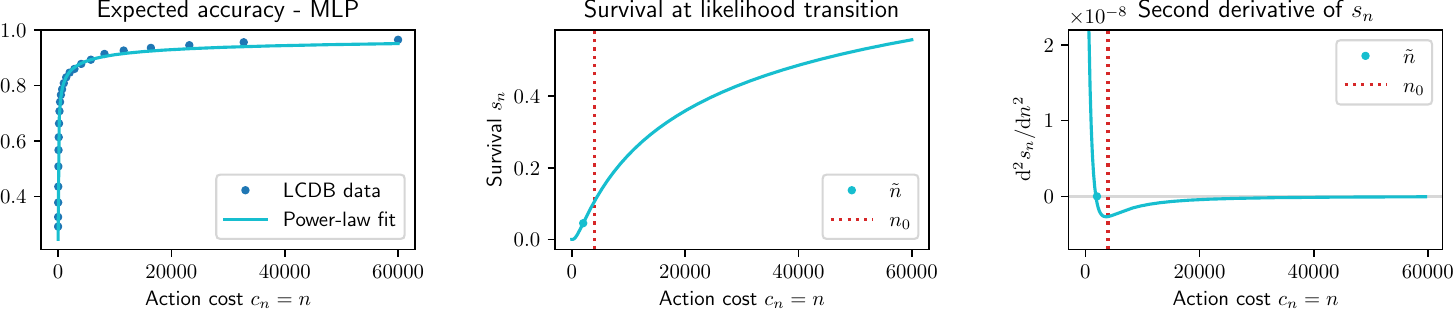}
    \caption{
    Graphical intuition for the sufficiency condition in \Cref{claim:binomial_concave_survival}.
    \textbf{(Left)}
    Expected accuracy curve for \MNIST{} MLP. Blue dots represent empirical data from the LCDB dataset, cyan curve represents power-law fit ($a_n=1-\beta n^{-\gamma}$, with $\hat{\beta}=\BinomialConcaveSurvivalExperimentBeta{}$, $\hat{\gamma}=\BinomialConcaveSurvivalExperimentGamma{}$).
    \textbf{(Center)}
    Crossing point survival $s_i$ as a function of cost $c_i$ (see \cref{def:crossing_point_survival}), for $m=\BinomialConcaveSurvivalExperimentM{}$. Cyan dot represents inflection point $\tilde{n}\approx\BinomialConcaveSurvivalExperimentInflectionPoint{}$. Red vertical line represents the inflection point bound $n_0\approx\BinomialConcaveSurvivalExperimentInflectionPointBound{}$ suggested by the proof. It can be observed that the curve is concave for all $n>\tilde{n}$.
    \textbf{(Right)}
    Second derivative of the survival function $s_n$, illustrating the position of the bound $n_0$ in relation to the inflection point $\tilde{n}$.
    }
    \label{fig:binomial_concave_survival}
\end{figure}

\subsubsection{Proof of sufficiency theorem}
\begin{theorem}[\ConcavityAssumption{} implies threshold contracts; formal statement of \Cref{thm:sufficiency}]
\label{theorem:concavity_is_sufficient_condition_for_threshold}
    For a contract design problem satisfying MLRP, 
    consider $s_i$ as a function of cost $c_i$. If this function is concave,
    then the optimal contract is a threshold contract.
    Furthermore, the contract is successfully recovered by the \AlgNameShort{} algorithm.
\end{theorem}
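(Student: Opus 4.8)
The plan is to reduce the $n$-action problem to the two-action problem on the highest pair of actions $\Set{n-1,n}$ and then invoke the two-action characterization \Cref{claim:optimal_two_action_contract_with_mlrp}. Write $B^*$ for the optimal budget of the full MIN-BUDGET LP \eqref{eq:min_budget_lp} implementing action $n$, and let $B_{n-1}$ be the optimal budget of the same program after discarding every action except $\Set{n-1,n}$. Restricting the action set only deletes \ICConstraintName{} constraints, so the restricted feasible region contains the full one and therefore $B_{n-1}\le B^*$. By \Cref{claim:optimal_two_action_contract_with_mlrp}, the restricted optimum is the threshold contract $t^*_j=B_{n-1}\Indicator{j\ge j^*}$, where $j^*$ is the crossing point of the two highest actions (as in \Cref{def:crossing_point_survival}) and $B_{n-1}=\nicefrac{(c_n-c_{n-1})}{\TV{F_n}{F_{n-1}}}$. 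The entire argument then hinges on a single claim: \emph{under the concavity hypothesis this same $t^*$ is feasible for the full LP}. Feasibility yields $B^*\le\Norm{t^*}_\infty=B_{n-1}$, which combined with $B_{n-1}\le B^*$ forces $B^*=B_{n-1}$ and makes the threshold $t^*$ optimal for the full problem.

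To verify feasibility I would pass to the statistical form \eqref{eq:min_budget_statistical_lp}, equivalently the maximin form with design matrix $A_{ij}=\nicefrac{(F_{n,j}-F_{i,j})}{(c_n-c_i)}$, in which a contract with $\phi=\nicefrac{t}{B}$ and $\beta=\nicefrac{1}{B}$ is feasible exactly when $(A\phi)_i\ge\beta$ for every $i<n$. For the threshold $\phi^*_j=\Indicator{j\ge j^*}$ a direct summation collapses to the survival functions:
\begin{equation*}
(A\phi^*)_i=\frac{1}{c_n-c_i}\sum_{j\ge j^*}\left(F_{n,j}-F_{i,j}\right)=\frac{s_n-s_i}{c_n-c_i},
\end{equation*}
with $s_i=\Survival{F_i}{j^*-1}$ the crossing-point survival of \Cref{def:crossing_point_survival}. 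For the second-highest action, \Cref{claim:total_variation_mlr_survival} gives $s_n-s_{n-1}=\TV{F_n}{F_{n-1}}$, so $(A\phi^*)_{n-1}=\nicefrac{(s_n-s_{n-1})}{(c_n-c_{n-1})}=\beta^*$ and the $i=n-1$ constraint is exactly tight, as the reduction predicts.

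It then remains to show $(A\phi^*)_i\ge\beta^*$ for every $i<n-1$, that is
\begin{equation*}
\frac{s_n-s_i}{c_n-c_i}\ge\frac{s_n-s_{n-1}}{c_n-c_{n-1}}.
\end{equation*}
This is precisely the statement that the secant of the map $c\mapsto s$ from a far-left point $(c_i,s_i)$ to the fixed right endpoint $(c_n,s_n)$ is at least as steep as the secant from the nearer point $(c_{n-1},s_{n-1})$ to $(c_n,s_n)$ --- the middle-versus-right chord inequality that characterizes concavity of $s$ as a function of cost. Hence the concavity hypothesis is exactly the certificate that all the other constraints are satisfied, the unique binding one being the deviation to action $n-1$, and the proof closes via the reduction of the first paragraph.

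I expect the main difficulty to be bookkeeping rather than ideas: keeping the crossing point consistent between the two-action threshold (which sits at the crossing point of $F_{n-1}$ and $F_n$) and the definition of $s_i$, checking that under MLRP action $n$ is implementable so that \Cref{claim:optimal_two_action_contract_with_mlrp} genuinely applies and $j^*$ is well-defined with $c_{n-1}<c_n$, and phrasing the relaxation bound $B_{n-1}\le B^*$ cleanly at the level of feasible regions. The single load-bearing observation is that the needed comparison of secant slopes is literally the concavity of $s_i$ versus $c_i$; everything else is the two-action result together with \Cref{claim:total_variation_mlr_survival}.
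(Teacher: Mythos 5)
Your proposal is correct and follows essentially the same route as the paper's proof: relax the MIN-BUDGET LP to the single IC constraint against action $n-1$, solve that two-action problem via \Cref{claim:optimal_two_action_contract_with_mlrp} and \Cref{claim:total_variation_mlr_survival}, and then use concavity of $s_i$ in $c_i$ (as the chord-slope inequality $\frac{s_n-s_i}{c_n-c_i}\ge\frac{s_n-s_{n-1}}{c_n-c_{n-1}}$, which the paper phrases as monotonicity of $b_i=\frac{c_n-c_i}{s_n-s_i}$) to certify feasibility of the resulting threshold contract for the full LP. The only difference is cosmetic: you verify feasibility in the statistical/maximin variables $(\phi,\beta)$, while the paper does the identical computation directly on the contract variables $t_j$.
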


\begin{proof}
To prove this claim, we construct a relaxed version of the min-budget LP (\cref{eq:min_budget_lp}), and apply \Cref{lemma:two_action_optimal_min_budget_contract} in order to solve it.
We then show that this solution is also feasible for the original LP.

Given the min-budget LP, construct a relaxed LP by eliminating \ICConstraintName{} constraints for all $i\le n-2$:
\begin{equation}
\begin{aligned}
    \label{eq:min_budget_lp_last_constraint}
    &\min_{t\in\Reals_{\ge 0}^\Size{\OutcomeSpace}, B\in\Reals_{\ge 0}}
    B
    \\
    & \mathrm{s.t.}
    \\
    & \forall j\in\OutcomeSpace: t_j \le B
    \\
    &\sum_{j\in\OutcomeSpace} F_{n-1,j}t_j - c_{n-1} 
    \le 
    \sum_{j\in\OutcomeSpace} F_{n,j} t_j - c_{n}
\end{aligned}
\end{equation}

\Cref{eq:min_budget_lp_last_constraint} only depends on $F_n$ and $F_{n-1}$, and therefore \cref{claim:optimal_two_action_contract_with_mlrp} can be applied to obtain the optimal min-budget contract:
\begin{equation}
\label{eq:relaxed_lp_two_action_optimal_contract}
    t_j^* = \frac{c_n-c_{n-1}}{\TV{F_n}{F_{n-1}}} \Indicator{F_{n,j}\ge F_{n-1,j}}
\end{equation}

Let $j^*=\min \Set{j\in\Outcomes \mid F_{n,j}\ge F_{n-1,j}}$ as in \cref{def:mlr_crossing_point}, and denote $s_i = \Survival{F_i}{j^*-1}$.
By \cref{claim:total_variation_mlr_survival}, we obtain: 
$$
\TV{F_n}{F_{n-1}}=s_n - s_{n-1}
$$
For all $i<n$, denote:
$$b_i = \frac{c_n-c_i}{s_n-s_i}$$
Using the definition of $b_i$, \cref{eq:relaxed_lp_two_action_optimal_contract} can be rewritten as:
\begin{equation*}
    t_j^* = b_{n-1} \Indicator{j\ge j^*}
\end{equation*}

By assumption, $s_i$ is a concave function of $c_i$, and therefore the function $b_i$ is monotonically non-decreasing for all $i<n$:
\begin{equation*}
b_{n-1} \ge b_i
\end{equation*}
Dividing both sides by $b_i$, we obtain:
\begin{equation*}
\frac{b_{n-1}}{b_i} \ge 1
\end{equation*}
 Plugging in the definition of $b_i$, and multiplying both sides by $(c_n-c_i)$:
 \begin{equation}
 \label{eq:ic_constraint_with_survival_function}
b_{n-1}\left(s_n - s_i\right) \ge c_n - c_i
\end{equation}
By definition of $t_j^*$, the expected payout of contract $t^*_j$ under action $i'\in[n]$ is given by:
\begin{equation}
\label{eq:survival_function_as_expected_payout}
\sum_{j=0}^m t_j^* F_{i',j}
= b_{n-1} \sum_{j=j^*}^m F_{i',j}
= b_{n-1} s_{i'}
\end{equation}
Plugging \cref{eq:ic_constraint_with_survival_function} into \cref{eq:survival_function_as_expected_payout} yields:
\begin{equation}
\label{eq:survival_function_as_ic_constraint}
\sum_{j=0}^m t_j F_{n,j} - \sum_{j=0}^m t_j F_{i,j}\ge c_n - c_i
\end{equation}
As \cref{eq:survival_function_as_ic_constraint} is identical to the \ICConstraintName{} constraint in \cref{eq:min_budget_lp},
we obtain that $t_j^*$ satisfies all the \ICConstraintName{} constraints in the original LP.
From this we conclude that $t^*_j$, which is a threshold contract, is also an optimal contract with the respect to the full min-budget LP.

For the second part of the claim, note that the \AlgNameShort{} algorithm also attempts to solve \cref{eq:min_budget_lp_last_constraint} on the iteration corresponding to action $N-1$. As the solution of \cref{eq:min_budget_lp_last_constraint} is guaranteed to be feasible for the original LP under \ConcavityAssumption{}, the \AlgNameShort{} successfully recovers it.
\end{proof}

\begin{figure}
    \centering
    \includegraphics[width=\columnwidth]{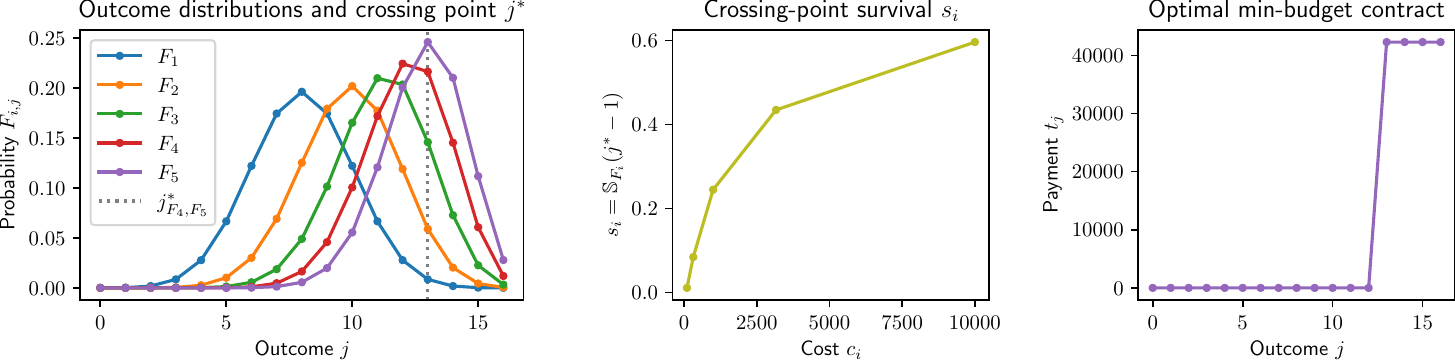}
    \caption{
    Graphical intuition for the sufficiency condition in \Cref{thm:sufficiency} (restated in \Cref{theorem:concavity_is_sufficient_condition_for_threshold}).
    \textbf{(Left)}
    Outcome distribution in a contract design setting satisfying MLRP, generated by a series of binomial distributions with power-law expectation $\acc_D(h_n)=0.9-0.4\left(\tfrac{n}{100}\right)^{-0.3}$. The gray dotted line represents the MLR crossing point $j^*_{F_4,F_5}$ (see \cref{def:mlr_crossing_point}).
    \textbf{(Center)}
    Crossing point survival $s_i$ as a function of cost $c_i$ (see \cref{def:crossing_point_survival}). It can be observed that the curve is concave.
    \textbf{(Right)}
    The min-budget contract implementing action $5$. It is a threshold contract, as guaranteed by \cref{thm:sufficiency}. Compare to \cref{fig:multi_action_counterexample}, where the sufficiency condition does not hold.
    }
    \label{fig:crossing_point_survival}
\end{figure}

\subsection{Min-budget contracts with guaranteed minimum payout}
\label{appendix:guaranteed_minimum_payout}
Our problem setting assumes that the agent selects its action rationally, as described in \cref{eq:agent_rational_choice}. However, in some practical settings the agent may be risk averse to some extent, and require guaranteed minimum payment from the contract. In this section, we show that the rationality assumption is made without loss of generality in such cases: \Cref{{lemma:min_wage_equivalence}}, which we prove below, shows that any optimal contract with guaranteed minimum payout can be represented as the sum of a min-budget contract without guaranteed payout, and a constant representing the payout guarantee.

\begin{definition}[Guaranteed minimum payout]
    Let $\delta \ge 0$. A contract $t\in\NonNegativeReals^\Size{\OutcomeSpace}$ has guaranteed payout of size $\delta$ if $t_j\ge \delta$ for all $j\in\OutcomeSpace$.
\end{definition}

\begin{lemma}
\label{lemma:min_wage_equivalence}
    A contract $t$ is a min-budget contract with guaranteed payout $\delta$ requiring budget $B$ if any only if there exist a min-budget contract $t'$ (without guaranteed payout) requiring budget $B'$ such that $t=t'+\delta$ and $B=B'+\delta$.
\end{lemma}

\begin{proof}
When agents require a guaranteed minimum payout $\delta \ge 0$, we add a constraint to the MIN-BUDGET linear program defined in \cref{eq:min_budget_lp}, such that an optimal min-budget contract $t$ with guaranteed payout $\delta$ is an optimal solution to the following linear program:
\begin{equation}
\label{eq:min_budget_lp_min_wage}
\begin{aligned}
    &\min_{t\in\Reals_{\ge 0}^\Size{\OutcomeSpace}, B\in\Reals_{\ge 0}}
    B
    \\
    & \mathrm{s.t.}
    \\
    & \forall j\in \OutcomeSpace: t_j \le B
    & \BudgetConstraintName
    \\
    &\forall i' \neq i: \sum_{j\in\OutcomeSpace} F_{i',j}t_j - c_{i'} \le \sum_{j\in\OutcomeSpace} F_{i,j} t_j - c_{i}
    & \ICConstraintName
    \\
    &\forall j \in \OutcomeSpace: t_j \ge \delta
    & \MinWageConstraintName
\end{aligned}
\end{equation}

To prove the first direction of the equivalence, assume that $t$ is a min-budget contract with guaranteed payout $\delta$ and budget $B$, and thus an optimal solution of \cref{eq:min_budget_lp_min_wage}.
Define the variable transformation:
\begin{equation}
\label{eq:min_wage_variable_transformation}
\begin{aligned}
    t'&=t-\delta
    \\
    B'&=B-t_0
\end{aligned}
\end{equation}
By \cref{eq:min_wage_variable_transformation}, the \BudgetConstraintName{} constraint in \cref{eq:min_budget_lp_min_wage} transforms into:
\begin{equation}
\label{eq:min_wage_budget_constraint_transformation}
\begin{aligned}    
\forall j\in \OutcomeSpace: 
&t_j \le B
\\&\Leftrightarrow 
t'_j+\delta \le B'+\delta
\\&\Leftrightarrow 
t'_j \le B'
\end{aligned}
\end{equation}
The \ICConstraintName{} constraint transforms into:
\begin{equation}
\label{eq:min_wage_ic_constraint_transformation}
\begin{aligned}
\forall i' \neq i: &
\sum_{j\in\OutcomeSpace} F_{i',j}t_j - c_{i'} \le \sum_{j\in\OutcomeSpace} F_{i,j} t_j - c_{i}
\\&\Leftrightarrow 
\sum_{j\in\OutcomeSpace} F_{i',j}(t'_j+\delta) - c_{i'} \le \sum_{j\in\OutcomeSpace} F_{i,j} (t'_j+\delta) - c_{i}
\\&\Leftrightarrow 
\sum_{j\in\OutcomeSpace} F_{i',j}t'_j - c_{i'} 
+ \delta \underbrace{\sum_{j\in\OutcomeSpace} F_{i',j}}_{=1}
\le \sum_{j\in\OutcomeSpace} F_{i,j} t'_j - c_{i} 
+ \delta \underbrace{\sum_{j\in\OutcomeSpace} F_{i,j}}_{=1}
\\&\Leftrightarrow 
\sum_{j\in\OutcomeSpace} F_{i',j}t'_j - c_{i'} 
\le \sum_{j\in\OutcomeSpace} F_{i,j} t'_j - c_{i} 
\end{aligned}
\end{equation}
And the \MinWageConstraintName{} constraint transforms into:
\begin{equation}
\label{eq:min_wage_minwage_constraint_transformation}
\begin{aligned}
\forall j \in \OutcomeSpace: &
t_j \ge \delta
\\&\Leftrightarrow 
t'_j \ge 0
\end{aligned}
\end{equation}
Plugging back the transformed constraints (\ref{eq:min_wage_ic_constraint_transformation}, \ref{eq:min_wage_budget_constraint_transformation}, \ref{eq:min_wage_minwage_constraint_transformation}) into \cref{eq:min_budget_lp_min_wage}, we obtain that $(t',B')$ is an optimal solution of the original MIN-BUDGET linear program \cref{eq:min_budget_lp}, and therefore $t'$ is an optimal min-budget contract without minimum guaranteed payout.

Conversely, assume that $t'$ is an optimal min-budget contract (without minimum guaranteed payout), satisfying the MIN-BUDGET linear program in \cref{eq:min_pay_lp} with budget $B'$. Apply the inverse transformation $t=t'+\delta$ and $B=B'+\delta$, and the equivalence relations (\ref{eq:min_wage_ic_constraint_transformation}, \ref{eq:min_wage_budget_constraint_transformation}, \ref{eq:min_wage_minwage_constraint_transformation}) in the inverse direction to obtain that $t$, $B$ is an optimal solution to \cref{eq:min_budget_lp_min_wage}.
\end{proof}
\section{Experimental details}
\label{sec:appendix_experiments}

\subsection{Data}

\paragraph{LCDB.} 
We base our main experimental environment on the LCDB dataset \citep{mohr2022lcdb}, 
which includes a large collection of stochastic learning curves for multiple learning algorithms and classification benchmark datasets.
For each learning method and benchmark dataset, the database includes held-out accuracy measurements, obtained for exponentially-increasing training set sizes
$n \in \Set{2^4=16,\dots,\mathrm{round}(2^{k/2}),\dots,2^{15}=32,768}$, with multiple repetitions per $n$ obtained by cross-validation.
For all learning curves we consider in our analysis,
and for each $n$,
the number of repetitions provided in the dataset is in the range $R\in\Set{5,\dots,125}$, where the specific number of repetitions in LCDB depends on the algorithm and benchmark dataset (see the dataset documentation for additional details).
For each trained classifier, each accuracy point on the learning curve is estimated using 5,000 held-out samples. 
Formally, and using our notation,
for each learning algorithm $\Learner$ (e.g. MLP), dataset $D$ (e.g. MNIST), and training set size $n$, LCDB provides a set of accuracy estimates $\Set{a_{n}^{1,\Learner,D}, \dots, a_n^{R,\Learner,D}}$, such that each $a_n$ is distributed according to $a_n\sim\acc_D(h_n)$, where $h_n$ is a classifier trained using training set $S \sim D^n$ and learning algorithm $\Learner$ (see \cref{sec:problem_setup} for definition of $\acc_D(h_n)$).

\paragraph{Benchmark dataset and algorithms.}
In our main analysis, we focus on the popular MNIST dataset \citep[OpenML 554]{lecun1998mnist}, and on multilayer perceptrons (MLP) and gradient-boosted decision trees (GBDT) as representative classifiers. For all classifiers, results are obtained for the respective default \texttt{scikit-learn} \citep{scikit-learn} implementations (e.g. \texttt{MLPClassifier} and \texttt{GradientBoostingClassifier} for MLP and GBDT, respectively).

\paragraph{Delegated learning settings from empirical data.}
    For a given validation set size $m$ where $m=\Size{V}$,
    we instantiate a contract design task $(\Actions,c,\Omega,F)$ as follows:
    (i) the set of actions with the set of training set sizes provided by LCDB ($\Actions=\Set{2^4,2^{4.5},\dots,2^{15}}$);
    (ii) action costs are set to fixed per-unit cost, i.e., $c_n=n$;
    and (iii) the distribution $F$ over outcomes $\Omega$ is associated with a binomial mixture distribtuion, 
    resulting from applying bootstrap sampling to empirical error measurements: 
    \[
    f_n = \frac{1}{R}\sum_{r=1}^R \Binomial(m, a_n^{r,\Learner, D})
    \]
    where $a_n$ are the accuracy estimates defined above. \Cref{fig:contract_types_schematic_comparison} (Left) shows an example of such a setting, obtained by applying the above procedure to data describing learning curves for the MLP algorithm trained on the MNIST benchmark dataset.

\subsection{Implementation details}
\paragraph{Code.}
We implement our code in Python. Our code relies on \texttt{Pyomo} \citep{bynum2021pyomo,hart2011pyomo} and GLPK for solving linear and mixed-integer programs.
\\Code is available at: \DelegatedClassificationCodeURL{}.

\paragraph{Hardware.} All experiments were run on a single laptop, with 16GB of RAM, M1 Pro processor, and with no GPU support.

\paragraph{Runtime.}
A single run consisting the entire pipeline takes roughly 5 minutes. The main bottleneck is running the LP solvers, taking roughly 70\% of runtime to compute.

\subsubsection{Contract design solvers}
To find the optimal contracts, we implement and compare several different solvers:
\begin{itemizecompact}
    \item \textbf{LP solver}: To find min-budget contracts given outcome distributions $\Set{f_n}$ and costs $\Set{c_n}$, we solve the MIN-BUDGET LP (\eqref{eq:min_budget_lp}) directly using \texttt{Pyomo} and GLPK. Given budget $B$, budget-optimal contracts are obtained by iterating through the actions set $\Actions$, invoking the min-budget solver on each target action $n\in\Actions$, enforcing incentive compatibility against all actions $n'$ which satisfy $\alpha_{n'}<\alpha_n$, and taking the action yielding the maximal expected accuracy within budget (see \cref{subsec:budget_optimal_and_min_budget}).
    In our code, the LP solver is implemented within the \texttt{MinBudgetContract} class. 
    Typical running time for a single problem instance: $10^{-1}\mathrm{s}$.

    \item \textbf{\AlgNameShort{} solver}: 
    Implementation of the \algName{} algorithm presented in \Cref{sub:beyond-binary}. 
    The local solver is implemented within the \texttt{MinBudgetSingleBindingConstraintContract} class.
    Typical running time for a single problem instance: $10^{-4}\mathrm{s}$.

    \item \textbf{Hybrid solver}: A meta-solver implementing the `try \AlgNameShort{} first' computational approach. The solver starts by invoking the \AlgNameShort{} solver, and applies the LP solver if the former fails. In our code, the LP solver is implemented within the \texttt{MinBudgetHybridContract} class. Running time depends on whether \AlgNameShort{} is applicable.

    \item \textbf{MIP solver}: To find optimal all-or-nothing contracts, we solve the MIN-BUDGET LP in its statistical formulation (\eqref{eq:min_budget_statistical_lp}) while restricting $\phi_j\in\Set{0,1}$. This turns the LP into a mixed-integer program, which can also be solved using GLPK.
    In our code, the IP solver is implemented within the \texttt{MinBudgetStatisticalContract} class.
    Typical running time for a single problem instance: $10^{-1}\mathrm{s}$.

    \item \textbf{Min-pay LP solver}: To compare between min-budget and min-pay contracts (Appendix~\ref{subsec:relation_to_min_pay}), min-pay contracts were obtained by solving the MIN-PAY LP (\eqref{eq:min_pay_lp}) using \texttt{Pyomo} and GLPK, similar to the full min-budget LP solver.
    In our code, the min-pay is implemented within the \texttt{MinPayContract} class. Running time is similar to the other LP-based methods.

    \item \textbf{Full enumeration solver}: To find all-or-nothing contracts for low-dimensional problems and avoid numerical instabilities, we implement a solver which performs full enumeration of all-or-nothing contracts. By the statistical variable transformation in \cref{def:variable_transformation}, the \ICConstraintName{} constraint in the MIN-BUDGET LP translates to $\sum_{j\in\OutcomeSpace}\left(F_{i,j}-F_{i',j}\right)\phi_j \le (c_i-c_{i'})\beta$, where $i$ is the target action, $i'\in\Actions\setminus\Set{i}$, and the objective is minimize $\beta$ (see \cref{eq:min_budget_statistical_lp}).
    Thus, when $\phi\in\Set{0,1}^\OutcomeSpace$ is fixed, the optimal $\beta$ is given by:
    $$
    \beta^*(\phi)=\min_{i'\in\Actions\setminus\Set{i}}\tfrac{\sum_{j\in\OutcomeSpace}\left(F_{i,j}-F_{i',j}\right)\phi_j}{c_i-c_{i'}}
    $$
    and optimizing $\beta^*$ over $\phi \in \Set{0,1}^\OutcomeSpace$ yields an optimal all-or-nothing contract. The full enumaration solver is implemented within the \texttt{FullEnumerationContract} class. For for enumeration of all-or-nothing contracts, this solver is mostly applicable for small problem instances ($m<20$) due to its exponential complexity. In contrast, for threshold contracts the number of possible $\phi$ configurations is linear in $m$, and therefore enumeration is more efficient.

    \item \textbf{Fixed-budget solver}: To find budget-optimal threhsold contracts given a fixed budget $B$, we implement a simple solver which iterates through all possible threshold contracts $t_{j_0}(j)=B\Indicator{j\ge j_0}$ for all $j_0\in\Outcomes$. The solver then simulates the agent's rational choice (\eqref{eq:agent_rational_choice}) and selects the value of $j_0$ which incentivizes the best action. In case of ties between possible values of $j_0$, they are broken in favor of larger values, as this was shown to lead to greater numerical stability.
    The fixed-budget solver is implemented within the \texttt{FixedBudgetThresholdContract} class.

\end{itemizecompact}

\subsection{Empirical prevalence estimation}
\label{appendix:empirical_prevalence}
Since our theoretical analysis relies on MLRP assumptions, we would like to understand whether the results are applicable to real-world learning curves. Towards this, we run an empirical prevalence evaluation on the MNIST dataset. For each learning algorithm $\Learner$, and training set size $n$, we construct a delegated learning setting with $m=\EmpiricalRobustnessExperimentSelectedM{}$ and collect the following statistics:
\begin{itemizecompact}
\item Structural properties:
\begin{itemizecompact}
\item \textbf{Is MLRP?} (\% MLRP): Check whether all pairs $n_1,n_2\le n$ such that $c_{n_1}<c_{n_2}$ satisfy the monotone likelihood ratio property (see \cref{def:mlrp}).
\end{itemizecompact}
\item Computational properties:
\begin{itemizecompact}
\item \textbf{\AlgNameShort{} successful?} (\% SBA): Check whether the \AlgNameShort{} algorithm was successful on the given instance.
\end{itemizecompact}
\item Functional form of optimal contract:
\begin{itemizecompact}
\item \textbf{Is monotone?} (\% M): Check whether the resulting min-budget contract satisfies $t_j\ge t_{j-1}$ for all $j\in[m]$.
\item \textbf{Is all or nothing?} (\% AoN): Check whether the resulting min-budget contract is an all-or-nothing contract (i.e. whether there exists $j_0\in\OutcomeSpace$,$B>0$ such that $t_j=0$ for all $j<j_0$ and $t_j=B$ otherwise).
\item \textbf{Is threshold?} (\% T): Check whether the resulting min-budget contract is a threshold contract (i.e. whether there exists $j_0\in\OutcomeSpace$,$B>0$ such that $t_j=0$ for all $j<j_0$ and $t_j=B$ otherwise).
\end{itemizecompact}
\item Excess cost:
\begin{itemizecompact}
\item \textbf{Min-budget}: Optimal objective $B_\mathrm{LP}$ of MIN-BUDGET LP without additional restrictions, implemented using the full LP solver.
\item \textbf{All-or-nothing budget}: Optimal objective of MIN-BUDGET LP, when the resulting contract is restricted to all-or-nothing form $t_j\in\Set{0,B_\mathrm{AoN}}$. Implemented using the full enumeration solver.
\item \textbf{Threshold budget}: Optimal objective of MIN-BUDGET LP, when the resulting contract is restricted to threshold form $t_j=B_\mathrm{Thr}\Indicator{j\ge j_0}$. Implemented using the full enumeration solver.
\item The excess cost columns in \Cref{table:empirical_prevalence} indicate the relative excess cost incurred by restricting the min-budget optimization space to simple contracts ($\frac{B_\Set{\mathrm{AoN},\mathrm{Thr}}-B_\mathrm{LP}}{B_\mathrm{LP}}$).
As all-or-nothing contracts are a superset of threshold contracts, we expect this excess cost to be smaller than the one associated with threshold contracts. The averaging in \cref{table:empirical_prevalence} is performed on problem instances where both all-or-nothing and threshold contracts are feasible.
\end{itemizecompact}
\end{itemizecompact}

\begin{table}
  \caption{
  Empirical robustness estimation on the MNIST dataset. Each row corresponds to a learning algorithm, and columns are specified in \cref{appendix:empirical_prevalence}. Results are averaged across all implementable actions.
  }
  \label{table:empirical_prevalence}
  \centering
\begin{tabular}{lccccccc}
\toprule
 &
Structure &
Compute &
\multicolumn{3}{c}{Functional form of opt. contract} &
\multicolumn{2}{c}{Excess cost}
\\
\cmidrule(r){2-2} 
\cmidrule(r){3-3} 
\cmidrule(r){4-6}
\cmidrule(r){7-8}
 & \% MLRP & \% SBA & \% M & \% AoN & \% T &  AoN & T \\
\midrule
\textbf{MLP} & 100\% & 94.4\% & 100\% & 94.4\% & 94.4\% & 0.04\% & 0.04\% \\
\textbf{GBDT} & 100\% & 76.5\% & 100\% & 82.4\% & 82.4\% & 0.71\% & 0.71\% \\
\textbf{Logistic} & 93.8\% & 81.2\% & 93.8\% & 93.8\% & 93.8\% & 0.00\% & 0.00\% \\
\textbf{Perceptron} & 68.8\% & 75\% & 93.8\% & 93.8\% & 93.8\% & 0.00\% & 0.00\% \\
\textbf{Linear SVM} & 71.4\% & 78.6\% & 85.7\% & 85.7\% & 85.7\% & 0.95\% & 1.45\% \\
\textbf{Poly SVM} & 100\% & 94.4\% & 100\% & 94.4\% & 94.4\% & 0.24\% & 0.24\% \\
\textbf{RBF SVM} & 100\% & 94.4\% & 100\% & 94.4\% & 94.4\% & 0.04\% & 0.04\% \\
\textbf{KNN} & 100\% & 100\% & 100\% & 100\% & 100\% & 0.00\% & 0.00\% \\
\midrule
\textbf{Overall} & 92.6\% & 87.4\% & 97\% & 92.6\% & 92.6\% & 0.22\% & 0.26\% \\
\bottomrule
\end{tabular}
\end{table}

Averaging over all implementable actions, we obtain the results in \Cref{table:empirical_prevalence}. The results show that threshold contracts are relatively common in this dataset (more than 85\%), and that excess cost of simple contracts is relatively low (around 1\%), 
indicating that threshold contracts may provide good approximation to the optimal min-budget contracts in some cases. All simple optimal contracts in our dataset had a threshold functional form, and some simple optimal contracts were not recovered by \AlgNameShort{}, indicating that a tighter characterization of simple contracts may be possible.

\Cref{fig:qualitative_prevalence_analysis} provides qualitative interpretation of the analysis for a selected subset of learners. While the survival functions are not perfectly concave (and thus don't satisfy \ConcavityAssumption{}), the \AlgNameShort{} algorithm successfully terminates in three cases (MLP, Logistic, KNN). The binding action is $a_{N-1}$ for two of the classifiers (MLP, KNN), similar to the condition implied by \Cref{thm:sufficiency}. Note that \AlgNameShort{} also terminated successfully on Logistic Regression data, despite the learning curve having a distinctive non-concave shape. In the case of GBDT, the survival function is not concave, and the optimal contract does not assume a threshold form.

\begin{figure}
    \centering
    \includegraphics[width=\columnwidth]{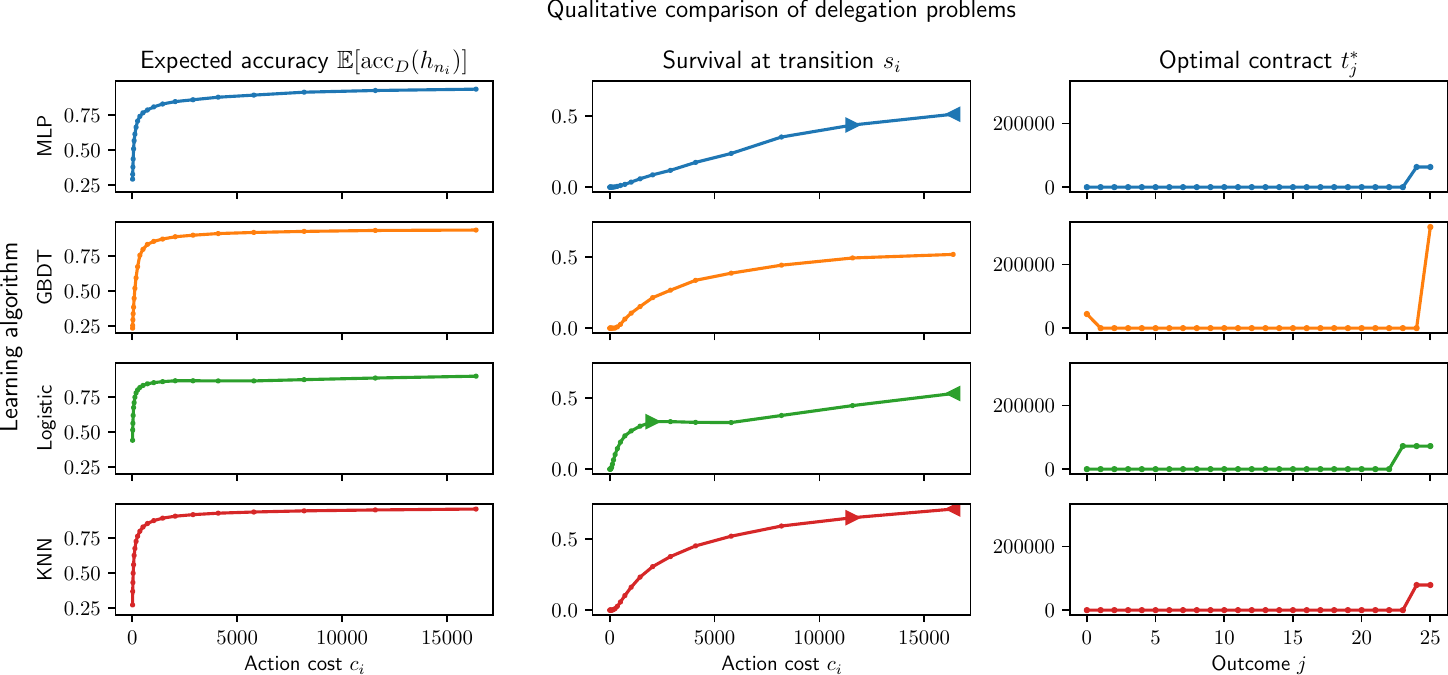}
    \caption{
    Qualitative comparison of optimal contract computation on LCDB MNIST data ($n^*=\EmpiricalRobustnessExperimentQualitativeComparisonTargetN{}$, $m=\EmpiricalRobustnessExperimentQualitativeComparisonM{}$). Each row represents a learning algorithm. \textbf{(Left column)} Expected accuracy curves $\alpha_n$. \textbf{(Center column)} Survival function at transition $s_i$, whenever \AlgNameShort{} is successful, the binding actions are marked with triangles. \textbf{(Right column)} Optimal contract $t^*_j$.
    }
    \label{fig:qualitative_prevalence_analysis}
\end{figure}

\end{document}